%% file: main.tex
\documentclass[letterpaper,11pt]{article} % Anonymized submission
\usepackage{natbib}

\title{Near-Optimal Cryptographic Hardness of Learning With Homogeneous Halfspaces Under Gaussian Marginals}
\author{Jizhou Huang\\Meta\\\texttt{huang.jizhou@wustl.edu}
\and 
Brendan Juba\\Washington Universtiy in St.\ Louis\\
\texttt{bjuba@wustl.edu}}
\usepackage{fullpage}
\input{project-commands}

\date{}

\newtheorem{theorem}{Theorem}
\newtheorem{definition}[theorem]{Definition}
\newtheorem{lemma}[theorem]{Lemma}
\newtheorem{corollary}[theorem]{Corollary}
\newtheorem{proposition}[theorem]{Proposition}
\newtheorem{fact}[theorem]{Fact}
\newtheorem{problem}[theorem]{Problem}
\newtheorem{assumption}[theorem]{Assumption}

\input{figures}

\begin{document}

    \maketitle
    
    \begin{abstract}%
        We study three problems that involve identifying homogeneous halfspaces under Gaussian distributions: agnostic learning, one-sided reliable learning, and fairness auditing. In each of these problems, we are given labeled examples $(\rvector{x}, \rscalar{y})$ drawn from an unknown distribution on $\real|d|\times\binarydomain$, whose marginal distribution on $\rvector{x}$ is standard Gaussian and on $\rscalar{y}$ is arbitrary. The goal of each problem is to output a homogeneous halfspace that approaches the best-fitting homogeneous halfspace in terms of its corresponding loss measure. We prove near-optimal computational hardness results for these problems under the widely believed hardness assumption of the Learning With Errors (LWE) problem. Prior hardness results for these problems were mostly established for general halfspaces; our findings extend some of these hardness results to homogeneous halfspaces. Remarkably, our lower bound strictly generalizes over prior works and narrows the gap between the upper and lower bounds for agnostically learning homogeneous halfspaces under Gaussian marginals.
    \end{abstract}

 \newpage
 
    \section{Introduction}
    \label{sec:introduction}
        Halfspaces, also known as linear threshold functions, constitute one of the most basic and well-studied concept classes in supervised learning. A halfspace is a function $h: \real|d|\to \{\pm 1\}$ of the form $h(\rvector{x}) = \sgn(\innerprod{\cvector{u}}{\rvector{x}} - t)$ for some unit weight vector $\cvector{u}\in\sphere|d - 1|$ and threshold $t > 0$. When $t = 0$, the halfspace is called \emph{homogeneous}. Geometrically, halfspaces represents a rather simple concept---they partition the input space by a hyperplane. Despite the simplicity, identifying halfspaces in the presence of \emph{adversarial} label noise still suffers strong computational barriers in multiple problems even when the underlying distribution of $\rvector{x}$ is Gaussian distributions. 
        
        In this work, we study the computational complexity of an even simpler variant of the question---learning homogeneous halfspaces under Gaussian $\rvector{x}$-marginals under adversarial noise and related problems, specifically: \emph{positive-reliable learning} and \emph{fairness auditing}.

        \subsection{Agnostically Learning Halfspaces}
        \label{sec:agnostically-learning-halfspaces}
            In machine learning, one of the most-studied problems is learning halfspaces. Informally, the learner is given a set of labeled examples, $S = \lbr{(\rvector{x}(1), \rscalar{y}(1)), \ldots, (\rvector{x}(m), \rscalar{y}(m))}$, sampled i.i.d.\ from an unknown distribution $\distr$ on $\real|d|\times\binarydomain$, and the goal is to find a halfspace that correctly labels as many examples from $\distr$ as possible. Many algorithms, such as Perceptron~\citep{rosenblatt1958perceptron} and Support Vector Machines~\citep{vapnik1998statistical}, were developed for this problem. However, understanding the computational complexity of learning halfspaces remains a central challenge in learning theory.
            
            In the classical PAC-learning model \citep{valiant1984theory}, the sample space of the underlying distribution $\distr$ of $(\rvector{x}, \rscalar{y})$ is assumed to be linearly separable, and hence, a halfspace that correctly labels all examples from $\distr$ exists. It turns out that approximately finding such a halfspace is computationally easy~\citep{blumer1989learnability}. However, real-world data is often too complex to be captured by such a simple concept. A more realistic and interesting problem setting is \emph{agnostic} learning \citep{haussler1992decision,kearns1994toward}, where no assumption is made on the underlying distribution $\distr$, and the goal is to find a halfspace that minimizes the \emph{classification error}. Unfortunately, even \emph{weak} learning of this seemingly simple concept class becomes intractable under the agnostic setting (in the absence of assumptions on the $\rvector{x}$-marginal of $\distr$)~\citep{shalev2011learning}. Due to this computational barrier, significant effort has been invested in learning halfspaces in \emph{distribution-specific} settings, which may lend additional leverage to enable efficient algorithms that are robust to adversarial label noise. In this work, we focus on the computational challenge of agnostically learning homogeneous halfspaces under Gaussian $\rvector{x}$-marginals.

            \begin{problem}[Agnostic Learning]
            \label{prob:agnostic-learning}
                 Let $\distr$ of $(\rvector{x}, \rscalar{y})$ be any distribution supported on $\real|d|\times\binarydomain$ with \textbf{standard normal} $\rvector{x}$-marginals, and $\opt = \inf_{\cvector{u}\in\sphere|d-1|}\prob<\distr>{\rscalar{y} \neq \funcsbr{\sgn}[\innerprod{\cvector{u}}{\rvector{x}}]}$. For parameters $\alpha \geq 1$, $\epsilon,\delta\in[0, 1]$, the $(\alpha, \epsilon)$-approximate learning task for homogeneous halfspaces is, given $m$ i.i.d.\ examples from $\distr$, output a $\cvector{v}\in\sphere|d-1|$ such that $\prob<\distr>{\rscalar{y} \neq \funcsbr{\sgn}[\innerprod{\cvector{v}}{\rvector{x}}]}\leq \alpha\opt + \epsilon$ with probability $1 - \delta$. We call it a $\alpha$-approximation if $\epsilon = 0$.
            \end{problem}

            \noindent
            \textbf{Prior Work on \cref{prob:agnostic-learning}: }On the positive side, many efficient algorithms have been proposed for agnostically learning homogeneous halfspaces under various distribution assumptions~\citep{kalai2008agnostically,klivans2009learning,daniely2015ptas,awasthi2017power,diakonikolas2020non,diakonikolas2021agnostic,frei2021agnostic}. Within $\poly[d]$ running time, the best known multiplicative upper bound for homogeneous halfspaces is $\bigO{\opt}$~\citep{diakonikolas2020non} under \emph{log-concave} $\rvector{x}$-marginals, which is also extended to general halfspaces under standard normal $\rvector{x}$-marginals by~\citet{diakonikolas2018learning,diakonikolas2022learning}. However, for additive upper bound, the best known algorithms can only provide $(1, \bigO{1})$-approximation guarantee in polynomial running time, or $(1, \bigO{1/\sqrt{\log d})}$-approximation in quasi-polynomial running time even under standard normal $\rvector{x}$-marginals~\citep{diakonikolas2010bounded,daniely2015ptas}. On the negative side, \citet{diakonikolas2020non} showed that no $\funcsbr{o}[\sqrt{\funcsbr{\log}[1/\opt]}]$-approximation to the optimal homogeneous halfspace can be obtained by minimizing a \emph{convex non-decreasing} surrogate loss, even with standard normal $\rvector{x}$-marginals. More recently, \citet{diakonikolas2023near} found that there exists no polynomial-time $(1, 1/\cscalar{\log}|1/2 + c|d)$-approximation algorithm for \emph{general} (\underline{not} homogeneous) halfspaces for any $c > 0$, even under Gaussian $\rvector{x}$-marginals, given the hardness of a classical cryptographic problem. 

            Based on these findings, it is natural to ask if we can close the gap and obtain tight bounds for \cref{prob:agnostic-learning} in certain settings. We accomplish this by proving that no polynomial-time $(1, 1/\cscalar{\log}|1/2 + c|d)$-approximation algorithm solves \cref{prob:agnostic-learning} (c.f. \cref{thm:hardness-of-learning-homogeneous-halfspace-under-gaussian}), assuming the widely believed sub-exponential hardness of the \emph{Learning With Errors} (LWE) problem (c.f. \definitionref{def:learning-with-errors}), admitting a strict generalization of the cryptographic lower bound in \citet{diakonikolas2023near}.

        \subsection{One-sided Reliable Learning}
        \label{sec:one-sided-reliable-learning}
            Beyond classical agnostic learning, several other learning paradigms involving halfspaces have emerged from practical applications. The problem of \emph{positive-reliable learning} \citep{kalai2012reliable}, originally known as \emph{heuristic} learning \citep{pitt1988computational} (and equivalent to ``abductive'' learning \citep{juba2016learning,durgin2019hardness}), seeks to identify a subset of the population where every individual belongs to the positive class.\footnote{\cite{durgin2019hardness} establish the equivalence of these various formulations.} \cite{kanade2014distribution} and Juba observed that subpopulations defined by $k$-DNFs can be efficiently learned in realizable cases without any distributional assumptions (\cite{bshouty2005maximizing}, meanwhile, attribute this to \cite{valiant1984theory}). When the subset membership is defined by halfspaces, one-sided learning resembles halfspaces learning in the sense that learning halfspaces aims to optimize the objective loss on both sides of the defining hyperplane, while this problem only optimize one-sided loss (c.f. \cref{prob:positive-reliable-learning}). Unfortunately, \citet{bshouty2005maximizing} showed that, in the agnostic setting, if the collection of sub-populations is represented by some concept class with the ability to express \emph{conjunctions} (a subclass of halfspaces), such a sub-population cannot be efficiently learned without distributional assumptions. Therefore, we focus on one-sided learning of halfspace subsets with distributional assumption. In fact, we consider almost the simplest variant of the problem, which is learning the positive side of homogeneous halfspaces under Gaussian $\rvector{x}$-marginals.

            \begin{problem}[Positive-Reliable Learning]
            \label{prob:positive-reliable-learning}
                 Let $\distr$ of $(\rvector{x}, \rscalar{y})$ be any distribution on $\real|d|\times\binarydomain$ with \textbf{standard normal} $\rvector{x}$-marginals, and $1 - \opt = \sup_{\cvector{u}\in\sphere|d-1|}\prob<\distr>{\rscalar{y} = 1\cond \innerprod{\cvector{u}}{\rvector{x}} > 0}$. For parameters $\alpha \geq 1$ and $\epsilon,\delta\in[0, 1]$, the $(\alpha, \epsilon)$-approximate positive-reliable learning task for homogeneous halfspaces is, given $m$ i.i.d.\ examples from $\distr$, to output a $\cvector{v}\in\sphere|d-1|$ such that $\prob<\distr>{\rscalar{y} = 1\cond\innerprod{\cvector{v}}{\rvector{x}} > 0}\geq 1 - \alpha\opt - \epsilon$ with probability $1 - \delta$. We call it a $\alpha$-approximation if $\epsilon = 0$.
            \end{problem}

            \noindent
            \textbf{Prior Work on \cref{prob:positive-reliable-learning}: }Beyond the aforementioned works that noted elimination learns $k$-DNFs, \citet{huang2025distributionspecific,huang2025personalized} presented approximation algorithms for positive-reliable learning of homogeneous halfspaces. In particular, \citet{huang2025distributionspecific} gave a $\bigO*{\opt|-1/2|}$-approximation algorithm under Gaussian marginals, and \citet{huang2025personalized} gave a $\bigO{\opt|-3/4|}$-approximation algorithm under a more general \emph{well-behaved} marginal distributions. Meanwhile, \citet{huang2025distributionspecific} proved that there does not exist any polynomial-time $(1, 1/\cscalar{\log}|1/2 + c|d)$-approximation algorithm for \cref{prob:positive-reliable-learning} on general halfspaces with the additional requirement that $1/2 \leq\mu_1\leq\prob{\innerprod{\cvector{u}}{\rvector{x}} > t}\leq \mu_2, \forall\cvector{u}\in\sphere|d-1|$, for any $c > 0$, assuming the intractability of the LWE problem. 
            
            Obviously, the negative result of \citet{huang2025distributionspecific} does not meet our objectives: it is for the more general class of halfspaces with thresholds and only applies to such halfspaces subject to the population size constraints, $\mu_1\leq\prob<\distr<\rvector{x}>>{\innerprod{\cvector{u}}{\rvector{x}} > t}\leq \mu_2, \forall\cvector{u}\in\sphere|d-1|$. In this work, we improve their result by showing non-existence of polynomial-time $(1, 1/\cscalar{\log}|1/2 + c|d)$-approximation algorithms for \cref{prob:positive-reliable-learning} (c.f. \cref{thm:hardness-of-positive-reliable-learning-over-homogeneous-halfspace}), and indeed obtain the same cryptographic hardness over general halfspaces without the population constraints.

        \subsection{Subgroup Fairness Auditing}
        \label{sec:subgroup-fairness-auditing}
            Another important halfspace-related problem arises in the context of \emph{algorithmic fairness auditing}. The goal in the problem is to verify if any subgroup of a given collection of subgroups is discriminated against by the underlying decision rule in terms of certain fairness criteria. One of the most general definition of fairness is Statistical Parity Subgroup Fairness (SPSF), which was introduced by \citet{kearns2018preventing}. While \citet{kearns2018preventing} was considering a broader class of subgroups, including conjunctions and halfspaces, we focus on the class of subgroups defined on homogeneous halfspaces. 
            \begin{definition}[SPSF For Homogeneous Halfspace]
            \label{def:statistical-parity-subgroup-fairness}
                Let $\distr$ of $(\rvector{x}, \rscalar{y})$ be any distribution supported on $\real|d|\times\binarydomain$, and $S = \lbr{\rvector{x}\in\real|d|\cond \innerprod{\cvector{u}}{\rvector{x}} > 0}$ be a subgroup defined on a homogeneous halfspace with normal vector $\cvector{u}\in\sphere|d-1|$. We define the unfairness of the subgroup $S$ w.r.t.\ $\distr$ as
                \begin{equation*}
                    \funcsbr{u}<\distr>[\cvector{u}] = \abs{\prob<\rvector{x}\sim\distr<\rvector{x}>>{\innerprod{\cvector{u}}{\rvector{x}} > 0}\prob<\rscalar{y}\sim\distr<\rscalar{y}>>{\rscalar{y} = 1} - \prob<(\rvector{x}, \rscalar{y})\sim\distr>{\rscalar{y} = 1\cap \innerprod{\cvector{u}}{\rvector{x}} > 0}}.
                \end{equation*}
            \end{definition}
            
            \begin{problem}[SPSF Auditing for Homogeneous Halfspaces]
            \label{prob:fairness-auditing}
                 Let $\distr$ of $(\rvector{x}, \rscalar{y})$ be any distribution supported on $\real|d|\times\binarydomain$ with \textbf{standard normal} $\rvector{x}$-marginals, and $\opt = \sup_{\cvector{u}\in\sphere|d-1|}\funcsbr{u}<\distr>[\cvector{u}]$. For parameters $\alpha, \epsilon,\delta\in[0, 1]$, the $(\alpha, \epsilon)$-approximate auditing task for homogeneous halfspace subgroups is, given $m$ i.i.d.\ examples from $\distr$, output a $\cvector{v}\in\sphere|d-1|$ such that $\funcsbr{u}<\distr>[\cvector{v}]\geq \alpha\opt - \epsilon$ with probability $1 - \delta$. We call it a $\alpha$-approximation if $\epsilon = 0$.
            \end{problem}
            \noindent
            \textbf{Prior Work on \cref{prob:fairness-auditing}: }Remarkably, \citet{kearns2018preventing} established the sub-exponential hardness of SPSF auditing on any simple concept classes in the distribution-free setting. They gave an efficient reduction from weak agnostic learning of such a concept classes to SPSF auditing on the same class of subgroups. Despite the hardness, auditing remains an important problem in computational fairness learning, as multiple works have observed that learning fair classifiers is computationally equivalent to auditing for the given subgroups~\citep{hebert2018multicalibration,kearns2018preventing,kim2018fairness}. Contrary to the stagnation of algorithmic auditing over subgroup classes of potentially exponential size, much progress has been made toward fairness learning/auditing for small subgroup classes that are affordable to enumerate~\citep{dwork2012fairness,kusner2017,agarwal2018reductions,bechavod2020metric,bechavod2023individually,wang2023scalable}. Recently, \citet{hsu2024distribution} proved a surprisingly strong computational barrier of SPSF auditing for general halfspaces. They proved that neither $1/\poly[d]$-approximation nor $(1, 1/\cscalar{\log}|1/2 + c|d)$-approximation algorithms running in polynomial time exist for \cref{prob:fairness-auditing} on general halfspaces subgroups for any $c > 0$, assuming the hardness of the LWE problem.

            In this paper, we extend the cryptographic hardness results of \citet{hsu2024distribution} from subgroups defined on general halfspaces to homogeneous halfspaces (c.f. \cref{thm:hardness-of-auditing-halfspace-under-gaussian-in-additive-form,thm:hardness-of-auditing-halfspace-under-gaussian-in-multiplicative-form}).

        \subsection{Our Results}
        \label{sec:our-results}
            We first introduce an informal definition of the LWE problem. In the LWE problem, we are given $m$ examples, $(\rvector{x}(1), \rscalar{y}(1)), \ldots, (\rvector{x}(m), \rscalar{y}(m))$, the goal is to distinguish the following two cases:
            \begin{itemize}
                \item Each $\rvector{x}(i)$ is drawn u.a.r. from $\integer<q>|d|$, and each $\rscalar{y}(i)$ is generated as $\rscalar{y}(i) = \innerprod{\cvector{s}}{\rvector{x}(i)} + \rscalar{z}(i)$ by some hidden secret vector $\cvector{s}\in\integer<q>|d|$ and some independent discrete Gaussian noise $\rscalar{z}(i)\in\integer<q>|d|$.
                \item Each $\rvector{x}(i)$ is drawn u.a.r. from $\integer<q>|d|$, and each $\rscalar{y}(i)$ is sampled independently and u.a.r. from $\integer<q>$.
            \end{itemize}
            
            Under the standard sub-exponential hardness assumption of LWE, we establish strong inapproximability for three problems: agnostically learning homogeneous halfspaces, positive-reliable learning of homogeneous halfspaces, and fairness auditing for homogeneous halfspace subgroups. We give their informal statement here and will formally state them in \cref{sec:hardness-of-identifying-special-halfspaces-under-gaussian}.

            Notice that \cref{thm:hardness-of-learning-homogeneous-halfspace-under-gaussian} (resp. \cref{thm:hardness-of-positive-reliable-learning-over-homogeneous-halfspace,thm:hardness-of-auditing-halfspace-under-gaussian-in-additive-form}) is an immediate implication of \corollaryref{cor:hardness-of-learning-homogeneous-halfspace-under-gaussian} (resp. \cref{cor:hardness-of-positive-reliable-learning-over-homogeneous-halfspace,cor:hardness-of-auditing-halfspace-under-gaussian-in-additive-form}), where we just replace $\epsilon$ by its upper bound $\log^\gamma d$. Since the upper bound holds for any $\gamma > 1/2$, we have that $\log^\gamma d = \cscalar{\log}|1/2 + c|d$ for any $c > 0$. 

            \begin{theorem}[Hardness of Agnostically Learning]
            \label{thm:hardness-of-learning-homogeneous-halfspace-under-gaussian}
                If LWE cannot be solved in $2^{\cscalar{d}|1 - \bigM{1}|}$ time, then, for any $c > 0$, no $\poly[d]$-time $(1, 1/\cscalar{\log}|1/2 + c|d)$-approximation algorithm for \cref{prob:agnostic-learning} exists.
            \end{theorem}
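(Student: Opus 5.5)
The proof goes through the standard pipeline of \citet{diakonikolas2023near}: LWE $\to$ continuous LWE (CLWE) $\to$ a hard \emph{distinguishing} problem under Gaussian marginals $\to$ hardness of agnostic learning. The only new ingredient we need is a label construction tailored to \emph{homogeneous} halfspaces.

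\textbf{Step 1 (CLWE).} We invoke the reduction from LWE to CLWE, as in \citet{diakonikolas2023near}. Assuming LWE is hard in time $2^{d^{1-o(1)}}$, the following two distributions are indistinguishable in $\poly[d]$ time:
\begin{itemize}
    \item[(a)] the null case, $\rvector{x}\sim\mathcal N(0,I_d)$ with $\rscalar{z}$ uniform on $[0,1)$;
    \item[(b)] the planted case, $\rvector{x}\sim\mathcal N(0,I_d)$ with $\rscalar{z}=\gamma\innerprod{\cvector{w}}{\rvector{x}}+\xi \bmod 1$ for a hidden direction $\cvector{w}$.
\end{itemize}
Here $\gamma$ can be taken as large as roughly $\log^{1/2+c'} d$ and the noise $\xi$ is small. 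Equivalently, conditional on $\rscalar{z}$, the projection $\innerprod{\cvector{w}}{\rvector{x}}$ is concentrated near a periodic comb of spacing $1/\gamma$.

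\textbf{Step 2 (labels).} We build labels $\rscalar{y}=f(\rscalar{z},\cdot)$ so that, in case (b), some homogeneous halfspace $\sgn(\innerprod{\cvector{w}}{\rvector{x}})$ has error below $1/2-\epsilon$ for $\epsilon\approx 1/\log^{1/2+c} d$, while in case (a) every halfspace has error $1/2$. The issue compared to the general-halfspace case is that thresholds are unavailable. I would therefore restrict the comb to the two points nearest the origin. Concretely, $\rscalar{y}=+1$ exactly when the fractional part $\rscalar{z}$ indicates that the nearest comb point to $\innerprod{\cvector{w}}{\rvector{x}}$ is positive, with a random label otherwise. For Gaussian mass near $0$ on the scale $1/\gamma$, the sign of $\innerprod{\cvector{w}}{\rvector{x}}$ is then correlated with $\rscalar{y}$ with advantage $\Theta(1/\gamma)$.

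A cleaner alternative is a one-dimensional odd label function $g(t)=\sgn(t)\cdot h(\gamma t \bmod 1)$ reconstructed from $\rscalar{z}$. The point of oddness is that it gives correlation with $\sgn(t)$ directly. In case (a), $\rscalar{y}$ is independent of $\rvector{x}$ with $\mathbb P(\rscalar{y}=1)=1/2$, so $\opt=1/2$. In case (b), we need $\opt\le 1/2-\Omega(1/\gamma)$, and we must check that the additive advantage is about $1/\gamma = 1/\log^{1/2+c}d$.

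\textbf{Step 3 (distinguisher).} Given a $(1,\epsilon)$-learner with $\epsilon$ smaller than the gap, run it and then estimate the error of its output hypothesis on fresh samples. In case (b) that error is below $1/2-\Omega(\epsilon)$; in case (a) every hypothesis has error exactly $1/2$. This distinguishes (a) from (b) in polynomial time, contradicting Step 1. The parameter bookkeeping is as follows: the dimension blow-up in the LWE$\to$CLWE reduction is polynomial, so $\gamma=\log^{1/2+c}d$ is admissible exactly under $2^{d^{1-o(1)}}$ hardness.

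\textbf{Main obstacle.} The hard step is Step 2. We must design labels depending only on the sample such that homogeneity still captures a $\Theta(1/\gamma)$ advantage. The difficulty is that labels must be generated from $(\rvector{x},\rscalar{z})$ \emph{without} knowing $\cvector{w}$, yet be odd in $\innerprod{\cvector{w}}{\rvector{x}}$. I would first try $\rscalar{y}=\sgn(\cos(2\pi \rscalar{z})-\text{const})$-type labels. I would then compute $\mathbb E[\rscalar{y}\,\sgn(\innerprod{\cvector{w}}{\rvector{x}})]$ via the Fourier expansion of the Gaussian-smoothed comb, aiming for a leading term of order $1/\gamma$. If symmetry kills that term, I would fall back to randomizing the sign of the label using an auxiliary sign of $\rvector{x}$, namely by replacing $\rvector{x}$ with $\pm\rvector{x}$ consistently.
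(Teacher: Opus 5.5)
Your outer pipeline (Steps 1 and 3) matches the paper's: take the cLWE instance of the hardness lemma with period $1/\gamma$, map its labels to $\pm1$, run the learner, and check its output error with Hoeffding. However, Step 2, which you correctly flag as the main obstacle, is where essentially the whole proof lives, and it is left open. None of your concrete candidates works:
\begin{itemize}
\item The label $g(t)=\sgn(t)\,h(\gamma t \bmod 1)$ cannot be computed without knowing $w$.
\item Any label of the form $F(\cos 2\pi z)$ is provably useless. The map $(x,\xi)\mapsto(-x,-\xi)$ preserves the planted distribution, fixes $\cos(2\pi z)$, and flips $\sgn\langle u,x\rangle$ for every $u$. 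So in case (b) every homogeneous halfspace again has agreement exactly $1/2$, and the reduction distinguishes nothing.
\item The ``$\pm x$'' fallback is not specified enough to check.
\end{itemize}

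The missing idea is to use an \emph{odd} function of $z$ rather than an even one: $y=\sgn(\sin 2\pi z)$, i.e.\ $y=+1$ iff $z\in[0,1/2)$. Read charitably, this is what your ``nearest comb point is positive'' heuristic amounts to, and no randomization is needed. With this label, the same symmetry gives $\Pr[y=-1,\langle w,x\rangle<0]=\Pr[y=1,\langle w,x\rangle\ge 0]$. Hence the agreement of $\sgn\langle w,x\rangle$ equals $2\Pr[y=1,\langle w,x\rangle\ge 0]$, while in the null case $y$ is an unbiased coin independent of $x$.

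You still owe the quantitative bound $2\Pr[y=1,\langle w,x\rangle\ge 0]\ge 1/2+\Omega(1/\gamma)$, which is the technical core of the paper. The argument goes as follows:
\begin{enumerate}
\item With $t=\langle w,x\rangle$, expand the label as the square wave $\frac{4}{\pi}\sum_{k\ge 1}\sin\bigl((2k-1)2\pi(\gamma t+\xi)\bigr)/(2k-1)$.
\item Average out the Gaussian noise. This damps the $k$-th harmonic by $e^{-2\pi^2(2k-1)^2\sigma_z^2}$, where $\sigma_z$ is the noise standard deviation in $z$-units.
\item Lower-bound the one-sided integrals $\int_0^\infty \sin(\omega t)\phi(t)\,dt \gtrsim 1/\omega$. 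All of them are positive, so you may keep only the first harmonic.
\end{enumerate}
This yields advantage $\Omega(1/\gamma)$ only when $\sigma_z=O(1)$, i.e.\ the noise is $O(1/\gamma)$ on the scale of $\langle w,x\rangle$. That is why the paper instantiates the hardness lemma with $\sigma=\eta^{-\kappa}$ for a suitably large constant $\kappa$. Without this label construction and this computation, the claim $\mathrm{opt}\le 1/2-\Omega(1/\gamma)$ in case (b) is unsupported.
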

            
            We would like to highlight that our result in \cref{thm:hardness-of-learning-homogeneous-halfspace-under-gaussian} is a strict generalization of that in \citet{diakonikolas2023near} because their negative result does not necessarily hold for homogeneous halfspaces. While not tightly matching the best known upper bound, our lower bound $\opt + 1/\cscalar{\log}|1/2 + c|d$ is a nontrivial step forward on closing the gap in this research line.

            \begin{theorem}[Hardness of Positive-Reliable Learning]
            \label{thm:hardness-of-positive-reliable-learning-over-homogeneous-halfspace}
                If LWE cannot be solved in $2^{\cscalar{d}|1 - \bigM{1}|}$ time, then, for any $c > 0$, no $\poly[d]$-time $(1, 1/\cscalar{\log}|1/2 + c|d)$-approximation algorithm for \cref{prob:positive-reliable-learning} exists.
            \end{theorem}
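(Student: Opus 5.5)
The plan is to derive \cref{thm:hardness-of-positive-reliable-learning-over-homogeneous-halfspace} from the hard distribution family behind \cref{thm:hardness-of-learning-homogeneous-halfspace-under-gaussian}. That family reduces LWE (via continuous LWE) to distinguishing two distributions of $(\rvector{x}, \rscalar{y})$ with standard Gaussian $\rvector{x}$-marginals. In the null case, $\rscalar{y}$ is independent of $\rvector{x}$ and has the same marginal as in the planted case. In the planted case, $\rscalar{y}=\funcsbr{f}[\innerprod{\cvector{w}}{\rvector{x}}]$ for a hidden direction $\cvector{w}$. Here $f$ is chosen so that low-degree moments match those of the null case; this is the usual Gaussian pancakes construction with period $\approx 1/\sqrt{\log d}$. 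In addition, $f$ is \emph{odd-symmetric} in the sense needed for homogeneity, i.e.\ the relevant label statistics are computed on the halfspace $\innerprod{\cvector{w}}{\rvector{x}}>0$ and not on a threshold.

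For positive-reliable learning I would modify $f$ so that it is one-sided. I take $f(t)=1$ for $t>0$ on the pancake layers and on a carefully balanced fraction elsewhere. The goal is that in the planted case $\prob{\rscalar{y}=1\cond\innerprod{\cvector{w}}{\rvector{x}}>0}\ge 1-\opt_0$, while in the null case every $\cvector{v}$ achieves only $\prob{\rscalar{y}=1}=1-\opt_0-\Omega(\epsilon)$ with $\epsilon\approx \log^{-\gamma}d$. Concretely, I would define the labels by $\rscalar{y}=1$ if $\innerprod{\cvector{w}}{\rvector{x}}$ lies within distance $\epsilon$ of the lattice $\frac{1}{\sqrt{\log d}}\integer$ (restricted to the positive side), and otherwise $\rscalar{y}$ is drawn with a bias tuned so that the overall marginal of $\rscalar{y}$ is the same in both cases. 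The hardness of distinguishing would then follow from the same CLWE-to-LWE reduction used for \corollaryref{cor:hardness-of-learning-homogeneous-halfspace-under-gaussian}. The only change is the post-processing map from the pancake coordinate to the label, and any such map that does not require knowing $\cvector{w}$ is allowed.

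The reduction itself goes as follows. Given a putative $(1,\epsilon/3)$-approximation algorithm $A$, I run it on samples from the unknown distribution and obtain $\cvector{v}$. I then estimate $\prob{\rscalar{y}=1\cond\innerprod{\cvector{v}}{\rvector{x}}>0}$ on fresh samples; since $\prob{\innerprod{\cvector{v}}{\rvector{x}}>0}=1/2$, $\poly(1/\epsilon)$ samples suffice. I accept "planted" if the estimate exceeds $1-\opt_0-\epsilon/2$. In the null case no $\cvector{v}$ can pass this test, because $\rscalar{y}$ is independent of $\rvector{x}$. In the planted case $A$ must pass it. A $\poly(d)$-time distinguisher would then give, via the sub-exponential parameter tradeoff ($d$ traded against the LWE dimension $n$ with $\log d\approx n^{1-o(1)}$ scaling), a $2^{n^{1-\Omega(1)}}$ algorithm for LWE, which contradicts the hypothesis.

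The main obstacle is the gap calculation in the planted case. I need the conditional accuracy on $\{\innerprod{\cvector{w}}{\rvector{x}}>0\}$ to exceed the null marginal by $\Omega(\log^{-\gamma} d)$, while the marginal distribution of $\rscalar{y}$ is kept identical in the two cases, so that the marginal itself does not distinguish them. At the same time the labeling on the negative side must absorb the excess of negatives. I would first try a purely sign-based label: $\rscalar{y}=1$ with probability $p_+$ on the positive side of the pancake coordinate and $p_-$ on the negative side. This fails because the distribution is then not hidden (it is a plain halfspace). For that reason I would combine the sign with the periodic pancake structure, so that the dependence on $\innerprod{\cvector{w}}{\rvector{x}}$ is only through the periodic noise, which CLWE hides. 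I would then verify by a Gaussian sum estimate, using the smoothing parameter, that the positive-side advantage survives at order $\epsilon$.
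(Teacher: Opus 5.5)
Your outer reduction (run the learner, estimate $\Pr[y=1\mid\langle\mathbf{v},\mathbf{x}\rangle>0]$ on fresh samples, threshold) is fine and matches the paper's corollary. The gap is in the hard instance itself, at exactly the step you flag as the main obstacle.

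\textbf{The label you propose cannot be generated.} A CLWE-based reduction never sees the ``pancake coordinate'' $\langle\mathbf{s},\mathbf{x}\rangle$. It sees only $\mathbf{x}$ and the residue $y'=\mathrm{mod}_T(\langle\mathbf{s},\mathbf{x}\rangle+z)$. So the label must be a secret-independent function of $y'$, $\mathbf{x}$ and fresh coins. Your concrete rule ($f(t)=1$ for $t>0$ on the layers, ``restricted to the positive side'', and later ``combine the sign with the periodic pancake structure'') needs $\mathrm{sgn}\langle\mathbf{s},\mathbf{x}\rangle$. That sign is not determined by $y'$, so the rule violates your own requirement that the map not use the secret.

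\textbf{Dropping the sign gating kills the advantage.} The rule ``$u=\langle\mathbf{s},\mathbf{x}\rangle+z$ within $\epsilon$ of $T\mathbb{Z}$, else a biased coin'' is then an \emph{even} function of $u$. Since $(\langle\mathbf{s},\mathbf{x}\rangle,z)\mapsto(-\langle\mathbf{s},\mathbf{x}\rangle,-z)$ preserves the distribution, you get $\Pr[y=1\wedge\langle\mathbf{s},\mathbf{x}\rangle>0]=\Pr[y=1\wedge\langle\mathbf{s},\mathbf{x}\rangle<0]$. Every homogeneous halfspace then has exactly zero advantage. More generally, only the odd part of the label (as a function of $u$) contributes to $\Pr[y=1\wedge\langle\mathbf{s},\mathbf{x}\rangle>0]-\tfrac12\Pr[y=1]$. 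So for any implementable version of your map, the deferred ``Gaussian sum estimate'' returns zero, not $\Omega(\epsilon)$.

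\textbf{The missing idea: no one-sided modification is needed.} The paper uses the plain map $y=+1$ iff $y'<T/2$, i.e.\ $y=\mathrm{sgn}\sin(2\pi u/T)$. This map is implementable, odd in $u$, and positive on the first half-period after the origin. Oddness gives $\Pr[y=1]=1/2$ for free, so no balancing coin is needed. Restricting to the half-line gives
\begin{equation*}
\Pr[y=1\wedge\langle\mathbf{s},\mathbf{x}\rangle\ge 0]-\tfrac14=\tfrac12\int_0^{\infty}\mathbb{E}_z\bigl[\mathrm{sgn}\sin\bigl(2\pi(t+z)/T\bigr)\bigr]\phi(t)\,dt,
\end{equation*}
which is essentially an alternating sum of Gaussian band masses, with decreasing terms, starting with a positive one.

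The paper's \cref{prop:lower-bound-of-the-probability-mass-of-alternating-bands-in-halfspace,lma:integral-of-noisy-square-wave-on-gaussian-shift} lower-bound this by $\Omega(T)e^{-2\pi^2\sigma^2/T^2}$, as follows. Expand the square wave in its Fourier series. Use $\mathbb{E}_z\sin(\omega(t+z))=\sin(\omega t)e^{-\sigma^2\omega^2/2}$. Note that each $\int_0^\infty\sin(m\omega t)\phi(t)\,dt$ is at least $1/(4m\omega)>0$, and keep only the $m=1$ term. This yields $\Pr[y=1\mid\langle\mathbf{s},\mathbf{x}\rangle\ge 0]=1/2+\Omega(T)$ versus exactly $1/2$ under the null, on the very same instance used for agnostic learning. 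Your opening remark that $f$ is odd-symmetric ``in the sense needed for homogeneity'' was the right thread; using it directly would have closed the argument.

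\textbf{The period is also off.} It cannot be $\approx 1/\sqrt{\log d}$. \cref{lma:sub-exponential-hardness-of-clwe} needs $\eta\ge\log^{\gamma}d$ for a super-polynomial time bound, which forces $T=1/(C'\sqrt{\eta\log d})$. Taking $\eta=\log^{2c}d$, the gap $\Omega(T)=\Omega(1/\log^{1/2+c}d)$ is exactly what produces the exponent $1/2+c$.
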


\noindent
            Although \citet{huang2025distributionspecific} also obtained a $1 - \opt - 1/\cscalar{\log}|1/2 + c|d$ cryptographic lower bound for positive-reliable learning with Gaussian $\rvector{x}$-marginals, we emphasize that our hardness result is a strict generalization of theirs. We note that their result only works for intersections of pairs of parallel halfspaces, which forms a class of ``band-like'' representations. Such a concept class is uncommon and is more expressive than general halfspaces, which makes their hardness result less general than \cref{thm:hardness-of-positive-reliable-learning-over-homogeneous-halfspace}.

            \begin{theorem}[Hardness of Fairness Auditing--Additive]
            \label{thm:hardness-of-auditing-halfspace-under-gaussian-in-additive-form}
                If LWE cannot be solved in $2^{\cscalar{d}|1 - \bigM{1}|}$ time, then, for any $c > 0$, no $\poly[d]$-time $(1, 1/\cscalar{\log}|1/2 + c|d)$-approximation algorithm for \cref{prob:fairness-auditing} exists.
            \end{theorem}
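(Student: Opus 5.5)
The plan is to obtain \cref{thm:hardness-of-auditing-halfspace-under-gaussian-in-additive-form} exactly as the paper obtains \cref{thm:hardness-of-learning-homogeneous-halfspace-under-gaussian}: as a corollary of a quantitative statement (\cref{cor:hardness-of-auditing-halfspace-under-gaussian-in-additive-form}). That statement asserts hardness of $(1,\epsilon)$-approximate SPSF auditing for every $\epsilon \le 1/\log^\gamma d$ with $\gamma>1/2$. Substituting $\gamma = 1/2 + c$ then gives the claim for every $c>0$.

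The core is a reduction from LWE, via the continuous LWE / hidden-direction distinguishing problem of the Diakonikolas et al.\ style, to auditing. In the ``planted'' case, $\mathbf{x}$ is standard Gaussian, and along a hidden direction $\mathbf{s}$ its projection is concentrated near a discretized set of points. We assign labels $y$ as a function of the projection $\langle \mathbf{s},\mathbf{x}\rangle$. In the ``null'' case, the labels are independent of $\mathbf{x}$ with the same marginal, so $u_\distr(\mathbf{u})=0$ for every $\mathbf{u}$.

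The key design step, and the main obstacle, is choosing the labeling function $f(t)$ of $t=\langle\mathbf{s},\mathbf{x}\rangle$. It has to satisfy two requirements at once:
\begin{itemize}
\item the homogeneous halfspace $\mathbf{u}=\mathbf{s}$ has unfairness at least a constant times $1/\sqrt{\log d}$, i.e.\ $\mathbb{E}[f(t)\mathbf{1}\{t>0\}]$ deviates from $\frac12\mathbb{E}[f(t)]$;
\item in the null case the label is nonetheless independent of $\mathbf{x}$, so the auditor's output must have unfairness essentially $0$.
\end{itemize}
The second requirement follows from the usual construction in which labels depend only on $\langle\mathbf{s},\mathbf{x}\rangle \bmod$ a period. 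To meet the first, I will try an asymmetric $f$ that is $1$ on one side of the origin more often than on the other. In the homogeneous setting one cannot shift the threshold, so the asymmetry has to come from the label function itself. With period $\approx 1/\sqrt{\log d}$, the boundary effect at $0$ gives an advantage of order the period, which yields $\opt \gtrsim 1/\sqrt{\log d}$. Since this is larger than $\epsilon=1/\log^{1/2+c}d$, an auditor with guarantee $\opt-\epsilon$ achieves noticeable unfairness.

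The distinguisher then works as follows. Run the auditor on the samples to obtain $\mathbf{v}$. Estimate $u_\distr(\mathbf{v})$ empirically on fresh samples to accuracy $o(\opt)$, and output ``planted'' iff the estimate exceeds $\opt/2$. In the null case every $\mathbf{v}$ has unfairness exactly $0$. In the planted case, correctness of the auditor guarantees unfairness at least $\opt-\epsilon\ge\opt/2$.

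Finally, the parameters are tracked through the LWE-to-cLWE reduction: dimension $d$, noise $\beta$, and period $\approx\log^{-1/2}d$. These must be chosen so that a $\mathrm{poly}(d)$-time distinguisher yields a $2^{d^{1-o(1)}}$-time LWE algorithm. This is the same parameter accounting as in the learning theorem, so I expect it to carry over once $f$ is fixed.
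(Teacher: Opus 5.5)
Your outline has the same skeleton as the paper. You map the cLWE label $y'$ to $\pm 1$, note that the null gives $u_{\mathcal D}(\mathbf u)=0$ for all $\mathbf u$, claim that the secret direction $\mathbf s$ has unfairness $\Omega(T)$ in the alternative, and distinguish by running the auditor and estimating $u_{\mathcal D}(\mathbf v)$. The gap is the step you call the main obstacle, which you leave at ``I will try an asymmetric $f$''. Exhibiting a label map and proving $u_{\mathcal D}(\mathbf s)=\Omega(T)$ is exactly the technical content of the paper's proof.

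\textbf{Why the missing step matters.} The reduction does not know $\mathbf s$, so it can only apply a fixed $T$-periodic map to $y'$, and the phase of that map decides everything. Take an even map, such as $y=+1$ iff $y'\in[-T/4,T/4) \bmod T$. The law-preserving symmetry $(\mathbf x,z)\mapsto(-\mathbf x,-z)$ leaves $y$ unchanged (up to a null set) and swaps the two sides of every homogeneous hyperplane. Hence $u_{\mathcal D}(\mathbf u)=0$ for \emph{every} $\mathbf u$, even in the alternative hypothesis.

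\textbf{What the paper does.} It takes the odd square wave $y=\mathrm{sgn}\sin(2\pi(\langle\mathbf s,\mathbf x\rangle+z)/T)$, i.e.\ $y=+1$ iff $y'\le T/2$. The same symmetry now gives $\Pr[y=1]=1/2$. It then proves $\Pr[y=1,\ \langle\mathbf s,\mathbf x\rangle\ge 0]\ge\frac14+\frac{T}{4\pi^2}e^{-2\pi^2\sigma^2/T^2}$ as follows:
\begin{itemize}
\item expand the square wave in its Fourier series;
\item use $\mathbb E_z\sin(w(t+z))=\sin(wt)e^{-\sigma^2w^2/2}$;
\item lower-bound $\int_0^\infty\sin(wt)\phi(t)\,dt\gtrsim 1/w$ for $w\ge 2$, so every harmonic contributes nonnegatively and keeping the first one suffices.
\end{itemize}
This is also where the noise is controlled. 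One needs $\sigma\lesssim T$, which is admissible because cLWE stays hard for $\sigma\ge\eta^{-\kappa}$ with $\kappa$ chosen so that $\eta^{-\kappa}\le 1/\sqrt{\eta\log d}$.

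Your ``boundary effect'' heuristic is right. In the noiseless case, pairing each positive half-period with the next negative one and using that $\phi$ decreases on $[0,\infty)$ gives about $T\phi(0)/4$. With noise, though, you would also have to show that the smoothed wave stays bounded below on a constant fraction of each positive half-period. Either way, this argument has to be given.

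\textbf{Parameters.} You take period $\approx\log^{-1/2}d$ and argue $\mathrm{opt}\gtrsim 1/\sqrt{\log d}\gg\epsilon$. But the available cLWE hardness has period $T=1/(C'\sqrt{\eta\log d})$, requires $\eta\ge\log^{\gamma}d$, and gives only $d^{\Omega(\eta^{\alpha})}$-time hardness. A period of order $1/\sqrt{\log d}$ means $\eta=O(1)$. That is outside the lemma's range, and would at best rule out a fixed polynomial running time, not every polynomial-time auditor.

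The period has to shrink together with $\epsilon$. The paper sets $\epsilon=c/\sqrt{\eta\log d}$, i.e.\ $\eta\asymp 1/(\epsilon^2\log d)=\log^{2c}d$ when $\epsilon=1/\log^{1/2+c}d$. Then $\mathrm{opt}\asymp T$, the auditor's slack is a small constant fraction of $\mathrm{opt}$, and the hardness is $d^{\Omega(\log^{2c\alpha}d)}$. This is precisely why the exponent is $1/2+c$ rather than $1/2$.

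\textbf{Source distribution.} The source must be the \emph{labeled} cLWE with $\mathbf x\sim\mathcal N(0,I_d)$ exactly. A planted distribution whose projection is ``concentrated near a discretized set'' would violate the standard Gaussian marginal that the auditing problem requires. Also, null-case independence comes from $y'$ being uniform and independent of $\mathbf x$, not from periodicity of the label map.
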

            
            \begin{theorem}[Hardness of Fairness Auditing--Multiplicative]\label{thm:hardness-of-auditing-halfspace-under-gaussian-in-multiplicative-form}
                If LWE cannot be solved in $2^{\cscalar{d}|1 - \bigM{1}|}$ time, then no $\poly[d]$-time $1/\poly[d]$-approximation algorithm for \cref{prob:fairness-auditing} exists.
            \end{theorem}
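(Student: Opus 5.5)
We prove \cref{thm:hardness-of-auditing-halfspace-under-gaussian-in-multiplicative-form} by reduction from LWE, following the hidden-direction template of \citet{diakonikolas2023near,hsu2024distribution}, but built so that the planted structure is detected by a \emph{homogeneous} halfspace. The intermediate problem is the continuous LWE / hidden-periodic-direction distinguishing problem, which, by the known LWE-to-CLWE reduction, inherits the $2^{d^{1-o(1)}}$ hardness. Instances are Gaussian points $\rvector{x}$ that are either pure noise or concentrated on parallel hyperplanes $\innerprod{\cvector{s}}{\rvector{x}}\in (1/\gamma)(\integer+\text{small noise})$ for a hidden unit $\cvector{s}$. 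From such a sample we build a labeled distribution $\distr$ with exactly standard normal $\rvector{x}$-marginal, so that in the \textbf{null} case $\rscalar{y}$ is independent of $\rvector{x}$, and in the \textbf{planted} case $\rscalar{y}$ depends on $\rvector{x}$ only through a one-dimensional function $f(\innerprod{\cvector{s}}{\rvector{x}})$. The standard trick (as in \citet{diakonikolas2023near}) is to draw a fresh Gaussian and label it according to whether it came from the periodic component, via rejection sampling. We choose the label function $f$ to be \emph{odd} in $t=\innerprod{\cvector{s}}{\rvector{x}}$ with a nonzero first moment, e.g.\ $f(t)$ equal to $1$ on the periodic slabs with $t>0$ and $0$ otherwise, suitably mixed.

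In the null case every $\cvector{u}$ has $u_\distr(\cvector{u})=0$, since $\rscalar{y}\perp\rvector{x}$, so $\opt=0$. In the planted case we compute
\[
u_\distr(\cvector{s})=\bigl|\prob{\rscalar{y}=1\cap\innerprod{\cvector{s}}{\rvector{x}}>0}-\tfrac12\prob{\rscalar{y}=1}\bigr|.
\]
This equals half the absolute difference between the label mass on the two sides of the hyperplane. By the oddness built into $f$ it is a constant fraction of the label mass, so $\opt\ge 1/\poly[d]$; even $\Omega(1)$ is achievable if the mixing weights are chosen well. We then pick parameters $\gamma\approx\sqrt d$ so that the CLWE instance stays hard while the label signal remains inverse-polynomial.

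Given a $1/\poly[d]$-approximate auditor $\mathcal A$, we run it on $\distr$ and obtain $\cvector{v}$. We then estimate $u_\distr(\cvector{v})$ from fresh samples to accuracy $1/\poly[d]$. In the planted case the guarantee gives $u_\distr(\cvector{v})\ge \opt/\poly[d]\ge 1/\poly[d]$, while in the null case every $\cvector{v}$ gives $0$. Thresholding the estimate therefore distinguishes the two cases in polynomial time. This contradicts the assumed hardness, after tracing dimension blow-ups polynomially through the LWE$\to$CLWE reduction.

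The main obstacle is the planted-case construction. We must keep the $\rvector{x}$-marginal exactly standard Gaussian while making the homogeneous halfspace through the origin, rather than a shifted threshold as in \citet{hsu2024distribution}, carry $1/\poly$ unfairness. Periodic structure is symmetric under $t\mapsto -t$, so a naive labeling yields zero correlation with every homogeneous halfspace. The fix we try first is to break the symmetry by multiplying the periodic label by $\sgn(t)$, i.e.\ setting $\rscalar{y}=1$ iff $t$ lies in a slab and $t>0$. This keeps $\rscalar{y}$ a function of $t$, and in the null case the label must instead be an independent Bernoulli with matched mean. We then have to check that the LWE indistinguishability survives this sign modification. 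It should, since the sign is a fixed function of the projection onto the same hidden direction, and the reduction only needs sample access to the planted versus null mixtures.
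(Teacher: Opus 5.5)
There is a genuine gap, and it sits exactly at the step you flag as the main obstacle. Your fix sets $y=1$ iff $t=\langle\mathbf{s},\mathbf{x}\rangle$ lies in a slab \emph{and} $t>0$. A reduction, however, must produce labeled samples from the oracle's samples without knowing $\mathbf{s}$. Neither hCLWE points nor cLWE labels $\mathrm{mod}_T(\langle\mathbf{s},\mathbf{x}\rangle+z)$ reveal $\mathrm{sgn}(\langle\mathbf{s},\mathbf{x}\rangle)$; if they did, $\mathbb{E}[\mathrm{sgn}(\langle\mathbf{s},\mathbf{x}\rangle)\,\mathbf{x}]=\sqrt{2/\pi}\,\mathbf{s}$ would recover the secret. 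Being ``a fixed function of the projection onto the hidden direction'' is precisely what makes the label uncomputable, not what makes it harmless.

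Worse, the target distribution is itself easy. Suppose $\Pr[y=1\mid t]=c+\lambda g(t)\mathds{1}[t>0]$, where $g\ge 0$ is the even slab profile with average density $\rho$. Then $\mathbb{E}[\mathds{1}[y=1]\,\mathbf{x}]=\mathbb{E}[\mathds{1}[y=1]\,t]\,\mathbf{s}\approx(\lambda\rho/\sqrt{2\pi})\,\mathbf{s}$. This is the same order as the unfairness $u_{\mathcal{D}}(\mathbf{s})\approx\lambda\rho/4$ that you need to be $1/\mathrm{poly}(d)$. So the degree-one Chow vector, estimated from $\mathrm{poly}(d)$ samples, already separates planted from null, and no hardness can transfer to this distribution. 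Your remark that even $\Omega(1)$ unfairness is attainable is a symptom of the same problem.

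Your premise that periodic structure forces symmetry under $t\mapsto -t$ holds only for even labelings such as slab membership. The missing idea is to break the symmetry with the \emph{phase} carried by labeled cLWE samples, which is computable without $\mathbf{s}$. The paper sets $y=+1$ iff $\mathrm{mod}_T(\langle\mathbf{s},\mathbf{x}\rangle+z)<T/2$. Then $\mathbb{E}[y\mid t]$ is the Gaussian smoothing of the odd square wave $\mathrm{sgn}(\sin(2\pi t/T))$, and oddness gives $\Pr[y=1]=1/2$. Expanding the square wave in its Fourier series, keeping the $k=1$ term, and lower-bounding $\int_0^\infty\sin(\omega t)\phi(t)\,dt\gtrsim 1/\omega$ yields $\Pr[y=1\wedge\langle\mathbf{s},\mathbf{x}\rangle\ge 0]\ge\frac14+\frac{T}{4\pi^2}e^{-2\pi^2\sigma^2/T^2}$. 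With $T=\Theta(1/\sqrt{\eta\log d})$ and $\sigma=\eta^{-\kappa}$, this gives $u_{\mathcal{D}}(\mathbf{s})=\Omega(1/\sqrt{\eta\log d})$.

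This is consistent with hardness. Here $\mathbb{E}[y\,t]\approx\frac{4}{\pi}\omega e^{-\omega^2(1+\sigma^2)/2}$ with $\omega=2\pi/T$, which is $d^{-\Omega(\eta)}$. The halfspace signal therefore comes from the high-degree Hermite tail of $\mathrm{sgn}$ and is invisible to low-degree statistics. With this labeling in place, the rest of your plan matches the paper's argument: the null case gives $u_{\mathcal{D}}\equiv 0$, and you run the auditor, re-estimate $u_{\mathcal{D}}(\mathbf{v})$ via Hoeffding, and threshold.
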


\noindent
            Similar to the previous remarks, \cref{thm:hardness-of-auditing-halfspace-under-gaussian-in-additive-form,thm:hardness-of-auditing-halfspace-under-gaussian-in-multiplicative-form} are strict generalizations of those in \citet{hsu2024distribution}, as their results does not necessarily hold for homogeneous halfspaces. 
        \subsection{Our Techniques}
        \label{sec:our-techniques}

            Our hardness results are based on a reduction from a continuous variant of LWE (cLWE) problem to a distinguishability variant (c.f. \cref{thm:indistinguishability-of-agnostic-learning,thm:indistinguishability-of-one-sided-homogeneous-halfspaces,thm:indistinguishability-of-fairness-auditing}) of each of the problems we considered. With appropriate choice of parameters, \citet{diakonikolas2023near} and \citet{gupte2022continuous} showed that cLWE is as hard as the generic LWE problem (c.f. \definitionref{def:learning-with-errors}). Our reduction method is based on that of \citet{diakonikolas2023near} and \citet{hsu2024distribution}, where they take the distribution (either alternative or null) from an LWE instance and map its labels from $[0, q)$ to $\binarydomain$ using a simple rule. For each of our three problems, we want the image of the distribution to inherit the information bias from the original distribution (alternative or null) in some form such that the image of the alternative hypothesis distribution is distinguishable from the image of the null hypothesis distribution using solutions to the problem. Obviously, we can achieve this by showing no halfspace is information-theoretically different from random guessing if the original distribution is from the null hypothesis, while a halfspace of significant advantage over random guessing exists otherwise. Although we share a similar goal with \citet{diakonikolas2023near,hsu2024distribution}, our proof differs drastically. (Meanwhile, the hardness result of \citet{huang2025distributionspecific} is based on a reduction from agnostic halfspace learning to reliable learning of halfspaces, and thus follows from the hardness of agnostic learning.)

            Both \citet{diakonikolas2023near} and \citet{hsu2024distribution} use the secret vector from the LWE instance to construct two parallel halfspaces with small agreement. They showed that the sum of the classification loss of the two hypothesis must significantly different from $1$, which implies one of them must have significant advantage over random guessing. The key step in both works is to reduce the dependency of the loss sum from the entirety of $\real|d|$ to a small band-like area located between the defining hyperplanes of two halfspaces. The advantage of the halfspace over random guessing is approximately equal to the probability mass of this small area. Observe that such a construction requires at least one of the halfspaces to have non-zero threshold, which prevents applying the argument to homogeneous halfspaces.

            Our proof is conceptually more straightforward: we directly argue that the homogeneous halfspace defined by the secret vector of the LWE instance has significant advantage over random guessing in terms of the corresponding loss measure. We observe that the objective loss of each of the three problems can be rewritten into a summation of a random guessing loss (e.g. $1/2$ for learning halfspaces) and an advantage term. Importantly, the advantage term can be represented by a \emph{partial} convolution between a square wave of period $q$ and the Gaussian density. By replacing the square wave by its \emph{Fourier series} and computing the partial convolution on each Fourier basis element, we find that the advantage term is dominated by the convolution of the first basis element, which eventually yields our advantage lower bound by a Gaussian tail analysis.

\iffalse{
%% We can omit this at the moment, unless someone asks for it. I suspect this discussion will just be more confusing than helpful to our reviewer.
        \subsection{Related Work}
            The problem, \emph{learning reference classes} \citep{pmlr-v89-hainline19a}, resembles our \cref{prob:positive-reliable-learning} in many ways: it only additionally requires all the candidate hypotheses to include a given point $\rvector{x}|*|\in\real|d|$. The problem was initially considered in classical PAC-learning setting by \citet{juba2016learning} as \emph{Precondition Search}, and they proved its PAC learnability over subsets defined by $k$-DNFs without any distributional assumptions. In the agnostic setting, \citet{juba2020more} showed that any representation class of the subsets with ability to express \emph{conjunctions} cannot be efficiently learned as a reference class without distributional assumptions. A majority of the work being made for learning reference classes is algorithmic results, and most of them were considering subpopulations defined by $k$-DNFs~\citep{juba2016learning,juba2016conditional,pmlr-v89-hainline19a,juba2020more,liang2022conditional}. More recently, \citet{huang2025personalized} presented $\bigO{\opt|-3/4|}$-approximation algorithms for learning homogeneous halfspace reference classes under the well-behaved $\rvector{x}$-marginals.
         }\fi
            
    \section{Preliminaries}
    \label{sec:preliminary}
        \textbf{Notations:} We use lowercase italic font characters to represent scalars, e.g.\ $\cscalar{x}\in\real$, lowercase bold italic font characters to represent vectors, e.g.\ $\cvector{x}\in\real|d|$, and uppercase bold italic font characters to represent matrices, e.g.\ $\cmatrix{A}\in\real|m\times d|$. In particular, subscripts will be used to index the coordinates of any vector, e.g., $\cscalar{x}<i>$ represents the $i$th coordinate of the vector $\cvector{x}$. However, for random variables, we use lowercase normal font characters to represent random scalars, e.g.\ $\rscalar{x}\in\real$, and lowercase bold normal font characters to represent random vectors, e.g.\ $\rvector{x}\in\real|d|$ and $\rmatrix{A}\in \real|m\times d|$.

        For probabilistic notation, we use $\distr<\rvector{x}>$ to denote the marginal distribution of $\distr$ on the domain of $\rvector{x}$, $\prob<\rvector{x}\sim\distr>{\rvector{x}\in \cscalar{E}}$ to denote the probability of an event $\cscalar{E}$, and $\expect<\rvector{x}\sim\distr>{\funcsbr{f}[\rvector{x}]}$ to denote the expectation of some statistic $\funcsbr{f}[\rvector{x}]$. 
        % In particular, for an i.i.d.\ sample $\distr*\sim\distr$, we define the empirical probability and expectation as
        % \begin{align*}
        %     \prob<\rvector{x}\sim\distr*>{\rvector{x}\in\cscalar{E}} =& \frac{1}{\abs*{\distr*}}\sum_{\rvector{x}\in\distr*}\indicator[\rvector{x}\in\cscalar{E}]\\
        %     \expect<\rvector{x}\sim\distr*>{\funcsbr{f}[\rvector{x}]} =& \frac{1}{\abs*{\distr*}}\sum_{\rvector{x}\in\distr*}\funcsbr{f}[\rvector{x}].
        % \end{align*}
        For simplicity of notation, we may drop $\distr$ from the subscript when it is clear from the context, i.e., we may simply write $\prob{E}, \expect{f}$ for $\prob<\rvector{x}\sim\distr>{E}, \expect<\rvector{x}\sim\distr>{f}$. For Gaussian distributions, we write $\gaussian|d|[\mu][\sigma^2]$ for $d$-dimensional isotropic Gaussian distributions. We use $\funcsbr{\phi}<\mu, \sigma>$ to denote the density of $\gaussian[\mu][\sigma^2]$, and simply write $\phi$ for $\gaussian$.

        Define $\subsets* = \lbr*{\rvector{x}\cond \rvector{x}\notin\subsets}$ for the set complement. We also denote $\sphere|d-1|:=\lbr{\rvector{x}\in \real|d| \cond \norm{\rvector{x}}<2>= 1}$, $\real<q>:=[0, q)$, $\integer<q>:=\lbr{0, 1, \ldots, q-1}$, and $\modulo<q>:\real|d|\rightarrow \real<q>$ for the unique translation of the input by $q\integer|d|$ to $\real<q>$ for $q\in\naturals$.

        \noindent\textbf{Learning With Errors: }The LWE problem in \cref{sec:our-results} was introduced by \citet{regev2009lattices}. We formally define the problem of LWE, following \citet{diakonikolas2023near}:
        \begin{definition}[Learning With Errors]
        \label{def:learning-with-errors}
            For $m,d\in\naturals$, $q\in\real<+>$, let $\distr<sample>,\distr<secret>,\distr<noise>$ be distributions on $\real|d|, \real|d|, \real$ respectively. In the LWE$(m, \distr<sample>,\distr<secret>,\distr<noise>, \mathrm{mod}_q)$ problem, with $m$ independent samples $\lbr*{(\rvector{x}(1),\rscalar{y}(1)), \ldots, (\rvector{x}(m), \rscalar{y}(m))}$, we want to distinguish between the following two cases:
            \begin{itemize}
                \item[\textbf{Alternative hypothesis}:]each $(\rvector{x}(i), \rscalar{y}(i))$ is generated as $\rscalar{y}(i) = \mathrm{mod}_q(\innerprod*{\rvector{x}(i)}{\cvector{s}} + \rscalar{z}(i))$, where $\rvector{x}(i)\sim\distr<sample>, \cvector{s}\sim\distr<secret>, \rscalar{z}(i)\sim\distr<noise>$.
                \item[\textbf{Null hypothesis}:]each $\rscalar{y}(i)$ is sampled uniformly at random on the support of its marginal distribution in the alternative hypothesis, independent of $\rvector{x}(i)\sim\distr<sample>$.
            \end{itemize}
        \end{definition}
        
        An algorithm is said to be able to \emph{solve the LWE problem with $\Delta$ advantage} if the probability that the algorithm outputs ``alternative hypothesis'' is $\Delta$ larger than the probability that it outputs ``null hypothesis'' when the given data is sampled from the alternative hypothesis distribution. 
        % LWE is widely believed to be computationally hard, formalized as follows.
    
        \begin{assumption}[Sub-exponential LWE Assumption \citep{lindner2011better}]
        \label{asp:sub-exponential-assumption-of-lwe}
            Let $C > 0$ be a sufficiently large constant and $q\in\naturals$. For any $\kappa\in\naturals, \epsilon\in(0, 1)$, the problem $LWE(\cscalar{2}|\bigO{\cscalar{d}|\epsilon|}| , \integer<q>|d|, \integer<q>|d| ,$ $\gaussian[0][\cscalar{\sigma}|2|], \modulo<q>)$ with $q\leq \cscalar{d}|\kappa|$, $\sigma = C\sqrt{d}$ cannot be solved in $\cscalar{2}|\bigO{\cscalar{d}|\epsilon|}|$ time with $\cscalar{2}|\bigO{-\cscalar{d}|\epsilon|}|$ advantage.
        \end{assumption}

        \noindent
        This conjecture is widely used and consistent with the current state of the art. \citet{regev2009lattices} established the \emph{quantum} hardness of LWE by a quantum reduction from approximating the decision version of the Shortest Vector Problem (GapSVP) to LWE under similar $d,q,\sigma$ parameters. Meanwhile, \citet{peikert2009public} gave a classical version of the reduction, which extends the hardness of LWE to the classical computational model. The fastest known algorithm for solving GapSVP takes $\cscalar{2}|\bigO{d}|$ time. Therefore, disproving this conjecture would represent a significant breakthrough. 
        Our hardness results are obtained via a reduction from a \emph{continuous} variant of LWE (cLWE), which was connected to the original LWE problem by \citet{gupte2022continuous} and \citet{diakonikolas2023near}
        % , such that the secret vector is from the unit sphere $\sphere|d-1|$ and the sample is from a standard normal distribution
        :

        \begin{lemma}[Proposition 2.4 of \cite{diakonikolas2023near}]\label{lma:sub-exponential-hardness-of-clwe}
            Under \cref{asp:sub-exponential-assumption-of-lwe}, for any $d\in\naturals$, any constants $\kappa\in\naturals, \epsilon\in(0, 1), \gamma\in\real<+>$ and any $\log^\gamma d\leq \cscalar{\eta}\leq Cd$ where $C>0$ is a sufficiently small universal constant, the problem LWE$(\cscalar{d}|\bigO{\cscalar{\eta}|\epsilon|}|, \gaussian|d|, \sphere|d-1|, \gaussian[0][\cscalar{\sigma}|2|], \modulo<T>)$ over $\real|d|$ with $\sigma\geq \cscalar{\eta}|-\kappa|$ and $T = 1/C'\sqrt{\cscalar{\eta}\log d}$, where $C'>0$ is a sufficiently large universal constant, cannot be solved in time $\cscalar{d}|\bigO{\cscalar{\eta}|\epsilon|}|$ with $\cscalar{d}|-\bigO{\cscalar{\eta}|\epsilon|}|$ advantage.
        \end{lemma}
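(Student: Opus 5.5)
The plan is a chain of reductions from the discrete LWE instance of \cref{asp:sub-exponential-assumption-of-lwe} to the continuous instance in the statement, following \citet{gupte2022continuous}. I would choose the discrete LWE dimension as $n = \Theta(\cscalar{\eta}\log d)$ rather than $d$. The assumed time bound $\cscalar{2}|\bigO{\cscalar{n}|\epsilon'|}|$ then becomes $\cscalar{d}|\bigO{\cscalar{\eta}|\epsilon|}|$ after adjusting $\epsilon$ (using $\eta\geq\log^\gamma d$ to absorb the $\log d$ factors into the exponent). The constraint $\eta\leq Cd$ ensures $n\leq d$, so the secret can be padded to the ambient dimension $d$. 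Each step below must be a polynomial-time sample transformation that maps alternative to alternative and null to (nearly) null, so that any cLWE distinguisher transfers its advantage up to negligible loss.

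\textbf{Step 1 (short secret).} Using the standard normal-form or binary-secret reduction for LWE, I would pass to an LWE instance whose secret $\cvector{s}\in\lbr{0,1}^n$ (or is short). Its Euclidean norm is then $\Theta(\sqrt n)$ and can be made essentially fixed, since it concentrates. Rescaling, $\cvector{s}/\norm{\cvector{s}}<2>\in\sphere|d-1|$ after zero-padding to $d$ coordinates. Taking a random rotation then makes the secret look like an arbitrary unit vector.

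\textbf{Step 2 (uniform to Gaussian samples).} Given $(\cvector{a}, b)$ with $\cvector{a}$ uniform in $\integer<q>|n|$, I would sample $\cvector{x}$ from a continuous Gaussian of width $q$ conditioned on $\cvector{x}\equiv\cvector{a}\pmod q$. By the smoothing parameter of $q\integer|n|$, this yields $\cvector{x}$ statistically close to $\gaussian|n|[0][q^2]$. Moreover, $\innerprod{\cvector{x}}{\cvector{s}}\equiv\innerprod{\cvector{a}}{\cvector{s}}\pmod q$ because $\cvector{s}$ is integral. I would then add fresh Gaussian noise to $b$ to make the label distribution continuous, and divide everything by $q\norm{\cvector{s}}<2>$. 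This gives $\rvector{x}\sim\gaussian|d|$, a unit secret, modulus $T = 1/\norm{\cvector{s}}<2>$ (which with $n\asymp\eta\log d$ is exactly $1/C'\sqrt{\eta\log d}$), and noise of width $\sigma\approx C\sqrt n/(q\sqrt n)\approx 1/q$. Since $q\leq\cscalar{d}|\kappa|$ is polynomial, the noise satisfies $\sigma\geq\cscalar{\eta}|-\kappa'|$ after adjusting the constant $\kappa$. Under the null hypothesis, $b$ is uniform on $\integer<q>$, and the smoothing of the extra noise makes the rescaled label uniform on $\real<T>$, independent of $\rvector{x}$.

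\textbf{Main obstacle.} The hard part is making the Gaussianization exactly correct in distribution while keeping the parameters tight. Two things need care: the conditional lattice-Gaussian sampling must be statistically close to $\gaussian|n|$ in total variation (this requires the width $q$ to exceed the smoothing parameter $\omega(\sqrt{\log n})$ of $\integer|n|$), and the secret's norm must be fixed so that $T$ is deterministic. For the latter, I would first try restricting to secrets of a fixed Hamming weight $\approx n/2$, which costs only a polynomial factor in advantage. I would then need to verify that the accumulated statistical distance is $\cscalar{d}|-\omega(\cscalar{\eta}|\epsilon|)|$, so that the advantage $\cscalar{d}|-\bigO{\cscalar{\eta}|\epsilon|}|$ survives. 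The sample count $\cscalar{d}|\bigO{\cscalar{\eta}|\epsilon|}|$ matches because each cLWE sample uses one LWE sample.
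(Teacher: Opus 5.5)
The paper gives no proof of this lemma. It imports it from Diakonikolas--Kane--Ren, who derive it from the LWE-to-cLWE reduction of Gupte--Vafa--Vaikuntanathan. Your skeleton (short fixed-norm secret, Gaussianize, rescale, pad and rotate) follows that route, but it has two genuine gaps.

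\textbf{The Gaussianization step fails as written.} A Gaussian sampled on the coset $\mathbf{a}+q\mathbb{Z}^n$ is an integer vector.
\begin{itemize}
\item At best its law is close to the \emph{discrete} Gaussian $D_{\mathbb{Z}^n,r}$, and only when $r\ge q\,\mu_\varepsilon(\mathbb{Z}^n)$, where $\mu_\varepsilon$ is the smoothing parameter. The relevant comparison is with the smoothing parameter of $q\mathbb{Z}^n$, not of the width $q$ against that of $\mathbb{Z}^n$.
\item Such a law is at total variation distance $1$ from $\mathcal{N}(0,q^2I_n)$, so the cLWE distinguisher's guarantee says nothing about it. No choice of width fixes this, and adding noise to $b$ only changes the label.
\end{itemize}
The missing idea is to perturb the sample itself: set $\mathbf{y}=\mathbf{x}+\mathbf{e}'$ with $\mathbf{e}'$ a continuous Gaussian. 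The label then carries the extra error $-\langle\mathbf{e}',\mathbf{s}\rangle$, which is correlated with $\mathbf{y}$. Conditioned on $\mathbf{y}$, the vector $\mathbf{x}-c\mathbf{y}$ (for a fixed $c\approx 1$) is a discrete Gaussian over the coset $\mathbb{Z}^n-c\mathbf{y}$. You then need Regev's lemma that $\langle\mathbf{s},\mathbf{v}\rangle+e$ is close to a continuous Gaussian for such $\mathbf{v}$. That lemma requires the LWE noise to be at least about $\lVert\mathbf{s}\rVert\,\mu_\varepsilon(\mathbb{Z}^n)$. Only after this is the label $c\langle\mathbf{y},\mathbf{s}\rangle$ plus noise independent of $\mathbf{y}$, and this is where the shortness of $\mathbf{s}$ is really used.

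\textbf{The parameter count breaks once this is done correctly.}
\begin{itemize}
\item Since $\mathbf{y}$ modulo the period must look like the uniform $\mathbf{a}$, the period is at most a $1/\mu_\varepsilon(\mathbb{Z}^n)$ fraction of the Gaussian width.
\item The per-sample error $\varepsilon$ must be below the advantage divided by the number of samples, i.e.\ $d^{-\Omega(\eta^\epsilon)}$. Hence $\mu_\varepsilon(\mathbb{Z}^n)=\Omega(\sqrt{\log(1/\varepsilon)})=\Omega(\sqrt{\log d})$.
\item So $T=O\bigl(1/(\lVert\mathbf{s}\rVert\sqrt{\log d})\bigr)$ rather than $1/\lVert\mathbf{s}\rVert$, and hitting $T=1/(C'\sqrt{\eta\log d})$ forces $\lVert\mathbf{s}\rVert^2=O(\eta)$.
\item For your dense weight-$n/2$ binary secret this means $n=O(\eta)$. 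The intermediate problem then has at most $2^{O(\eta)}$ secrets and is solved by brute force in time $2^{O(\eta)}\mathrm{poly}(d)$.
\item That running time is $d^{o(\eta^\epsilon)}$ whenever $\eta^{1-\epsilon}=o(\log d)$, for example $\eta=\log^\gamma d$ with $\gamma\le 1$. This is exactly the regime $\eta\approx\log^{2c}d$ that the paper needs for its $1/\log^{1/2+c}d$ bounds.
\end{itemize}
So zero-padding a dense secret cannot give the stated hardness. The cited proof avoids this with the key lemma of Gupte et al.: standard LWE reduces to LWE with $k$-sparse secrets spread over all $d$ coordinates, with only $O(\sqrt{k})$ noise growth. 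Such a secret has norm $\sqrt{k}$ but about $k\log(d/k)$ bits of entropy, and that extra $\log d$ factor is exactly what your padding gives up.

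\textbf{Two smaller points.}
\begin{itemize}
\item The produced noise must be made \emph{at most} $\eta^{-\kappa}$, not at least. Choose $q$ a large enough polynomial, then add noise up to any $\sigma\ge\eta^{-\kappa}$.
\item Fixing the Hamming weight does not obviously cost only a polynomial factor, because a distinguisher's advantage on the other weights can cancel. You would need to re-randomize via coordinate permutations and amplify by repetition, or invoke an entropic-LWE result.
\end{itemize}
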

        
        % The distribution-specific lowers we will prove later are built upon the above hardness assumption of the cLWE problem.
        
    \section{Existence Of Non-trivial Halfspace In Alternative Hypothesis}
    \label{sec:existence-of-non-trivial-halfspace-in-alternative-hypothesis}
        The goal of this section is to establish a key observation that helps us prove the main hardness results in Section~\ref{sec:hardness-of-identifying-special-halfspaces-under-gaussian}. Our strategy for showing cryptographic lower bounds is to reduce the continuous LWE problem (cLWE) to each of the three problems introduced in \cref{sec:agnostically-learning-halfspaces,sec:one-sided-reliable-learning,sec:subgroup-fairness-auditing}. The reduction simply transforms samples of continuous labels from a cLWE instance into samples of $\binarydomain$ labels as described in \propositionref{prop:lower-bound-of-the-probability-mass-of-alternating-bands-in-halfspace}. We wish to argue that, after applying the reduction, the resulting \emph{alternative hypothesis} would present certain properties that the \emph{null hypothesis} does not possess. Note that an algorithm that efficiently learns such properties is then able to distinguish the two hypotheses after the reduction. Consequently, such an algorithm could solve the cLWE problem, which contradicts the sub-exponential hardness assumption of cLWE (c.f. \lemmaref{lma:sub-exponential-hardness-of-clwe}). 

        The main challenge lies in proving the properties are satisfied in the alternative hypothesis. In particular, we show that the homogeneous halfspace, whose normal vector is the secret vector $\cvector{s}\in\real|d|$ from the alternative hypothesis of a cLWE instance, will exhibit some quantifiable advantages on the reduced samples over random guessing w.r.t.\ the joint probability $\prob{\rscalar{y} = +1\cap\innerprod{\cvector{u}}{\rvector{x}}\geq t}$. 

        \begin{proposition}[Gaussian Tail Lower Bound on Noisy Alternating Bands]
        \label{prop:lower-bound-of-the-probability-mass-of-alternating-bands-in-halfspace}
            For some $\sigma, T > 0$ such that $T\leq 1/\pi$, and $\cvector{u}\in\sphere|d-1|$, let $\distr '$ of $(\rvector{x}, \rscalar{y'})$ be a distribution supported on $\real|d|\times [0, T)$ such that $\distr<\rvector{x}>' = \gaussian|d|$ and $\rscalar{y'} = \modulo<T>[\innerprod{\cvector{u}}{\rvector{x}} + \rscalar{z}]$ for $\rscalar{z}\sim\gaussian[0][\sigma^2]$ being independently sampled from $\rvector{x}$. Define $\distr$ of $(\rvector{x}, \rscalar{y})$ to be the distribution derived from $\distr'$, where $\distr<\rvector{x}> = \distr<\rvector{x}>'$, $\rscalar{y}=1$ if $\rscalar{y'}\leq T/2$, and $\rscalar{y}=-1$ otherwise.
            % \begin{equation*}
            %     \rscalar{y}=
            %     \begin{cases}
            %         +1,\quad \text{if }\rscalar{y'}\leq T/2
            %         \\
            %         -1,\quad \text{otherwise}.
            %     \end{cases}
            % \end{equation*}
            Then, for any $k\in\lbr{0}\cup\integer<+>$, it holds that
            \begin{equation*}
                \prob<(\rvector{x}, \rscalar{y})\sim\distr>{\rscalar{y} = +1\cap\innerprod{\cvector{u}}{\rvector{x}}\geq kT} \geq \frac{1}{2}\prob<\rvector{x}\sim\distr<\rvector{x}>>{\innerprod{\cvector{u}}{\rvector{x}}\geq kT} + \frac{T\cscalar{e}|-k^2T^2 - 2\pi^2\sigma^2/T^2|}{4\pi^2}.
            \end{equation*}
        \end{proposition}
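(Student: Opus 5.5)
Throughout, write $a_n := 2\pi n/T$. The plan is to project onto the direction $\cvector{u}$, expand the label rule as a Fourier series, and show that the first harmonic dominates.

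\textbf{Reduction to one dimension.} Set $\rscalar{w} = \innerprod{\cvector{u}}{\rvector{x}}\sim\gaussian$, which is independent of $\rscalar{z}$. Let $s:\real\to\real$ be the $T$-periodic square wave with $s(t) = 1/2$ if $\modulo<T>[t]\le T/2$ and $s(t)=-1/2$ otherwise. Then $\indicator[\rscalar{y}=+1] = 1/2 + s(\rscalar{w}+\rscalar{z})$. Hence the quantity to bound is the ``partial convolution''
\[
\prob{\rscalar{y}=+1\cap \rscalar{w}\ge kT} - \tfrac12\prob{\rscalar{w}\ge kT} = \expect{s(\rscalar{w}+\rscalar{z})\indicator[\rscalar{w}\ge kT]}.
\]
The square wave has mean zero and is odd about $0$ modulo $T$. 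Its Fourier series is
\[
s(t) = \frac{2}{\pi}\sum_{n\text{ odd}}\frac{1}{n}\sin(a_n t),
\]
which converges in $L^2$, so it may be exchanged with the expectation. Averaging over $\rscalar{z}$ uses $\expect{\sin(a(w+\rscalar{z}))} = e^{-a^2\sigma^2/2}\sin(aw)$. This gives
\[
\sum_{n\text{ odd}}\frac{2}{\pi n}\,e^{-2\pi^2n^2\sigma^2/T^2}\,I_n,\qquad I_n := \int_{kT}^\infty \sin(a_n w)\,\phi(w)\,dw .
\]

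\textbf{Evaluating $I_n$.} I will integrate by parts twice. Since $a_n kT = 2\pi nk$, we have $\cos(a_nkT)=1$ and $\sin(a_nkT)=0$. The first integration by parts gives the boundary term $\phi(kT)/a_n$ minus $\frac1{a_n}\int_{kT}^\infty \cos(a_nw)\,w\phi(w)\,dw$. In the second integration by parts the boundary term vanishes. Therefore
\[
I_n = \frac{\phi(kT)}{a_n} + R_n,\qquad |R_n|\le \frac{1}{a_n^2}\int_{kT}^\infty |(1-w^2)\phi(w)|\,dw .
\]
The last integral is at most $C(1+kT)\phi(kT)$ by a standard Gaussian tail estimate. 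The factor $(1+kT)$ is harmless because $(1+kT)e^{-k^2T^2/2}$ is bounded.

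\textbf{Assembling the bound.} The $n=1$ main term is
\[
\frac{2}{\pi}\,e^{-2\pi^2\sigma^2/T^2}\,\frac{T\phi(kT)}{2\pi} = \frac{T\,e^{-2\pi^2\sigma^2/T^2}\,e^{-k^2T^2/2}}{\pi^2\sqrt{2\pi}} .
\]
Compared with the target $T e^{-k^2T^2-2\pi^2\sigma^2/T^2}/(4\pi^2)$, this leaves a constant-factor slack of $4/\sqrt{2\pi}\approx 1.6$, plus an extra factor $e^{-k^2T^2/2}$ that can absorb polynomial factors in $kT$. I then subtract three kinds of error terms.
\begin{itemize}
\item The $n=1$ remainder $R_1$ is suppressed relative to the main term by $1/a_1 = T/(2\pi)\le 1/(2\pi^2)$, using $T\le 1/\pi$.
\item For odd $n\ge3$, the main terms are at most $e^{-2\pi^2\sigma^2/T^2}\,\frac{T\phi(kT)}{\pi^2 n^2}$. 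Their total is a $(\pi^2/8-1)\approx0.23$ fraction of the $n=1$ term, and in fact smaller because of the extra decay factor $e^{-16\pi^2\sigma^2/T^2}$.
\item The remainders $R_n$ for $n\ge 3$ are even smaller, of order $T^2/n^3$.
\end{itemize}
Together these should leave at least $T e^{-k^2T^2-2\pi^2\sigma^2/T^2}/(4\pi^2)$.

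The main obstacle is this last constant bookkeeping. The guaranteed slack is only about a factor of $1.6$, while the higher-harmonic main terms already consume roughly $23\%$ and the remainder constants are not yet pinned down. If the direct count is too tight, my fallback is to spend the $e^{-k^2T^2/2}$ slack, which dominates $(1+kT)$ up to a universal constant, and to use $T\le 1/\pi$ to make every remainder carry at least an extra factor $T/(2\pi)\le 1/(2\pi^2)$.
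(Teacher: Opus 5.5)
The gap is that your per-harmonic estimate is not uniform in $k$, while the proposition is asserted for every $k\ge 0$ with $T$ fixed. Your overall skeleton matches the paper's: rewrite the excess as a partial convolution of the square wave with $\phi$, expand in odd harmonics, and damp each harmonic by $e^{-a_n^2\sigma^2/2}$.

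\emph{Why the remainder estimate fails for large $k$.} Your bound $|R_1|\le C(1+kT)\phi(kT)/a_1^2$ makes $R_1$ small relative to the main term $\phi(kT)/a_1$ only by the factor $C(1+kT)/a_1$, not $1/a_1$.

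Concretely, for $kT\ge 1$ one has exactly $\int_{kT}^\infty |1-w^2|\phi(w)\,dw = kT\,\phi(kT)$. So your $n=1$ bound reads $I_1\ge \frac{\phi(kT)}{a_1}(1-\frac{kT}{a_1})$, which is negative as soon as $kT>a_1=2\pi/T$, i.e.\ for every $k>2\pi/T^2$.

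The factor $e^{-k^2T^2/2}$ cannot absorb this. It measures the gap between your main term and the target. The main term and the remainder both scale like $\phi(kT)$, so the relevant comparison is $1$ versus $(1+kT)/a_1$; boundedness of $(1+kT)e^{-k^2T^2/2}$ says nothing about it.

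Sharper constants will not help either, because for $kT\gg a_1$ the integration-by-parts expansion is not the true asymptotics. Writing $w=kT+u$ and using $a_1kT\in 2\pi\mathbb{Z}$, you get $I_1=\phi(kT)\int_0^\infty \sin(a_1u)e^{-kTu-u^2/2}\,du\approx \phi(kT)\,a_1/(kT)^2$, far below $\phi(kT)/a_1$. Your argument does work when $kT^2$ is at most a small constant, which includes the case $k=0$ used later, but not the statement as posed.

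\emph{The paper's route and a repair of yours.} The paper instead rests on a per-harmonic lower bound uniform in $k$: $\int_{kT}^\infty \sin(a_nw)\phi(w)\,dw\ge e^{-k^2T^2}/(4a_n)$, obtained via the shift $w=kT+u$ and a contour-shift, Dawson-type evaluation of $\int_0^\infty e^{-bu^2}\sin(\omega u)\,du$. The target's $e^{-k^2T^2}$, rather than $e^{-k^2T^2/2}$, is what makes such a uniform bound possible. Since every harmonic is then nonnegative, only $n=1$ is kept.

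You can repair your route along the same lines.
First, each $I_n\ge 0$ exactly. Indeed $I_n=\int_0^\infty \sin(a_nu)\phi(kT+u)\,du$ with $\phi(kT+\cdot)$ decreasing, so grouping half-periods gives an alternating series with decreasing terms. Hence all $n\ge 3$ terms can simply be dropped. They are not errors eating your factor-$1.6$ slack; even their main terms have the favorable sign.
Second, split the $n=1$ term by the size of $kT$.
\begin{itemize}
\item If $kT\le a_1/2$, your integration by parts suffices. When $kT\ge 1$ you get $|R_1|\le \frac12\phi(kT)/a_1$; when $kT<1$ you get $|R_1|\le 2\phi(1)/a_1^2\le 0.025/a_1$. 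In both cases what remains exceeds $e^{-k^2T^2}/(4a_1)$.
\item If $kT>a_1/2$, put $h=\pi/a_1=T/2$ and keep the first two half-periods: $I_1\ge\int_0^h \sin(a_1u)\,(\phi(kT+u)-\phi(kT+u+h))\,du\ge \frac{2(1-e^{-\pi/2})}{a_1}\phi(kT+h)$. This exceeds $e^{-k^2T^2}/(4a_1)$ because $h\le 1/(2\pi)$.
\end{itemize}
Multiplying by $\frac{2}{\pi}e^{-2\pi^2\sigma^2/T^2}$ then gives exactly the stated bound.
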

        \begin{proof}
            For conciseness of the proof, we define $\subsets<T> = \union<i\in\integer>[iT, iT + T/2)$. Then, by our definition of $\rscalar{y'}$ and $\rscalar{y}$, we equivalently have that $\rscalar{y}=1$ for $\innerprod{\cvector{u}}{\rvector{x}} + \rscalar{z}\in \subsets<T>$, and $\rscalar{y}=-1$ for $\innerprod{\cvector{u}}{\rvector{x}} + \rscalar{z}\in \subsets*<T>$.
            % \begin{equation*}
            %     \rscalar{y}=
            %     \begin{cases}
            %         +1,\quad \text{if }\innerprod{\cvector{u}}{\rvector{x}} + \rscalar{z}\in \subsets<T>
            %         \\
            %         -1,\quad \text{if }\innerprod{\cvector{u}}{\rvector{x}} + \rscalar{z}\in \subsets*<T>.
            %     \end{cases}
            % \end{equation*}
            Denote $\distr<\rvector{x}, \rscalar{z}>$ to be the product distribution of $\distr<\rvector{x}>$ and $\gaussian[0][\sigma^2]$. It holds that
            \begin{align*}
                &\prob<(\rvector{x}, \rscalar{y})\sim\distr>{\rscalar{y} = +1\cap\innerprod{\cvector{u}}{\rvector{x}}\geq kT}
                \\
                \ceq[i]&\prob<(\rvector{x}, \rscalar{z})\sim\distr<\rvector{x}, \rscalar{z}>>{\innerprod{\cvector{u}}{\rvector{x}} + \rscalar{z}\in \subsets<T>\cap\innerprod{\cvector{u}}{\rvector{x}}\geq kT} 
                \\
                \ceq[ii]& \frac{1}{2}\prob<\distr<\rvector{x}>>{\innerprod{\cvector{u}}{\rvector{x}}\geq kT} + \frac{1}{2}\sbr{\prob<\distr<\rvector{x}, \rscalar{z}>>{\innerprod{\cvector{u}}{\rvector{x}}+ \rscalar{z}\in \subsets<T>\cap \innerprod{\cvector{u}}{\rvector{x}}\geq kT} - \prob<\distr<\rvector{x}, \rscalar{z}>>{\innerprod{\cvector{u}}{\rvector{x}}+ \rscalar{z}\in \subsets*<T>\cap \innerprod{\cvector{u}}{\rvector{x}}\geq kT}}
                \\
                \ceq[iii]& \frac{1}{2}\prob<\distr<\rvector{x}>>{\innerprod{\cvector{u}}{\rvector{x}}\geq kT} + \frac{1}{2}\sbr{\int_{kT}^{+\infty}\prob<\rscalar{z}\sim\gaussian[0][\sigma^2]>{\rscalar{z} + \cscalar{t}\in\subsets<T>} \funcsbr{\phi}[\cscalar{t}]d\cscalar{t} - \int_{kT}^{+\infty}\prob<\rscalar{z}\sim\gaussian[0][\sigma^2]>{\rscalar{z} + \cscalar{t}\in\subsets*<T>} \funcsbr{\phi}[\cscalar{t}]d\cscalar{t}}
                \\
                =&\frac{1}{2}\prob<\distr<\rvector{x}>>{\innerprod{\cvector{u}}{\rvector{x}}\geq kT} + \frac{1}{2}\int_{kT}^{+\infty}\sbr{\int_{-\infty}^{+\infty}\sbr{\indicator[\cscalar{u} + \cscalar{t}\in\subsets<T>] - \indicator[\cscalar{u} + \cscalar{t}\in\subsets*<T>]}\funcsbr{\phi}<0, \sigma>[\cscalar{u}]d\cscalar{u}}\funcsbr{\phi}[\cscalar{t}]d\cscalar{t}
                \\
                =& \frac{1}{2}\prob<\distr<\rvector{x}>>{\innerprod{\cvector{u}}{\rvector{x}}\geq kT} + \frac{1}{2}\int_{kT}^{+\infty}\expect*<\rscalar{z}\sim\gaussian[0][\sigma^2]>{\funcsbr*{\sgn}[\funcsbr*{\sin}[\frac{2\pi(\rscalar{z} + \cscalar{t})}{T}]]}\funcsbr{\phi}[\cscalar{t}]d\cscalar{t}
                \\
                \geq& \frac{1}{2}\prob<\distr<\rvector{x}>>{\innerprod{\cvector{u}}{\rvector{x}}\geq kT} + \frac{T\cscalar{e}|-k^2T^2 - 2\pi^2\sigma^2/T^2|}{4\pi^2}\label{eq:difference-between-two-alternating-bands}
            \end{align*}
            where Equation (i) holds because $\rscalar{z}$ and $\rvector{x}$ are independent from each other by our assumption; Equation (ii) is obtained by using the \emph{law of total probability}; Equation (iii) holds by observing that $\innerprod{\cvector{u}}{\rvector{x}}\sim\gaussian$ due to $\rvector{x}\sim\gaussian|d|$ and the symmetry of standard normal distributions; and the last inequality is obtained by applying \lemmaref{lma:integral-of-noisy-square-wave-on-gaussian-shift} with $\omega=2\pi/T$.
        \end{proof}
        An important trick above is that we reduced the difference of two disjoint sequences of \emph{bands} under Gaussian measure to the partial convolution between a square wave and Gaussian density. This reformulation enabled us to apply Fourier analysis on the square wave function, which eventually led to the lower bound used in the above proof. We give the formal analysis in the following lemma.
        \begin{lemma}[Partial Gaussian-Square Wave Convolution Lower Bound]
        \label{lma:integral-of-noisy-square-wave-on-gaussian-shift}
            For any $\alpha > 0$ and $\omega \geq 2$ such that $\alpha\omega = 2k\pi$ for some $k\in\integer$, it holds that 
            \begin{equation*}
                \int_\alpha^{+\infty}\expect<\rscalar{z}\sim\gaussian[0][\sigma^2]>{\funcsbr{f}[\omega\sbr{\rscalar{z} + \cscalar{t}}]}\funcsbr{\phi}[\cscalar{t}]d\cscalar{t} \geq \frac{\cscalar{e}|- \alpha^2 - \sigma^2\omega^2/2|}{\pi\omega}
            \end{equation*}
            where $f:\real \rightarrow \binarydomain$ is the square wave function such that $\funcsbr{f}[u] = \funcsbr{\sgn}[\funcsbr{\sin}[u]]$.
        \end{lemma}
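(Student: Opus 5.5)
The plan is to expand the square wave in its Fourier series, average each harmonic over the Gaussian noise, and then show that every harmonic contributes a nonnegative amount. After that, the bound follows from the fundamental harmonic alone.

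\textbf{Step 1: Fourier expansion.} I would write
\begin{equation*}
f(u)=\frac{4}{\pi}\sum_{n\text{ odd}}\frac{\sin(nu)}{n}.
\end{equation*}
The partial sums are uniformly bounded and converge pointwise almost everywhere. Hence dominated convergence lets us exchange the sum with both the expectation over $\rscalar{z}$ and the integral against $\phi$.

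\textbf{Step 2: averaging over the noise.} For each harmonic, the Gaussian characteristic function gives
\begin{equation*}
\mathbb{E}_{\rscalar{z}}\bigl[\sin\bigl(n\omega(\rscalar{z}+t)\bigr)\bigr]=e^{-n^2\omega^2\sigma^2/2}\sin(n\omega t).
\end{equation*}
Therefore the left-hand side of the lemma equals
\begin{equation*}
\frac{4}{\pi}\sum_{n\text{ odd}}\frac{e^{-n^2\omega^2\sigma^2/2}}{n}\,I_n,
\qquad
I_n=\int_\alpha^{\infty}\sin(n\omega t)\phi(t)\,dt .
\end{equation*}

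\textbf{Step 3: shifting the integration variable.} Substitute $t=\alpha+s$. Since $\alpha\omega=2k\pi$ and $n$ is an integer, $\sin(n\omega(\alpha+s))=\sin(n\omega s)$. This gives
\begin{equation*}
I_n=\int_0^\infty \sin(n\omega s)\,g(s)\,ds,
\qquad
g(s)=\phi(\alpha+s)=\phi(\alpha)e^{-\alpha s-s^2/2}.
\end{equation*}
Here $g$ is positive and strictly decreasing on $[0,\infty)$ because $\alpha>0$.

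\textbf{Step 4: all harmonics are nonnegative.} For a positive decreasing $g$, split $I_n$ into consecutive half-periods of length $\pi/(n\omega)$. These form an alternating series whose terms decrease in absolute value, and the first term is positive. Hence $I_n\ge 0$ for every $n$. Since all the coefficients $\frac{4}{\pi n}e^{-n^2\omega^2\sigma^2/2}$ are positive, we may discard every $n\ge3$ term. This gives the lower bound
\begin{equation*}
\frac{4}{\pi}e^{-\omega^2\sigma^2/2}\,I_1 .
\end{equation*}
It therefore suffices to show $I_1\ge e^{-\alpha^2}/(4\omega)$.

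\textbf{Step 5: lower bound on the fundamental harmonic (main obstacle).} The difficulty is a quantitative lower bound on $I_1$ that is uniform in $\alpha$, using only $\omega\ge2$.

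My first attempt would pair each positive lobe with the following negative lobe. Writing $h=\pi/\omega$, this gives
\begin{equation*}
I_1=\sum_{j\ge0}\int_0^{h}\sin(\omega s)\bigl[g(s+2jh)-g(s+(2j+1)h)\bigr]\,ds .
\end{equation*}
Every bracket is nonnegative, so $I_1\ge \int_0^{h}\sin(\omega s)\,[g(s)-g(s+h)]\,ds$.

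I would then use convexity of $e^{-\alpha s-s^2/2}$ on the relevant range, or simply the mean value theorem, to get
\begin{equation*}
g(s)-g(s+h)\ge h\,|g'(s+h)|.
\end{equation*}
Here $|g'(\tau)|=(\alpha+\tau)g(\tau)$, which is of order $\alpha\,\phi(\alpha)$. This handles large $\alpha$, but it degenerates when $\alpha$ is small.

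For small $\alpha$ I would instead integrate by parts twice. This gives
\begin{equation*}
I_1=\frac{g(0)}{\omega}-\frac{1}{\omega^2}\int_0^\infty \sin(\omega s)\,g''(s)\,ds .
\end{equation*}
Then $\int_0^\infty|g''|$ is bounded by a constant times $\phi(\alpha)$, and together with $\omega\ge2$ this shows $I_1\ge c\,\phi(\alpha)/\omega$.

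The slack between $\phi(\alpha)=e^{-\alpha^2/2}/\sqrt{2\pi}$ and the target $e^{-\alpha^2}/4$ is a factor $e^{-\alpha^2/2}$, which is exponentially small. It should absorb the constants in both regimes. The hardest part is making the constant in the small-$\alpha$ integration-by-parts error explicit, so that $1/\sqrt{2\pi}$ minus the error is at least $e^{-\alpha^2/2}/4$ for all $\alpha>0$ and $\omega\ge2$.
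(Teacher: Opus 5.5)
\textbf{Steps 1--4.} These are sound and follow the paper's skeleton: the Fourier series of the square wave, the Gaussian characteristic function, and the shift $t=\alpha+s$ that uses $\alpha\omega\in 2\pi\mathbb{Z}$. Your alternating-lobe argument that every $I_n\ge 0$ is cleaner than the paper's treatment, which lower-bounds each harmonic separately by $e^{-\alpha^2}/(4n\omega)$.

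\textbf{The gap is Step 5.} That step carries all the quantitative content, and the plan you sketch does not cover the admissible parameters. The right case distinction is not small versus large $\alpha$, but the frequency $\omega$ versus the decay rate $\alpha$ of $g$.
\begin{itemize}
\item Your pairing bound $\int_0^h \sin(\omega s)\,h\,|g'(s+h)|\,ds$ is $O(\omega^{-2})$. So it misses the target $e^{-\alpha^2}/(4\omega)$ for large $\alpha$ once $k$ is large; at $\alpha=2$, $\omega=100\pi$ it is about half the target.
\item When $\omega\lesssim\alpha$, your claim that $|g'(\tau)|$ is of order $\alpha\phi(\alpha)$ on $[h,2h]$ is false. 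Here $g(\tau)=\phi(\alpha)e^{-\alpha\tau-\tau^2/2}$, and $\alpha h=\pi\alpha/\omega$ is not small.
\item Concretely, take the admissible point $(\alpha,\omega)=(\pi,2)$. The target is $e^{-\pi^2}/8\approx 6.5\times10^{-6}$, while the true $I_1$ is roughly $3.6\times10^{-4}$. Your pairing quantity is only about $3.5\times10^{-6}$. Integration by parts gives nothing, because the error $\omega^{-2}\int_0^\infty|g''|=\pi\phi(\pi)/4$ exceeds the main term $\phi(\pi)/2$. The same double failure occurs at $(3,2\pi/3)$.
\item At such moderate $\alpha$, the slack $e^{\alpha^2/2}$ does not absorb the factor $e^{-\alpha h-h^2/2}$ you lose by evaluating $g'$ at the right endpoint.
\item In the small-$\alpha$ regime, ``only $\omega\ge2$'' is not enough. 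As $\alpha\to 0$ with $\omega=2$, your integration-by-parts bound gives $1/\sqrt{2\pi}-\phi(1)\approx 0.16<1/4$. You must use that $\alpha\omega=2k\pi$ with $k\ge1$ forces $\omega\ge 2\pi/\alpha$.
\end{itemize}

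\textbf{A repair along your lines.} A three-way split works.
\begin{itemize}
\item For $\alpha<1$: integrate by parts with $\int_0^\infty|g''|=2\phi(1)-\alpha\phi(\alpha)\le 2\phi(1)$ and $\omega\ge 2\pi/\alpha$.
\item For $\alpha\ge1$ and $\omega\ge2\alpha$: $\int_0^\infty|g''|=\alpha\phi(\alpha)$ gives $I_1\ge\phi(\alpha)/(2\omega)\ge e^{-\alpha^2}/(4\omega)$.
\item For $\omega<2\alpha$, which the constraint allows only when $\alpha>\sqrt\pi$: in the first pair, replace the mean value theorem by the multiplicative bound $g(s+h)\le e^{-\alpha h}g(s)$. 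Since $\alpha h>\pi/2$ and $h\le\pi/2$, this yields $I_1\ge(1-e^{-\pi/2})\,g(h/2)/\omega\ge(1-e^{-\pi/2})\,\phi(\alpha)e^{-\pi\alpha/4-\pi^2/32}/\omega$. That exceeds $e^{-\alpha^2}/(4\omega)$ for $\alpha\ge\sqrt\pi$.
\end{itemize}

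\textbf{Comparison with the paper.} The paper obtains the $n=1$ bound differently. After the same shift, it uses $2\alpha u\le\alpha^2+u^2$ to pass to the centred integral $\int_0^\infty e^{-u^2}\sin(\omega u)\,du$. This is a Dawson-type integral, evaluated by a contour shift and bounded using $\omega\ge 2$; that is where $e^{-\alpha^2}$ and the hypothesis $\omega\ge2$ come from. However, that pointwise comparison is applied under the sign-changing factor $\sin(\omega u)$. As written, it does not justify the inequality, so it is not a drop-in fix for your Step 5.
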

        \begin{proof}
            Notice that, due to the linearity of Gaussian distributions, $\rscalar{z} + \cscalar{t}\sim \gaussian[\cscalar{t}][\sigma^2]$ for any fixed $\cscalar{t}\in\real$. Therefore, we have that
            \begin{align}
                 \expect<\rscalar{z}\sim\gaussian[0][\sigma^2]>{\funcsbr{f}[\omega\sbr{\rscalar{z} + \cscalar{t}}]} =& \expect<\rscalar{z}\sim\gaussian[\cscalar{t}][\sigma^2]>{\funcsbr{f}[\omega\rscalar{z}]}
                 \notag
                 \\
                 \ceq[i]& \int_{-\infty}^{+\infty} \frac{4}{\pi}\sum_{k=1}^{+\infty}\frac{\funcsbr{\sin}[\omega\sbr{2k-1} \cscalar{u}]}{2k - 1} \funcsbr{\phi}<t, \sigma>[\cscalar{u}] d\cscalar{u}
                \notag
                \\
                =& \sum_{k=1}^{+\infty}\frac{4}{\pi(2k - 1)}\expect<\rscalar{z}\sim\gaussian[\cscalar{t}][\sigma^2]>{\funcsbr{\sin}[\omega\sbr{2k-1} \rscalar{z}]}
                \notag
                \\
                =& \frac{4}{\pi}\sum_{k=1}^{+\infty}\frac{\funcsbr{\sin}[\omega\sbr{2k-1} \cscalar{t}]\cscalar{e}|-\sigma^2\omega^2\sbr{2k-1}^2/2|}{(2k - 1)}
                \label{eq:gaussian-expectation-of-shifted-square-wave-func}
            \end{align}
            where Equation (i) holds due to \cref{fac:fourier-series-of-square-wave}, and the last equation is obtained by applying \lemmaref{lma:sub-gaussian-fourier-transform}.

            Substituting Equation \eqref{eq:gaussian-expectation-of-shifted-square-wave-func} back in the target expression gives
            \begin{align*}
                \int_\alpha^{+\infty}\expect<\rscalar{z}\sim\gaussian[0][\sigma^2]>{\funcsbr{f}[\omega\sbr{\rscalar{z} + \cscalar{t}}]}\funcsbr{\phi}[\cscalar{t}]d\cscalar{t} =& \frac{4}{\pi}\sum_{k=1}^{+\infty} \frac{\cscalar{e}|-\sigma^2\omega^2\sbr{2k-1}^2/2|}{(2k - 1)} \int_\alpha^{+\infty}\funcsbr{\sin}[\omega\sbr{2k-1} \cscalar{t}]\funcsbr{\phi}[\cscalar{t}]d\cscalar{t}
                \\
                \cgeq[i]& \frac{\cscalar{e}|- \alpha^2|}{\pi\omega}\sum_{k=1}^{+\infty} \frac{\cscalar{e}|-\sigma^2\omega^2\sbr{2k-1}^2/2|}{\sbr{2k - 1}|2|} 
                \\
                \geq& \frac{\cscalar{e}|- \alpha^2 - \sigma^2\omega^2/2|}{\pi\omega}
            \end{align*}
            where Inequality (i) is obtained by applying \lemmaref{lma:partial-fourier-transform-of-gaussian-density-start-from-alpha}, and the last inequality follows by only keeping the $k = 1$ term in the summation.
        \end{proof}

    \section{Hardness of Identifying Special Halfspaces Under Gaussian Marginals}
    \label{sec:hardness-of-identifying-special-halfspaces-under-gaussian}
        In this section, we present our main hardness results for all three problems---positive-reliable learning of homogeneous halfspaces, agnostically learning homogeneous halfspaces, and fairness auditing over homogeneous halfspace subgroups---under Gaussian marginals. Our approach follows a unified strategy across all three problems. For each problem, we proceed in two steps:
        
        \textbf{Indistinguishability:} We first establish that distinguishing between a distribution $\distr$ of $(\rvector{x}, \rscalar{y})$, where a homogeneous halfspace with the desired special property exists, and a distribution $\distr$ of $(\rvector{x}, \rscalar{y})$, where no such homogeneous halfspace exists, is computationally hard. This is achieved through a reduction from the cLWE problem $\text{LWE}(\cscalar{d}|\bigO{\cscalar{\eta}|\alpha|}|, \gaussian|d|, \sphere|d-1|,$ $\gaussian[0][\cscalar{\sigma}|2|], \modulo<T>)$. Informally speaking, we efficiently convert the labels $\rscalar{y '}\in \real<q>$ from the cLWE instance to $\rscalar{y}\in\binarydomain$ such that a homogeneous halfspace with the desired property exists if $\rscalar{y '}$ comes from the \emph{alternative hypothesis}, while no such homogeneous halfspace can exist if $\rscalar{y '}$ comes from the \emph{null hypothesis}.
        
        \textbf{Inapproximability:} We then derive the approximation lower bound as a corollary by showing that any algorithm that achieves the assumed approximation guarantee could be used to distinguish between the alternative and null hypotheses, contradicting the indistinguishability property.

        In the following section, we describe our reduction and formalize the above arguments.

        \subsection{Positive-Reliable Learning of Homogeneous Halfspaces}
        \label{sec:positive-reliable-learning-of-homogeneous-halfspaces}
            
            In this subsection, we establish the cryptographic hardness of agnostically learning one-sided homogeneous halfspaces under Gaussian marginals (c.f. \cref{prob:positive-reliable-learning}). 

            \begin{theorem}[Indistinguishability of One-sided Homogeneous Halfspaces]
            \label{thm:indistinguishability-of-one-sided-homogeneous-halfspaces}
                Under \cref{asp:sub-exponential-assumption-of-lwe}, for any $d\in\naturals$, any constants $\alpha\in(0, 1), \gamma\in\real<+>$, and any $\log^{\gamma} d\leq \cscalar{\eta} \leq \cscalar{c}d$ where $\cscalar{c}$ is a sufficiently small constant, there is no algorithm that runs in time $\cscalar{d}|\bigO{\cscalar{\eta}|\alpha|}|$ and distinguishes between the following two cases of a joint distribution $\distr$ of $(\rvector{x}, \rscalar{y})$ supported on $\real|d|\times\binarydomain$ such that the marginal $\distr<\rvector{x}> = \gaussian|d|$, with $\cscalar{d}|-\bigO{\cscalar{\eta}|\alpha|} |$ advantage:
                \begin{itemize}
                    \item[\textbf{Alternative Hypothesis}:]$\exists\cvector{v}\in \sphere|d-1|$ such that $\prob<(\rvector{x}, \rscalar{y})\sim\distr>{\rscalar{y} = 1\cond\innerprod{\cvector{v}}{\rvector{x}} \geq 0} = \frac{1}{2} + \bigM{\frac{1}{\sqrt{\eta\log d}}}$.
                    \item[\textbf{Null Hypothesis}:]$\forall\cvector{u}\in \sphere|d-1|$, it holds that $\prob<(\rvector{x}, \rscalar{y})\sim\distr>{\rscalar{y} = 1\cond\innerprod{\cvector{u}}{\rvector{x}} \geq 0} = \frac{1}{2}$.
                \end{itemize}
            \end{theorem}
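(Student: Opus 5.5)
We reduce from the cLWE problem of \lemmaref{lma:sub-exponential-hardness-of-clwe} with $\epsilon$ replaced by $\alpha$: LWE$(\cscalar{d}|\bigO{\cscalar{\eta}|\alpha|}|, \gaussian|d|, \sphere|d-1|, \gaussian[0][\cscalar{\sigma}|2|], \modulo<T>)$ with $T = 1/C'\sqrt{\eta\log d}$. We choose the noise level $\sigma$ as small as that lemma allows, say $\sigma = \eta^{-\kappa}$ for a large constant $\kappa$, so that $\sigma/T^2 = o(1)$. The reduction is the label map of \propositionref{prop:lower-bound-of-the-probability-mass-of-alternating-bands-in-halfspace}: each sample $(\rvector{x}, \rscalar{y'})$ is sent to $(\rvector{x}, \rscalar{y})$ with $\rscalar{y} = 1$ if $\rscalar{y'}\leq T/2$ and $\rscalar{y}=-1$ otherwise. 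This map is applied samplewise, takes constant time per sample, and keeps the $\rvector{x}$-marginal equal to $\gaussian|d|$. Hence any distinguisher for the two cases of the theorem running in time $\cscalar{d}|\bigO{\cscalar{\eta}|\alpha|}|$ with $\cscalar{d}|-\bigO{\cscalar{\eta}|\alpha|}|$ advantage yields a cLWE distinguisher with the same time and advantage, contradicting \lemmaref{lma:sub-exponential-hardness-of-clwe}. It therefore suffices to check that the images of the null and alternative cLWE distributions satisfy the null and alternative hypotheses respectively.

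\textbf{Null case.} Under the null hypothesis $\rscalar{y'}$ is uniform on $[0,T)$ and independent of $\rvector{x}$. Then $\rscalar{y}$ is a fair coin independent of $\rvector{x}$, so $\prob{\rscalar{y}=1\cond \innerprod{\cvector{u}}{\rvector{x}}\geq 0} = 1/2$ for every $\cvector{u}\in\sphere|d-1|$.

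\textbf{Alternative case.} Take $\cvector{v}=\cvector{s}$, the secret. Applying \propositionref{prop:lower-bound-of-the-probability-mass-of-alternating-bands-in-halfspace} with $k=0$ (valid since $T\leq 1/\pi$ for large $d$) gives
\[
\prob{\rscalar{y}=1\cap \innerprod{\cvector{s}}{\rvector{x}}\geq 0}\geq \tfrac{1}{4} + \tfrac{T e^{-2\pi^2\sigma^2/T^2}}{4\pi^2}.
\]
Dividing by $\prob{\innerprod{\cvector{s}}{\rvector{x}}\geq 0}=1/2$ gives the conditional probability
\[
\prob{\rscalar{y}=1\cond \innerprod{\cvector{s}}{\rvector{x}}\geq 0}\geq \tfrac{1}{2} + \tfrac{T}{2\pi^2}e^{-2\pi^2\sigma^2/T^2}.
\]
Since $\sigma/T = \eta^{-\kappa}C'\sqrt{\eta\log d}\to 0$ (using $\eta\geq\log^\gamma d$ and $\kappa$ large), the exponential factor is at least a constant. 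The advantage is therefore $\Omega(T)=\Omega(1/\sqrt{\eta\log d})$.

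The statement is phrased as ``$=\tfrac12+\Omega(\cdot)$'', so we also note the trivial upper bound $1$. If a two-sided order is intended, an upper bound of $O(T)$ follows from the same Fourier expansion in \lemmaref{lma:integral-of-noisy-square-wave-on-gaussian-shift}: each term $\int_0^\infty \sin((2k-1)\omega t)\phi(t)\,dt$ is $O(1/((2k-1)\omega))$, and the sum over $k$ converges. The main point to check is the parameter bookkeeping: $\sigma$ must be admissible for \lemmaref{lma:sub-exponential-hardness-of-clwe} ($\sigma\geq\eta^{-\kappa}$ for a constant $\kappa$ of our choosing) while making $\sigma^2/T^2$ bounded, which is what keeps the Gaussian damping factor a constant. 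We also confirm that the time and advantage exponents match after renaming $\epsilon$ to $\alpha$.
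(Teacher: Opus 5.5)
Your proposal is correct and follows the paper's proof essentially step for step. You use the same samplewise thresholding of the cLWE label at $T/2$, the same application of \cref{prop:lower-bound-of-the-probability-mass-of-alternating-bands-in-halfspace} with $u=s$ and $k=0$ followed by division by $1/2$, the same independence argument for the null case, and the same choice $\sigma=\eta^{-\kappa}$ with a constant $\kappa$ depending on $\gamma$, which bounds $e^{-2\pi^2\sigma^2/T^2}$ below by a constant (the paper fixes $\kappa=\lceil 1/(2\gamma)+1/2\rceil$, giving $\sigma\le C'T$ rather than $\sigma/T\to 0$, and does not need your optional $O(T)$ upper-bound remark).
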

            \begin{proof}
                We give an efficient method taking as input samples $(\rvector{x}, \rscalar{y})\sim\distr '$ either from the alternative or the null hypothesis of $\text{LWE}(\cscalar{d}|\bigO{\cscalar{\eta}|\alpha|}|, \gaussian|d|, \sphere|d-1|,$ $\gaussian[0][\cscalar{\sigma}|2|], \modulo<T>)$ as in \lemmaref{lma:sub-exponential-hardness-of-clwe}, and generating samples from another distribution $\distr$ with the following properties: if $\distr '$ is from the alternative (resp.\ null) hypothesis of the LWE problem, then the resulting distribution $\distr$ will satisfy the alternative (resp.\ null) hypothesis requirement of the theorem statement.
            
                The reduction process is as follows: for $(\rvector{x}, \rscalar{y '})$ sampled from a instance $\distr'$ of the problem LWE$(\cscalar{d}|\bigO{\cscalar{\eta}|\alpha|}|,\gaussian|d|, \sphere|d-1|, \gaussian[0][\sigma^2],\modulo<T>)$ as in \lemmaref{lma:sub-exponential-hardness-of-clwe}, we output $(\rvector{x}, \rscalar{y})\sim\distr$, such that
                \begin{equation*}
                    \rscalar{y}=
                    \begin{cases}
                        +1,\quad \text{if }\rscalar{y'}\leq T/2
                        \\
                        -1,\quad \text{otherwise}.
                    \end{cases}
                \end{equation*}
                We argue that $\distr$ from the Alternative Hypothesis satisfies the stated properties with $\cvector{v} = \cvector{s}$:
            
                \textbf{For the alternative hypothesis case}, by the definition of $\rscalar{y'}$ (c.f.~\definitionref{def:learning-with-errors}), we refer to \propositionref{prop:lower-bound-of-the-probability-mass-of-alternating-bands-in-halfspace} with $\cvector{u} = \cvector{s}$ and $k = 0$ to obtain
                \begin{align}
                     \prob<(\rvector{x}, \rscalar{y})\sim\distr>{\rscalar{y} = 1\cap\innerprod{\cvector{s}}{\rvector{x}} \geq 0} \geq& \frac{1}{2}\prob<\rvector{x}\sim\distr<\rvector{x}>>{\innerprod{\cvector{s}}{\rvector{x}}\geq 0} + \frac{T\cscalar{e}|- 2\pi^2\sigma^2/T^2|}{4\pi^2}
                     \notag
                     \\
                     =& \frac{1}{4} + \frac{T\cscalar{e}|- 2\pi^2\sigma^2/T^2|}{4\pi^2}\label{eq:lower-bound-of-the-joint-event-for-one-sided-homogeneous-halfspaces}
                \end{align}
                where the last equation holds because $\rvector{x}\sim\gaussian|d|$ and $\cvector{s}\in\sphere|d-1|$ implies $\innerprod{\cvector{s}}{\rvector{x}}\sim\gaussian$.
    
                Recall that \lemmaref{lma:sub-exponential-hardness-of-clwe} states that the LWE problem is hard for any fixed constant $\kappa\in \naturals$ and $\sigma\geq \cscalar{\eta}|-\kappa|$. Given the constant $\gamma\in \real<+>$ in this theorem, we fix $\kappa = \ceil{1/2\gamma + 1/2}$. Then, by \lemmaref{lma:sub-exponential-hardness-of-clwe}, the LWE problem is hard for $\sigma = \cscalar{\eta}|-\kappa|\leq 1/(\sqrt{\cscalar{\eta}\log d}) = O(T)$. Thus, substituting $\sigma = \cscalar{\eta}|-\kappa|$ back in Equation \eqref{eq:lower-bound-of-the-joint-event-for-one-sided-homogeneous-halfspaces} with $T = 1/C'\sqrt{\eta\log d}$ (c.f.~\lemmaref{lma:sub-exponential-hardness-of-clwe}) gives $\prob<\distr>{\rscalar{y} = 1\cap\innerprod{\cvector{s}}{\rvector{x}} \geq 0} = \frac{1}{4} + \bigM{\frac{1}{\sqrt{\eta\log d}}}$, which implies $\prob<\distr>{\rscalar{y} = 1\cond\innerprod{\cvector{s}}{\rvector{x}} \geq 0} = \frac{1}{2} + \bigM{\frac{1}{\sqrt{\eta\log d}}}$.
                
                \textbf{For the null hypothesis}, because $\rscalar{y}$ is generated independently at random, half of the data points will be mislabeled regardless of which halfspace we choose. Therefore, for all $\cvector{u}\in\sphere|d-1|$, it holds that $\prob<\distr>{\rscalar{y} = 1\cond\innerprod{\cvector{u}}{\rvector{x}} \geq 0} = 1/2$.
            
                Finally, we verify the time complexity and the distinguishing advantage for learning one-sided homogeneous halfspaces. By \lemmaref{lma:sub-exponential-hardness-of-clwe}, the problem LWE$(\cscalar{d}|\bigO{\cscalar{\eta}|\alpha|}|, \gaussian|d|, \sphere|d-1|, \gaussian[0][\sigma^2], \modulo<T>)$ cannot be solved in $\cscalar{d}|\bigO{\cscalar{\eta}|\alpha|}|$ time with $\cscalar{d}|-\bigO{\cscalar{\eta}|\alpha|}|$ advantage with any choice of $\alpha\in(0,1)$, $\sigma\geq \cscalar{\eta}|-\kappa|$ (where $\kappa\in \naturals$ is a constant) and $T = 1/C'\sqrt{\cscalar{\eta}\log d}$ (where $C' > 0$ is a sufficiently large constant), under \cref{asp:sub-exponential-assumption-of-lwe}. Therefore, under the same assumption, no algorithm can solve the decision problem as in the statement in $\cscalar{d}|\bigO{\cscalar{\eta}|\alpha|}|$ time with $\cscalar{d}|-\bigO{\cscalar{\eta}|\alpha|}|$ advantage.
            \end{proof}
            
            Thus, any algorithm distinguishing the two cases in \cref{thm:indistinguishability-of-one-sided-homogeneous-halfspaces} could distinguish the cLWE hypotheses, which contradicts the theorem. We state the main hardness result for \cref{prob:positive-reliable-learning} as follows, but postpone the formal proof to \cref{sec:omitted-proofs-of-positive-reliable-learning} due to the page limit.
            
            \begin{corollary}[\cref{thm:hardness-of-positive-reliable-learning-over-homogeneous-halfspace}]
            \label{cor:hardness-of-positive-reliable-learning-over-homogeneous-halfspace}
                Given \cref{asp:sub-exponential-assumption-of-lwe}, for any constants $\alpha\in(0, 1), \gamma > 1/2$, and any $c/\sqrt{d\log d}\leq \epsilon\leq 1/\log^\gamma d$ where $c$ is a sufficiently small constant, there is no $(1, \epsilon)$-approximation algorithm that solves \cref{prob:positive-reliable-learning} and runs in time $\cscalar{d}|\bigO{1/\sbr{\cscalar{\epsilon}\sqrt{\log d}}|2\alpha|}|$.
            \end{corollary}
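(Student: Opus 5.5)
We argue by contradiction. Suppose some algorithm $\mathcal{A}$ is a $(1,\epsilon)$-approximation for \cref{prob:positive-reliable-learning} and runs in time $\cscalar{d}|\bigO{1/\sbr{\cscalar{\epsilon}\sqrt{\log d}}|2\alpha|}|$. We will use it to distinguish the two hypotheses of \cref{thm:indistinguishability-of-one-sided-homogeneous-halfspaces}, for a suitable choice of $\eta$.

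\textbf{Choice of $\eta$.} Set $\eta$ so that $\epsilon$ is a small constant fraction of the alternative-hypothesis advantage. Concretely, take $\eta = c_0/(\epsilon^2\log d)$ with $c_0$ chosen so that $4\epsilon \le \Theta(1/\sqrt{\eta\log d})$. The two assumptions on $\epsilon$ then translate into the range required by the theorem:
\begin{itemize}
\item $\epsilon\le 1/\log^\gamma d$ gives $\eta \gtrsim \log^{2\gamma-1} d = \log^{\gamma'} d$ with $\gamma' = 2\gamma-1>0$; this is where $\gamma>1/2$ is needed.
\item $\epsilon\ge c/\sqrt{d\log d}$ gives $\eta\le c''d$, which is the sufficiently-small-constant upper bound.
\end{itemize}
With this choice, $\cscalar{d}|\bigO{\cscalar{\eta}|\alpha|}| = \cscalar{d}|\bigO{1/\sbr{\cscalar{\epsilon}\sqrt{\log d}}|2\alpha|}|$, so the running-time budgets match.

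\textbf{The distinguisher.} Given samples from $\distr$, we proceed in three steps.
\begin{enumerate}
\item Run $\mathcal{A}$ with confidence $\delta=1/3$ and obtain a vector $\cvector{v}$.
\item Draw $N=\bigO{1/\epsilon^2}$ fresh samples, which is polynomial in $\eta\log d$ and hence within budget. Use them to estimate $\prob{\rscalar{y}=1 \cond \innerprod{\cvector{v}}{\rvector{x}}>0}$ to accuracy $\epsilon$ via Hoeffding. The conditioning event has probability exactly $1/2$, so about half the samples are usable.
\item Output ``alternative'' if the estimate exceeds $1/2+2\epsilon$, and ``null'' otherwise.
\end{enumerate}

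\textbf{Correctness under each hypothesis.}
\begin{itemize}
\item Under the null hypothesis, every $\cvector{v}$ has conditional probability exactly $1/2$. The estimate therefore stays below $1/2+2\epsilon$ with high probability, and we output ``null''.
\item Under the alternative hypothesis, $1-\opt\ge 1/2+\bigM{1/\sqrt{\eta\log d}}\ge 1/2+4\epsilon$. With probability at least $2/3$, $\mathcal{A}$ returns $\cvector{v}$ with value at least $1-\opt-\epsilon\ge 1/2+3\epsilon$, and the estimate then exceeds $1/2+2\epsilon$.
\end{itemize}
This gives constant advantage, which is certainly at least $\cscalar{d}|-\bigO{\cscalar{\eta}|\alpha|}|$, contradicting the theorem.

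\textbf{Main obstacle.} The delicate point is the bookkeeping of constants. The hidden constant in the $\Omega$ of the alternative advantage, namely $T e^{-2\pi^2\sigma^2/T^2}/(4\pi^2)$ with $\sigma\le T$, must dominate $4\epsilon$ after $\eta$ is fixed. At the same time, the constraint $\log^{\gamma'}d\le\eta$ must still hold. Both are handled by absorbing constants into $c_0$ and into the sufficiently small constant $c$.

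A minor additional point is that $\mathcal{A}$ is only guaranteed to behave correctly on inputs satisfying the problem's promise. Both hypotheses have Gaussian $\rvector{x}$-marginals, so the promise holds. If $\mathcal{A}$ exceeds its time bound, we halt it and output ``null''.
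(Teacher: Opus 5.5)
Your proposal is correct and follows essentially the same route as the paper: tie $\epsilon$ to $\Theta(1/\sqrt{\eta\log d})$, run the $(1,\epsilon)$-approximation algorithm, check the returned halfspace on fresh samples via Hoeffding, and thereby distinguish the two cases of \cref{thm:indistinguishability-of-one-sided-homogeneous-halfspaces} in time $d^{O(\eta^\alpha)}$. Your version is more explicit than the paper's about the parameter translation ($\eta \ge c_0\log^{2\gamma-1} d$, which is why $\gamma>1/2$ is needed), the threshold, and the constant advantage; one small correction is that the constraint $\eta\le c'' d$ is secured by choosing $c_0 \le c'' c^2$, not by $c$ being small.
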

            
        \subsection{Agnostically Learning Homogeneous Halfspaces}
        \label{sec:agnostically-learning-homogeneous-halfspaces}
            In this section, we establish the cryptographic hardness of agnostically learning homogeneous halfspaces under Gaussian marginals (c.f. \cref{prob:agnostic-learning}). The details are similar, and we defer the complete proofs to \cref{sec:omitted-proofs-of-agnostic-learning}.

            \begin{theorem}[Indistinguishability of Homogeneous Halfspaces]
            \label{thm:indistinguishability-of-agnostic-learning}
                Under \cref{asp:sub-exponential-assumption-of-lwe}, for any $d\in\naturals$, any constants $\alpha\in(0, 1), \gamma\in\real<+>$, and any $\log^{\gamma} d\leq \cscalar{\eta} \leq \cscalar{c}d$ where $\cscalar{c}$ is a sufficiently small constant, there is no algorithm that runs in time $\cscalar{d}|\bigO{\cscalar{\eta}|\alpha|}|$ and distinguishes between the following two cases of a joint distribution $\distr$ of $(\rvector{x}, \rscalar{y})$ supported on $\real|d|\times\binarydomain$ such that the marginal $\distr<\rvector{x}> = \gaussian|d|$, with $\cscalar{d}|-\bigO{\cscalar{\eta}|\alpha|} |$ advantage:
                \begin{itemize}
                    \item[\textbf{Alternative Hypothesis}:]$\exists\cvector{v}\in \sphere|d-1|$ such that $\prob<(\rvector{x}, \rscalar{y})\sim\distr>{\rscalar{y} = \funcsbr{\sgn}[\innerprod{\cvector{v}}{\rvector{x}}]} = \frac{1}{2} + \bigM{\frac{1}{\sqrt{\eta\log d}}}$.
                    \item[\textbf{Null Hypothesis}:]$\forall\cvector{u}\in \sphere|d-1|$, it holds that $\prob<(\rvector{x}, \rscalar{y})\sim\distr>{\rscalar{y} = \funcsbr{\sgn}[\innerprod{\cvector{u}}{\rvector{x}}]} = \frac{1}{2}$.
                \end{itemize}
            \end{theorem}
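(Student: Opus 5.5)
The plan is to use the same reduction as in the proof of \cref{thm:indistinguishability-of-one-sided-homogeneous-halfspaces}. We take samples $(\rvector{x}, \rscalar{y'})$ from an instance of LWE$(\cscalar{d}|\bigO{\cscalar{\eta}|\alpha|}|, \gaussian|d|, \sphere|d-1|, \gaussian[0][\sigma^2], \modulo<T>)$ as in \lemmaref{lma:sub-exponential-hardness-of-clwe}, and output $(\rvector{x}, \rscalar{y})$ with $\rscalar{y} = +1$ if $\rscalar{y'}\leq T/2$ and $\rscalar{y}=-1$ otherwise. The map is sample-by-sample and efficient. Any distinguisher for the two cases in the statement therefore yields a distinguisher for cLWE with the same running time and advantage, and \lemmaref{lma:sub-exponential-hardness-of-clwe} rules this out. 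It remains to verify the two hypotheses.

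\textbf{Null hypothesis.} Here $\rscalar{y'}$ is uniform on $[0,T)$ and independent of $\rvector{x}$, so $\rscalar{y}$ is an unbiased sign independent of $\rvector{x}$. Hence for every $\cvector{u}$,
\[
\prob{\rscalar{y} = \funcsbr{\sgn}[\innerprod{\cvector{u}}{\rvector{x}}]} = \tfrac12 .
\]

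\textbf{Alternative hypothesis.} Take $\cvector{v}=\cvector{s}$ and decompose the agreement as
\[
\prob{\rscalar{y}=\sgn\innerprod{\cvector{s}}{\rvector{x}}} = \prob{\rscalar{y}=1\cap\innerprod{\cvector{s}}{\rvector{x}}\geq 0} + \prob{\rscalar{y}=-1\cap\innerprod{\cvector{s}}{\rvector{x}}< 0}.
\]
Applying \propositionref{prop:lower-bound-of-the-probability-mass-of-alternating-bands-in-halfspace} with $k=0$ bounds the first term below by $\frac14 + \frac{T e^{-2\pi^2\sigma^2/T^2}}{4\pi^2}$.

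For the second term we use a symmetry argument. The map $(\rvector{x},\rscalar{z})\mapsto(-\rvector{x},-\rscalar{z})$ preserves the joint distribution and sends $t=\innerprod{\cvector{s}}{\rvector{x}}+\rscalar{z}$ to $-t$. The set $\subsets<T>=\union<i\in\integer>[iT,iT+T/2)$ is mapped by negation to $\union<i\in\integer>(iT-T/2, iT]$, which differs from $\subsets<T>$ only on a measure-zero set of endpoints. Therefore $\rscalar{y}(-\rvector{x},-\rscalar{z}) = \rscalar{y}(\rvector{x},\rscalar{z})$ almost surely, which gives
\[
\prob{\rscalar{y}=-1\cap\innerprod{\cvector{s}}{\rvector{x}}<0} = \prob{\rscalar{y}=-1\cap\innerprod{\cvector{s}}{\rvector{x}}>0} = \tfrac12 - \prob{\rscalar{y}=1\cap\innerprod{\cvector{s}}{\rvector{x}}\geq 0}.
\]
Substituting back, the $\rscalar{y}=-1$ term cancels the $\rscalar{y}=1$ term and the agreement collapses to exactly $\frac12$. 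So this decomposition gives nothing. The sign-symmetry makes the homogeneous halfspace $\sgn\innerprod{\cvector{s}}{\rvector{x}}$ uncorrelated with $\rscalar{y}$, and a different labeling rule is needed.

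To avoid this cancellation, I will instead label by the \emph{absolute} phase, so that $\rscalar{y}$ is odd under the reflection. Concretely, I set $\rscalar{y}=+1$ iff $\rscalar{y'}\in[0,T/2)$. Here $\rscalar{y'}=\modulo<T>[\innerprod{\cvector{s}}{\rvector{x}}+\rscalar{z}]$, so the label equals $\funcsbr{\sgn}[\sin(2\pi(\innerprod{\cvector{s}}{\rvector{x}}+\rscalar{z})/T)]$, which is an \emph{odd} function of $t$ (a.e.). With this form,
\[
\expect{\rscalar{y}\,\sgn\innerprod{\cvector{s}}{\rvector{x}}} = 2\int_0^\infty \expect<\rscalar{z}>{f(\omega(\rscalar{z}+t))}\phi(t)\,dt .
\]
This holds because both factors are odd in $(t,\rscalar{z})$, so the integral over $t<0$ equals the one over $t>0$. \lemmaref{lma:integral-of-noisy-square-wave-on-gaussian-shift} with $\alpha\to0$ and $\omega=2\pi/T$ then bounds the right-hand side below by $\frac{T e^{-2\pi^2\sigma^2/T^2}}{\pi^2}$. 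Since agreement equals $\frac12 + \frac12\expect{\rscalar{y}\,\sgn\innerprod{\cvector{s}}{\rvector{x}}}$, this gives agreement $\frac12+\Omega(T)$.

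With $\kappa=\ceil{1/2\gamma+1/2}$ and $\sigma=\eta^{-\kappa}=O(T)$, as in the previous proof, the exponential factor is $\Omega(1)$. Since $T=1/C'\sqrt{\eta\log d}$, the advantage is $\Omega(1/\sqrt{\eta\log d})$, as required.

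The main obstacle is exactly this parity issue: I must check carefully whether the square-wave label is odd or even under $t\mapsto -t$. The analysis above shows it is odd up to null sets, so the argument should go through. The only delicate point is handling the $\alpha>0$ requirement of the lemma, which I would address by a limit or by rederiving \eqref{eq:gaussian-expectation-of-shifted-square-wave-func} at $\alpha=0$, where $\int_0^\infty\sin(\omega(2k-1)t)\phi(t)\,dt\geq0$.
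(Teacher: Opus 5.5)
Your final argument is correct and is essentially the paper's proof. The oddness $y(-\mathbf{x},-\mathrm{z})=-y(\mathbf{x},\mathrm{z})$ a.s.\ that you use at the end is exactly the paper's symmetry step. That step gives $\Pr[y=-1\cap\langle \mathbf{s},\mathbf{x}\rangle<0]=\Pr[y=1\cap\langle \mathbf{s},\mathbf{x}\rangle\geq 0]$, so agreement equals $2\Pr[y=1\cap\langle \mathbf{s},\mathbf{x}\rangle\geq 0]\geq \frac12+\frac{Te^{-2\pi^2\sigma^2/T^2}}{2\pi^2}$ by the Proposition with $k=0$. Your correlation form $\frac12+\frac12\mathbb{E}[y\,\mathrm{sgn}\langle \mathbf{s},\mathbf{x}\rangle]$, folded onto $t>0$, is the same computation and gives the same constant.

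The middle (``cancellation'') paragraph, however, is false and should be deleted, not worked around. Negation does not map $S_T=\bigcup_i[iT,iT+T/2)$ to itself. If $\mathrm{mod}_T(t)\in(0,T/2)$, then $\mathrm{mod}_T(-t)=T-\mathrm{mod}_T(t)\in(T/2,T)$. Hence $-S_T=\bigcup_j((j-1)T+T/2,\,jT]$ is the \emph{complement} of $S_T$ up to endpoints.

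Consequences:
\begin{itemize}
\item The original label is already odd under $(\mathbf{x},\mathrm{z})\mapsto(-\mathbf{x},-\mathrm{z})$. The symmetry gives $\Pr[y=-1\cap\langle \mathbf{s},\mathbf{x}\rangle<0]=\Pr[y=1\cap\langle \mathbf{s},\mathbf{x}\rangle>0]$, not $\frac12-\Pr[y=1\cap\langle \mathbf{s},\mathbf{x}\rangle\geq 0]$, so nothing collapses to $\frac12$.
\item Your ``absolute phase'' rule $y'\in[0,T/2)$ is not a new reduction. It differs from the original rule $y'\leq T/2$ only on the null event $\{y'=T/2\}$.
\item As written, the proposal asserts that the same label is both even and odd. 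Only oddness is true, and it is what your final computation uses.
\end{itemize}

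Lastly, the $\alpha=0$ concern needs no limiting argument. The paper itself applies the square-wave lemma at $\alpha=kT=0$ through the Proposition with $k=0$, and the partial Gaussian Fourier bound it relies on is stated for $\alpha\geq 0$.
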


            The proof idea is generally the same as that of \cref{thm:indistinguishability-of-one-sided-homogeneous-halfspaces}, except that we now need to deal with both \emph{sides} of the defining hyperplane, i.e., 
            \begin{equation*}
                \prob<(\rvector{x}, \rscalar{y})\sim\distr>{\rscalar{y} = \funcsbr{\sgn}[\innerprod{\cvector{v}}{\rvector{x}}]} = \prob<(\rvector{x}, \rscalar{y})\sim\distr>{\rscalar{y} = 1\cap\innerprod{\cvector{v}}{\rvector{x}} \geq 0} + \prob<(\rvector{x}, \rscalar{y})\sim\distr>{\rscalar{y} = -1\cap\innerprod{\cvector{v}}{\rvector{x}} < 0},
            \end{equation*}
            where $\prob{\rscalar{y} = 1\cap\innerprod{\cvector{v}}{\rvector{x}} \geq 0}$ is the objective function in positive-reliable learning (c.f. \cref{prob:positive-reliable-learning}). It turns out that we can easily prove $\prob{\rscalar{y} = 1\cap\innerprod{\cvector{v}}{\rvector{x}} \geq 0} = \prob{\rscalar{y} = -1\cap\innerprod{\cvector{v}}{\rvector{x}} < 0}$ by the symmetry of isotropic Gaussian distributions. The rest of the proof follows as in \cref{thm:indistinguishability-of-one-sided-homogeneous-halfspaces}.

            Notice that, even though the target objective in \cref{thm:indistinguishability-of-agnostic-learning} is different from the classification loss, $\prob<\distr>{\rscalar{y} \neq \funcsbr{\sgn}[\innerprod{\cvector{v}}{\rvector{x}}]}$, defined in \cref{prob:agnostic-learning}, they can be easily transformed to each other by negating the labels $\rscalar{y}\sim\distr<\rscalar{y}>$. Therefore, \cref{thm:indistinguishability-of-agnostic-learning} similarly implies the hardness of \cref{prob:agnostic-learning}.

            \begin{corollary}[\cref{thm:hardness-of-learning-homogeneous-halfspace-under-gaussian}]
            \label{cor:hardness-of-learning-homogeneous-halfspace-under-gaussian}
                Given \cref{asp:sub-exponential-assumption-of-lwe}, for any constants $\alpha\in(0, 1), \gamma > 1/2$, and any $c/\sqrt{d\log d}\leq \epsilon\leq 1/\log^\gamma d$ where $c$ is a sufficiently small constant, there is no $(1, \epsilon)$-approximation algorithm that solve \cref{prob:agnostic-learning} and runs in time $\cscalar{d}|\bigO{1/\sbr{\cscalar{\epsilon}\sqrt{\log d}}|2\alpha|}|$.
            \end{corollary}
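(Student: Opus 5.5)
The plan is to reduce the distinguishing problem of \cref{thm:indistinguishability-of-agnostic-learning} to any $(1,\epsilon)$-approximation algorithm $\mathcal{A}$ for \cref{prob:agnostic-learning}. Given $\epsilon$ in the stated range, I would set $\eta$ so that the advantage gap in the alternative hypothesis, $\Omega(1/\sqrt{\eta\log d})$, is a constant factor larger than $\epsilon$. Concretely, I take $\eta = \Theta(1/(\epsilon^2\log d))$ with a small enough hidden constant, so that $\epsilon \le \frac{1}{4}\cdot c_0/\sqrt{\eta\log d}$, where $c_0$ is the constant implicit in the $\Omega$. Then I check that $\eta$ lies in the admissible window $\log^{\gamma'} d\le\eta\le c d$. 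The constraint $\epsilon\le 1/\log^\gamma d$ gives $\eta\gtrsim \log^{2\gamma-1} d$, and $\gamma>1/2$ makes $\gamma'=2\gamma-1>0$ a valid constant. The constraint $\epsilon\ge c/\sqrt{d\log d}$ gives $\eta\lesssim d$. With this choice the running-time bound $d^{O(\eta^\alpha)}$ becomes $d^{O(1/(\epsilon\sqrt{\log d})^{2\alpha})}$, which matches the statement.

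Next comes the distinguisher. Given samples from $\distr$ in either hypothesis, I first negate the labels. Under the alternative, the negated distribution then has $\opt\le 1/2 - c_0/\sqrt{\eta\log d}$, since $\prob{\rscalar{y}\ne\sgn\innerprod{\cvector v}{\rvector x}}$ for the flipped labels equals the agreement probability for the original ones, and symmetry lets us use $-\cvector v$. Under the null, every halfspace has error exactly $1/2$. I run $\mathcal{A}$ with confidence $\delta=1/3$ to get a vector $\cvector{w}$. I then draw a fresh set of $O(\eta\log d)$ samples (polynomial) and estimate $\prob{\rscalar y\ne\sgn\innerprod{\cvector w}{\rvector x}}$ to additive accuracy $\frac14 c_0/\sqrt{\eta\log d}$ with constant failure probability, using Hoeffding. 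The test outputs ``alternative'' iff the estimate is below $1/2 - \frac12 c_0/\sqrt{\eta\log d}$.

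For correctness: under the alternative, with probability at least $2/3-o(1)$ the true error of $\cvector w$ is at most $\opt+\epsilon\le 1/2-\frac34 c_0/\sqrt{\eta\log d}$, so the test accepts. Under the null, the true error is exactly $1/2$ for any $\cvector w$, even one chosen adaptively, because the labels are independent of $\rvector x$. Hence the test rejects with high probability. This yields constant distinguishing advantage, which exceeds $d^{-O(\eta^\alpha)}$, in time $d^{O(\eta^\alpha)}$. Since the sample count $m$ of $\mathcal{A}$ is bounded by its running time, the reduction stays within the sample budget $d^{O(\eta^\alpha)}$ of the cLWE instance. This contradicts \cref{thm:indistinguishability-of-agnostic-learning}.

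I expect the main obstacle to be the parameter bookkeeping rather than any conceptual step. Three things have to hold at once: $\eta$ must fall in the allowed range so that \lemmaref{lma:sub-exponential-hardness-of-clwe} applies; the constant $c_0$ must survive the choice of $\sigma=\eta^{-\kappa}$, so that the factor $e^{-2\pi^2\sigma^2/T^2}$ in \propositionref{prop:lower-bound-of-the-probability-mass-of-alternating-bands-in-halfspace} is bounded below by a constant; and the final running-time exponent, expressed in terms of $\epsilon$, must come out in the claimed form with the same $\alpha$.
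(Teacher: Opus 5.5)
Your proposal is correct and is essentially the paper's argument: you set $\epsilon=\Theta(1/\sqrt{\eta\log d})$, run the approximation algorithm, certify the returned halfspace's error with a Hoeffding estimate on fresh samples, and contradict the indistinguishability theorem for agnostic learning. Your version is more explicit than the paper's about the threshold test, the sample budget, and the admissible range of $\eta$ (to absorb the constant in $\eta\gtrsim\log^{2\gamma-1}d$, take $\gamma'$ slightly below $2\gamma-1$). The initial label negation is unnecessary, since error equals one minus agreement and this already gives $\mathrm{opt}\le 1/2-\Omega(1/\sqrt{\eta\log d})$ under the alternative, but it is harmless because you compensate by passing to $-v$.
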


        \subsection{Fairness Auditing Over Homogeneous Halfspace Subgroups}
        \label{sec:fairness-auditing-over-homogeneous-halfspace-subgroups}
            Finally, we establish the cryptographic hardness of auditing SPSF (c.f.~\definitionref{def:statistical-parity-subgroup-fairness}) over homogeneous halfspace subgroups under Gaussian marginals (c.f.~\cref{prob:fairness-auditing}). Different from the previous results, we will show strong inapproximability of the auditing problem in multiplicative form, in addition to the additive form. Due to their similarity to \cref{thm:indistinguishability-of-one-sided-homogeneous-halfspaces} and \corollaryref{cor:hardness-of-positive-reliable-learning-over-homogeneous-halfspace}, the formal proofs will be deferred to \cref{sec:omitted-proofs-of-fairness-auditing}.

            \begin{theorem}[Indistinguishability of Homogeneous Halfspace Subgroups]
            \label{thm:indistinguishability-of-fairness-auditing}
                Under \cref{asp:sub-exponential-assumption-of-lwe}, for any $d\in\naturals$, any constants $\alpha\in(0, 1), \gamma\in\real<+>$, and any $\log^{\gamma} d\leq \cscalar{\eta} \leq \cscalar{c}d$ where $\cscalar{c}$ is a sufficiently small constant, there is no algorithm that runs in time $\cscalar{d}|\bigO{\cscalar{\eta}|\alpha|}|$ and distinguishes between the following two cases of a joint distribution $\distr$ of $(\rvector{x}, \rscalar{y})$ supported on $\real|d|\times\binarydomain$ such that the marginal $\distr<\rvector{x}> = \gaussian|d|$, with $\cscalar{d}|-\bigO{\cscalar{\eta}|\alpha|} |$ advantage:
                \begin{itemize}
                    \item[\textbf{Alternative Hypothesis}:]$\exists\cvector{v}\in \sphere|d-1|$ such that $\funcsbr{u}<\distr>[\cvector{v}] = \bigM{1/\sqrt{\eta\log d}}$.
                    \item[\textbf{Null Hypothesis}:]$\forall\cvector{u}\in \sphere|d-1|$, it holds that $\funcsbr{u}<\distr>[\cvector{u}] = 0$.
                \end{itemize}
            \end{theorem}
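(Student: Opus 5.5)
We reuse the reduction from the proof of \cref{thm:indistinguishability-of-one-sided-homogeneous-halfspaces} without change. Samples $(\rvector{x}, \rscalar{y'})$ from an instance of LWE$(\cscalar{d}|\bigO{\cscalar{\eta}|\alpha|}|, \gaussian|d|, \sphere|d-1|, \gaussian[0][\sigma^2], \modulo<T>)$ as in \lemmaref{lma:sub-exponential-hardness-of-clwe} are mapped to $(\rvector{x}, \rscalar{y})$, where $\rscalar{y} = +1$ if $\rscalar{y'}\leq T/2$ and $\rscalar{y} = -1$ otherwise. The map is computable in constant time per sample, so a distinguisher for the two cases of the present theorem running in time $\cscalar{d}|\bigO{\cscalar{\eta}|\alpha|}|$ with $\cscalar{d}|-\bigO{\cscalar{\eta}|\alpha|}|$ advantage would distinguish the two cLWE hypotheses with the same resources, contradicting \lemmaref{lma:sub-exponential-hardness-of-clwe}. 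It therefore suffices to check that the image distributions satisfy the two stated properties.

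\textbf{Null hypothesis.} Here $\rscalar{y'}$ is uniform on $[0, T)$ and independent of $\rvector{x}$. Hence $\rscalar{y}$ is a fair $\pm1$ coin independent of $\rvector{x}$, and $\prob{\rscalar{y} = 1} = 1/2$. For every $\cvector{u}$, independence gives $\prob{\rscalar{y}=1\cap\innerprod{\cvector{u}}{\rvector{x}}>0} = \prob{\innerprod{\cvector{u}}{\rvector{x}}>0}\prob{\rscalar{y}=1}$, so $\funcsbr{u}<\distr>[\cvector{u}] = 0$ exactly.

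\textbf{Alternative hypothesis.} Take $\cvector{v} = \cvector{s}$. We first show $\prob{\rscalar{y}=1} = 1/2$ exactly. Negating $(\rvector{x}, \rscalar{z})$ preserves its law and sends $\innerprod{\cvector{s}}{\rvector{x}}+\rscalar{z} =: w$ to $-w$. Up to the measure-zero boundary points, $w\in\subsets<T>$ if and only if $-w\in\subsets*<T>$, since $-[iT, iT+T/2) = (-iT-T/2, -iT]$ lies in $\subsets*<T>$ apart from endpoints. Hence $\prob{w\in\subsets<T>} = \prob{w\in\subsets*<T>} = 1/2$. Because $\prob{\innerprod{\cvector{s}}{\rvector{x}}>0} = 1/2$ and boundary events have measure zero, the unfairness is
\[
\funcsbr{u}<\distr>[\cvector{s}] = \abs{\tfrac14 - \prob{\rscalar{y}=1\cap\innerprod{\cvector{s}}{\rvector{x}}\geq 0}}.
\]
\propositionref{prop:lower-bound-of-the-probability-mass-of-alternating-bands-in-halfspace} with $\cvector{u}=\cvector{s}$ and $k=0$ gives $\prob{\rscalar{y}=1\cap\innerprod{\cvector{s}}{\rvector{x}}\geq 0}\geq \frac14 + \frac{T e^{-2\pi^2\sigma^2/T^2}}{4\pi^2}$. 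The proposition requires $T\leq 1/\pi$, which holds since $T = 1/C'\sqrt{\eta\log d}$ is small. We now choose the noise level as in the earlier proof: fix $\kappa = \ceil{1/2\gamma+1/2}$ and $\sigma = \cscalar{\eta}|-\kappa|$. Because $\eta\geq\log^\gamma d$, this choice gives $\sigma/T = \bigO{1}$, and in fact $\sigma/T = o(1)$. The exponential factor is then bounded below by a constant, and $\funcsbr{u}<\distr>[\cvector{s}] = \bigM{T} = \bigM{1/\sqrt{\eta\log d}}$.

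The upper-bound side of the claimed $\Theta$ is immediate: the proof of \lemmaref{lma:integral-of-noisy-square-wave-on-gaussian-shift} expresses the advantage as a Fourier series of size $\bigO{1/\omega} = \bigO{T}$. The only step needing real care is the exact identity $\prob{\rscalar{y}=1}=1/2$ under the alternative, because the unfairness measure subtracts the baseline $\prob{S}\prob{\rscalar{y}=1}$ and any bias in $\prob{\rscalar{y}=1}$ would otherwise have to be controlled; the symmetry argument above handles it. The remaining time and advantage bookkeeping is identical to the end of the proof of \cref{thm:indistinguishability-of-one-sided-homogeneous-halfspaces}.
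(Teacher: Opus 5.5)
Your proof is correct and follows the paper's argument essentially step for step. It uses the same label-thresholding reduction, the same negation-symmetry argument giving $\Pr[y=1]=1/2$ under the alternative, the Proposition with $k=0$, and the same choice $\kappa=\lceil 1/(2\gamma)+1/2\rceil$, $\sigma=\eta^{-\kappa}$.

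One small slip: that choice only guarantees $\sigma/T\le C'$, not $\sigma/T=o(1)$ (e.g.\ $\gamma=1$, $\eta=\log d$ gives $\sigma/T=C'$ exactly), but $O(1)$ is all you need. Also, your upper-bound remark is superfluous, since the statement only asserts $\Omega(1/\sqrt{\eta\log d})$.
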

            
            Our proof idea is, again, to bridge the level of unfairness $\funcsbr{u}<\distr>[\cvector{u}]$ with the objective function of learning one-sided homogeneous halfspaces, i.e., $\prob{\rscalar{y} = 1\cond\innerprod{\cvector{v}}{\rvector{x}} \geq 0}$. Recall in the proof of \cref{thm:indistinguishability-of-one-sided-homogeneous-halfspaces}, we have established a lower bound on $\prob{\rscalar{y} = 1\cond\innerprod{\cvector{s}}{\rvector{x}} \geq 0}$, which is a key term of $\funcsbr{u}<\distr>[\cvector{s}]$ (c.f. \definitionref{def:statistical-parity-subgroup-fairness}). We know that $\prob<\distr<\rvector{x}>>{\innerprod{\cvector{s}}{\rvector{x}} > 0} = 1/2$ because $\distr<\rvector{x}> = \gaussian|d|$. Furthermore, \citet{hsu2024distribution} showed $\prob<\distr<\rscalar{y}>>{\rscalar{y} = 1} = 1/2$. Observe now that $2\funcsbr{u}<\distr>[\cvector{s}] = \abs{1/2 - \prob{\rscalar{y} = 1\cond \innerprod{\cvector{s}}{\rvector{x}} > 0}}$. The rest of the proof then follows \cref{thm:indistinguishability-of-one-sided-homogeneous-halfspaces}.

            Similarly, \cref{thm:indistinguishability-of-fairness-auditing} implies the inapproximability of large unfairness over homogeneous halfspace subgroups in both additive and multiplicative forms (see \cref{sec:omitted-proofs-of-fairness-auditing} for proofs):

            \begin{corollary}[\cref{thm:hardness-of-auditing-halfspace-under-gaussian-in-additive-form}]
            \label{cor:hardness-of-auditing-halfspace-under-gaussian-in-additive-form}
                Given \cref{asp:sub-exponential-assumption-of-lwe}, for any constants $\alpha\in(0, 1), \gamma > 1/2$, and any $c/\sqrt{d\log d}\leq \epsilon\leq 1/\log^\gamma d$ where $c$ is a sufficiently small constant, there is no $(1, \epsilon)$-approximation algorithm that solves \cref{prob:fairness-auditing} and runs in time $\cscalar{d}|\bigO{1/\sbr{\cscalar{\epsilon}\sqrt{\log d}}|2\alpha|}|$.
            \end{corollary}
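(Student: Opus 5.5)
We prove the corollary by contradiction, reducing to \cref{thm:indistinguishability-of-fairness-auditing} in the same way \corollaryref{cor:hardness-of-positive-reliable-learning-over-homogeneous-halfspace} follows from \cref{thm:indistinguishability-of-one-sided-homogeneous-halfspaces}.

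\textbf{Setup.} Suppose an algorithm $\mathcal{A}$ is a $(1,\epsilon)$-approximation for \cref{prob:fairness-auditing} and runs in time $d^{O(1/(\epsilon\sqrt{\log d})^{2\alpha})}$. Choose $\eta$ so that the alternative-hypothesis unfairness in \cref{thm:indistinguishability-of-fairness-auditing} equals a constant multiple of $\epsilon$. Concretely, set $\eta = \Theta(1/(\epsilon^2\log d))$, so that $1/\sqrt{\eta\log d} = \Theta(\epsilon)$. Fix the hidden constant so that $\opt \ge 3\epsilon$ in the alternative case.

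\textbf{Parameter range.} We must check that $\log^{\gamma'} d \le \eta \le c d$ for some constant $\gamma'>0$.
\begin{itemize}
\item The condition $\epsilon \ge c/\sqrt{d\log d}$ gives $\eta \le c' d$.
\item The condition $\epsilon \le 1/\log^\gamma d$ with $\gamma>1/2$ gives $\eta \gtrsim \log^{2\gamma-1} d$, so $\gamma' = 2\gamma - 1 > 0$ works.
\end{itemize}
This is exactly why the corollary requires $\gamma > 1/2$. The running time matches as well: $\eta^\alpha = \Theta((\epsilon\sqrt{\log d})^{-2\alpha})$, so $\mathcal{A}$ runs in time $d^{O(\eta^\alpha)}$.

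\textbf{The distinguisher.} Draw samples and run $\mathcal{A}$ with constant failure probability $\delta$ to get $\cvector{v}$. Then draw fresh samples and empirically estimate $u_\distr(\cvector{v})$ to accuracy $\epsilon/2$. This needs $O(\log(1/\delta)/\epsilon^2)$ samples, which is polynomial, since the three probabilities in \definitionref{def:statistical-parity-subgroup-fairness} are each estimable by Hoeffding.
\begin{itemize}
\item Under the alternative, $u_\distr(\cvector{v}) \ge \opt - \epsilon \ge 2\epsilon$.
\item Under the null, every $\cvector{u}$ has $u_\distr(\cvector{u}) = 0$.
\end{itemize}
Thresholding the estimate at $\epsilon$ therefore distinguishes the two cases with constant advantage, which exceeds the required $d^{-O(\eta^\alpha)}$. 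This contradicts \cref{thm:indistinguishability-of-fairness-auditing}.

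\textbf{Main obstacle.} The delicate step is the bookkeeping: matching the constants so that $\opt - \epsilon$ stays bounded away from the estimation error, and confirming that the total time remains $d^{O(\eta^\alpha)}$. The extra sampling and estimation are $\mathrm{poly}(d,1/\epsilon)$, and this is absorbed because $\eta^\alpha \ge 1$ for large $d$. We may also need to rescale $\alpha$ slightly to absorb constant factors in the exponent, which is allowed since $\alpha \in (0,1)$ is an arbitrary constant.
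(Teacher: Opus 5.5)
Your proposal is correct and follows essentially the same argument as the paper: run the auditor on the reduced cLWE samples, Hoeffding-estimate the unfairness of the returned subgroup, threshold, and choose $\eta \asymp 1/(\epsilon^2\log d)$ so that the $d^{O(\eta^\alpha)}$ bound of the indistinguishability theorem becomes $d^{O(1/(\epsilon\sqrt{\log d})^{2\alpha})}$. Your explicit check of the admissible range of $\eta$ (and of why $\gamma>1/2$ is needed) is more careful than the paper's, with one wrinkle: $\eta\le c'd$ holds only with $c'\propto 1/c^2$, so near the lower end of the $\epsilon$-range you may need to cap $\eta$ at the theorem's small constant times $d$, which is harmless because the two running-time bounds then differ only by a constant factor in the exponent.
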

            
            \begin{corollary}[\cref{thm:hardness-of-auditing-halfspace-under-gaussian-in-multiplicative-form}]\label{cor:hardness-of-auditing-halfspace-under-gaussian-in-multiplicative-form}
                Given \cref{asp:sub-exponential-assumption-of-lwe}, there is no $1/\poly[d]$-approximation algorithm that solves \cref{prob:fairness-auditing} and runs in time $\poly[d]$.
            \end{corollary}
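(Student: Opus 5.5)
We argue by contradiction from \cref{thm:indistinguishability-of-fairness-auditing}. In the null case every subgroup has zero unfairness, so $\opt = 0$. In the alternative case $\opt \geq \funcsbr{u}<\distr>[\cvector{s}] = \bigM{1/\sqrt{\eta\log d}}$. A $1/\poly[d]$-approximation algorithm $\mathcal{A}$ returns, with probability at least $1-\delta$, a $\cvector{v}$ with $\funcsbr{u}<\distr>[\cvector{v}] \geq \opt/p(d)$ for some polynomial $p$. In the alternative case this is at least $\bigM{1/(p(d)\sqrt{\eta\log d})}$. In the null case every output has unfairness exactly $0$. So we need to test whether the returned $\cvector{v}$ has unfairness $0$ or at least $\tau := c_0/(p(d)\sqrt{\eta\log d})$.

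\textbf{The distinguisher.} Run $\mathcal{A}$ on $\poly[d]$ samples to get $\cvector{v}$. Then draw $N$ fresh samples and compute the empirical estimate
\begin{equation*}
\hat u = \abs{\hat{P}(\innerprod{\cvector{v}}{\rvector{x}}>0)\,\hat{P}(\rscalar{y}=1) - \hat{P}(\rscalar{y}=1\cap\innerprod{\cvector{v}}{\rvector{x}}>0)}.
\end{equation*}
Each of the three empirical frequencies concentrates to within $\tau/8$ once $N = \bigO{\log(1/\delta)/\tau^2} = \poly[d]$, by Hoeffding's inequality. Since $\cvector{v}$ is fixed before the fresh samples are drawn, this estimate is valid for it. We could even replace $\hat{P}(\innerprod{\cvector{v}}{\rvector{x}}>0)$ by its exact value $1/2$, since the marginal is Gaussian. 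The distinguisher outputs ``alternative'' exactly when $\hat u \geq \tau/2$. This gives constant distinguishing advantage, say at least $1/2-2\delta$, which is far larger than $\cscalar{d}|-\bigO{\cscalar{\eta}|\alpha|}|$.

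\textbf{Parameters.} Take $\eta = \log^{\gamma} d$ with some fixed $\gamma>0$, e.g.\ $\gamma=1$. This lies in the allowed range $[\log^\gamma d, cd]$ for large $d$. Then $\tau = 1/\poly[d]$, and the total running time is $\poly[d]$ for $\mathcal{A}$ plus $\poly[d]$ samples for the test. This is below $\cscalar{d}|\bigO{\cscalar{\eta}|\alpha|}| = \cscalar{d}|\bigO{\log^{\alpha\gamma} d}|$, which is superpolynomial. The distinguisher therefore contradicts \cref{thm:indistinguishability-of-fairness-auditing} under \cref{asp:sub-exponential-assumption-of-lwe}.

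\textbf{Main obstacle.} The delicate point is making the multiplicative guarantee usable despite sampling error, that is, keeping the threshold $\tau$ inverse-polynomial. This holds because the gap $\opt$ in the alternative case is only polylogarithmically small, and the null value is exactly $0$ rather than merely small. A secondary point is that the reduction in \cref{thm:indistinguishability-of-fairness-auditing} is sample-by-sample. Hence feeding $\mathcal{A}$ polynomially many reduced samples costs only polynomially many cLWE samples, well within the $\cscalar{d}|\bigO{\cscalar{\eta}|\alpha|}|$ sample budget of \lemmaref{lma:sub-exponential-hardness-of-clwe}.
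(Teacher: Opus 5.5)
Your proposal is correct and takes essentially the same route as the paper. Both run the $1/\mathrm{poly}(d)$-approximate auditor on the reduced samples. Both then use a Hoeffding estimate on fresh samples to test whether the returned subgroup's unfairness exceeds an inverse-polynomial threshold (alternative case, where $\mathrm{opt} = \Omega(1/\sqrt{\eta\log d})$) or is exactly zero (null case), which contradicts \cref{thm:indistinguishability-of-fairness-auditing}. You are more explicit than the paper about the plug-in error analysis, the choice $\eta = \log^{\gamma} d$, and the sample budget, but the argument is the same.
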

            
    \section{Future Work}
        One of the most natural questions arising from our findings is how much harder it is to identify general halfspaces than homogeneous halfspaces, especially for agnostic learning and positive-reliable learning over halfspaces. Superficially, the lack of upper bounds for learning general halfspaces seems to imply that the problem is more challenging. However, we fall short of proving a stronger lower bound for these problems through reduction from the cLWE problem. Looking at the problem of positive-reliable halfspace learning, one might conjecture that a stronger lower bound could exist if we impose a larger threshold lower bound on the halfspaces, e.g., no polynomial-time $(1, 1/\cscalar{\log}|(1/2 + c)/(1 + \tau)| d)$-approximation algorithm exists for halfspaces with threshold $t \geq \tau \geq 0$. This conjecture is based on the observation that learning halfspaces of smaller population size requires the ability to distinguish between halfspaces with smaller advantage. Another interesting direction is to investigate whether any multiplicative lower bound exists for learning halfspaces or one-sided halfspaces. Given the fact that no lower bound of multiplicative form is known for these problems, any non-trivial multiplicative bound could be a breakthrough for these problems.

% Acknowledgments---Will not appear in anonymized version
% \acks{We thank a bunch of people and funding agency.}

\bibliographystyle{abbrvnat}
\bibliography{refs}

\newpage
\appendix

% \crefalias{section}{appendix} % uncomment if you are using cleveref
    \input{appendix}

\end{document}

%% file: project-commands.tex
\usepackage{algorithm}
\usepackage{algorithmic}
\newcommand\algorithmicprocedure{\textbf{procedure}}
\newcommand{\algorithmicendprocedure}{\algorithmicend\ \algorithmicprocedure}
\makeatletter
\newcommand\PROCEDURE[3][default]{%
  \ALC@it
  \algorithmicprocedure\ \textsc{#2}(#3)%
  \ALC@com{#1}%
  \begin{ALC@prc}%
}
\newcommand\ENDPROCEDURE{%
  \end{ALC@prc}%
  \ifthenelse{\boolean{ALC@noend}}{}{%
    \ALC@it\algorithmicendprocedure
  }%
}
\newenvironment{ALC@prc}{\begin{ALC@g}}{\end{ALC@g}}
\makeatother

\input{basic-commands}

\NewDocumentCommand{\opt}{d<>d()d||}{
    \ScriptedMathSymbol{\mathrm{opt}}<#1>(#2)|#3|
}

\NewDocumentCommand{\conceptclass}{d<>d()d||}{
    \ScriptedMathSymbol{\mathcal{C}}<#1>(#2)|#3|
}

\NewDocumentCommand{\hypothesisclass}{d<>d()d||}{
    \ScriptedMathSymbol{\mathcal{H}}<#1>(#2)|#3|
}
\NewDocumentCommand{\halfspace}{sd<>d()d||o}{
    \IfBooleanTF{#1}{
        \funcsbr{h}<#2>(#3)|#4c|[#5]
    }{
        \funcsbr{h}<#2>(#3)|#4|[#5]
    }
}
\NewDocumentCommand{\parameterset}{md<>d()d||}{
    \ScriptedMathSymbol{\mathcal{#1}}<#2>(#3)|#4|
}

\NewDocumentCommand{\subsets}{sd<>d()D||{}O{}}{
    \IfBooleanTF{#1}{
        \IfNoValueOrEmptyTF{#5}{
            \ScriptedMathSymbol{S}<#2>(#3)|{#4}c|
        }{
            \ScriptedMathSymbol{S}<#2>(#3)|{#4}c|\sbr{#5}
        }
    }{
        \IfNoValueOrEmptyTF{#5}{
            \ScriptedMathSymbol{S}<#2>(#3)|#4|
        }{
            \ScriptedMathSymbol{S}<#2>(#3)|#4|\sbr{#5}
        }
    }
}

\NewDocumentCommand{\hypothesis}{sd<>d()o}{
    \IfBooleanTF{#1}{
        \funcsbr{h}<#2>(#3)|c|[#4]
    }{
        \funcsbr{h}<#2>(#3)[#4]
    }
}

\NewDocumentCommand{\algo}{d<>d()o}{
    \funcsbr{\mathcal{A}}<#1>(#2)[#3]
}

\NewDocumentCommand{\errorregion}{d<>d()d||o}{
    \funclbr{\mathcal{E}}<#1>(#2)|#3|[#4]
}

\NewDocumentCommand{\loss}{sd<>d()o}{
    \IfBooleanTF{#1}{
        \funcsbr*{\mathcal{L}}<#2>(#3)[#4]
    }{
        \funcsbr{\mathcal{L}}<#2>(#3)[#4]
    }
}

\NewDocumentCommand{\err}{sd<>d()o}{
    \IfBooleanTF{#1}{
        \funcsbr*{\mathrm{err}}<#2>(#3)[#4]
    }{
        \funcsbr{\mathrm{err}}<#2>(#3)[#4]
    }
}

\NewDocumentCommand{\surrloss}{sD<>{\sigma}d()D||{}o}{
    \IfBooleanTF{#1}{
        \funcsbr*{S'}<#2>(#3)|#4|[#5]
    }{
        \funcsbr*{S}<#2>(#3)|#4|[#5]
    }
}

\newcommand*{\propositionrefname}{Proposition}

\newcommand*{\lemmarefname}{Lemma}

\newcommand*{\corollaryrefname}{Corollary}

\newcommand*{\definitionrefname}{Definition}

\newcommand*{\propositionref}[1]{%
  {\propositionrefname}~{\ref{#1}}}
\newcommand*{\lemmaref}[1]{%
  {\lemmarefname}~{\ref{#1}}}

\newcommand*{\corollaryref}[1]{%
  {\corollaryrefname}~{\ref{#1}}}
\newcommand*{\definitionref}[1]{%
  {\definitionrefname}~{\ref{#1}}}

%% file: basic-commands.tex
\usepackage[utf8]{inputenc} % allow utf-8 input
\usepackage[T1]{fontenc}    % use 8-bit T1 fonts
\usepackage{url}            % simple URL typesetting
\usepackage{paralist}
\usepackage{booktabs}       % professional-quality tables
\usepackage{amsfonts}       % blackboard math symbols
\usepackage{amsthm}
\usepackage{amsmath}
\usepackage{bbm}
\usepackage{amssymb}
\usepackage{bm}
\usepackage{thmtools}
\usepackage{dsfont}
\usepackage{nicefrac}       % compact symbols for 1/2, etc.
\usepackage{microtype}      % microtypography
\usepackage{lipsum}
\usepackage{fancyhdr}       % header
\usepackage{graphicx,adjustbox}       % graphics
\usepackage{xparse}         % commands
\usepackage{xcolor}
\usepackage{paracol}
\usepackage{hyperref}
\usepackage[capitalize,noabbrev]{cleveref}
\ExplSyntaxOn
\DeclareExpandableDocumentCommand{\IfNoValueOrEmptyTF}{mmm}
 {
  \IfNoValueTF{#1}{#2}
   {
    \tl_if_empty:nTF {#1} {#2} {#3}
   }
 }
\ExplSyntaxOff

\NewDocumentCommand{\oversetwo}{om}{
    \IfNoValueOrEmptyTF{#1}{
        {#2}
    }{
        \overset{(\mathrm{#1})}{#2}
    }
}
\NewDocumentCommand{\cleq}{o}{
    \oversetwo[#1]{\leq}
}
\NewDocumentCommand{\cgeq}{o}{
    \oversetwo[#1]{\geq}
}
\NewDocumentCommand{\cl}{o}{
    \oversetwo[#1]{<}
}
\NewDocumentCommand{\cg}{o}{
    \oversetwo[#1]{>}
}
\NewDocumentCommand{\ceq}{o}{
    \oversetwo[#1]{=}
}

\NewDocumentCommand{\ScriptedMathSymbol}{md<>d()d||}{
    \IfNoValueOrEmptyTF{#2}{
        \IfNoValueOrEmptyTF{#3}{
            \IfNoValueOrEmptyTF{#4}{
                #1
            }{
                #1^{#4}
            }
        }{
            \IfNoValueOrEmptyTF{#4}{
                #1^{(#3)}
            }{
                #1^{(#3)#4}
            }
        }
    }{
        \IfNoValueOrEmptyTF{#3}{
            \IfNoValueOrEmptyTF{#4}{
                #1_{#2}
            }{
                #1_{#2}^{#4}
            }
        }{
            \IfNoValueOrEmptyTF{#4}{
                #1_{#2}^{(#3)}
            }{
                #1_{#2}^{(#3)#4}
            }
        }
    }
}
\NewDocumentCommand{\rvector}{smd<>d()d||}{
    \IfBooleanTF{#1}{
        {\ScriptedMathSymbol{\bar{\mathbf{#2}}}<#3>(#4)|#5|}
    }{
        {\ScriptedMathSymbol{\mathbf{#2}}<#3>(#4)|#5|}
    }
}

\NewDocumentCommand{\cvector}{smd<>d()d||}{
    \IfBooleanTF{#1}{
        {\ScriptedMathSymbol{\bar{\bm{#2}}}<#3>(#4)|#5|}
    }{
        {\ScriptedMathSymbol{\bm{#2}}<#3>(#4)|#5|}
    }
}
\NewDocumentCommand{\cmatrix}{md<>d()d||}{
    \cvector{#1}<#2>(#3)|#4|
}
\NewDocumentCommand{\rmatrix}{md<>d()d||}{
    \rvector{\mathbf{#1}}<#2>(#3)|#4|
}

\NewDocumentCommand{\rvectorseq}{smO{1}d<>O{1}d()d||}{
    \IfBooleanTF{#1}{
        \IfNoValueOrEmptyTF{#4}{
            \rvector*{#2}(#5)|#7|, \ldots, \rvector*{#2}(#6)|#7|
        }{
            \rvector*{#2}<#3>(#6)|#7|, \ldots, \rvector*{#2}<#4>(#6)|#7|
        }  
    }{
        \IfNoValueOrEmptyTF{#4}{
            \rvector{#2}(#5)|#7|, \ldots, \rvector{#2}(#6)|#7|
        }{
            \rvector{#2}<#3>(#6)|#7|, \ldots, \rvector{#2}<#4>(#6)|#7|
        }  
    }   
}
\NewDocumentCommand{\cvectorseq}{smO{1}d<>O{1}d()d||}{
    \IfBooleanTF{#1}{
        \IfNoValueOrEmptyTF{#4}{
            \cvector*{#2}(#5)|#7|, \ldots, \cvector*{#2}(#6)|#7|
        }{
            \cvector*{#2}<#3>(#6)|#7|, \ldots, \cvector*{#2}<#4>(#6)|#7|
        }  
    }{
        \IfNoValueOrEmptyTF{#4}{
            \cvector{#2}(#5)|#7|, \ldots, \cvector{#2}(#6)|#7|
        }{
            \cvector{#2}<#3>(#6)|#7|, \ldots, \cvector{#2}<#4>(#6)|#7|
        }  
    }   
}

\NewDocumentCommand{\rscalar}{smd<>d()d||}{
    \IfBooleanTF{#1}{
        {\ScriptedMathSymbol{\bar{\mathrm{#2}}}<#3>(#4)|#5|}
    }{
        {\ScriptedMathSymbol{\mathrm{#2}}<#3>(#4)|#5|}
    }
}
\NewDocumentCommand{\cscalar}{smd<>d()d||}{
    \IfBooleanTF{#1}{
        {\ScriptedMathSymbol{\bar{#2}}<#3>(#4)|#5|}
    }{
        {\ScriptedMathSymbol{#2}<#3>(#4)|#5|}
    }
}
\NewDocumentCommand{\rscalarseq}{smO{1}d<>O{1}d()d||}{
    \IfBooleanTF{#1}{
        \IfNoValueOrEmptyTF{#4}{
            \rscalar*{#2}(#5)|#7|, \ldots, \rscalar*{#2}(#6)|#7|
        }{
            \rscalar*{#2}<#3>(#6)|#7|, \ldots, \rscalar*{#2}<#4>(#6)|#7|
        }  
    }{
        \IfNoValueOrEmptyTF{#4}{
            \rscalar{#2}(#5)|#7|, \ldots, \rscalar{#2}(#6)|#7|
        }{
            \rscalar{#2}<#3>(#6)|#7|, \ldots, \rscalar{#2}<#4>(#6)|#7|
        }  
    }   
}
\NewDocumentCommand{\cscalarseq}{smO{1}d<>O{1}d()d||}{
    \IfBooleanTF{#1}{
        \IfNoValueOrEmptyTF{#4}{
            \cscalar*{#2}(#5)|#7|, \ldots, \cscalar*{#2}(#6)|#7|
        }{
            \cscalar*{#2}<#3>(#6)|#7|, \ldots, \cscalar*{#2}<#4>(#6)|#7|
        }  
    }{
        \IfNoValueOrEmptyTF{#4}{
            \cscalar{#2}(#5)|#7|, \ldots, \cscalar{#2}(#6)|#7|
        }{
            \cscalar{#2}<#3>(#6)|#7|, \ldots, \cscalar{#2}<#4>(#6)|#7|
        }  
    }   
}
\NewDocumentCommand{\sbr}{smd||}{
    \IfBooleanTF{#1}{
        \ensuremath \ScriptedMathSymbol{( #2 )}|#3|
    }{
        \ensuremath \ScriptedMathSymbol{\left( #2 \right)}|#3|
    }
}
\NewDocumentCommand{\mbr}{smd||}{
    \IfBooleanTF{#1}{
        \ensuremath \ScriptedMathSymbol{[ #2 ]}|#3|
    }{
        \ensuremath \ScriptedMathSymbol{\left[ #2 \right]}|#3|
    }
}
\NewDocumentCommand{\lbr}{smd||}{
    \IfBooleanTF{#1}{
        \ensuremath \ScriptedMathSymbol{\{ #2 \}}|#3|
    }{
        \ensuremath \ScriptedMathSymbol{\left\{ #2 \right\}}|#3|
    }
}

\NewDocumentCommand{\funcsbr}{smd<>d()d||o}{
    \IfBooleanTF{#1}{
        \IfNoValueOrEmptyTF{#6}{
            \ScriptedMathSymbol{#2}<#3>(#4)|#5|
        }{
            \ScriptedMathSymbol{#2}<#3>(#4)|#5|\sbr{#6}
        }
    }{
        \IfNoValueOrEmptyTF{#6}{
            \ScriptedMathSymbol{#2}<#3>(#4)|#5|
        }{
            \ScriptedMathSymbol{#2}<#3>(#4)|#5|\sbr*{#6}
        }
    }
}
\NewDocumentCommand{\funcmbr}{smd<>d()d||o}{
    \IfBooleanTF{#1}{
        \IfNoValueOrEmptyTF{#6}{
            \ScriptedMathSymbol{#2}<#3>(#4)|#5|
        }{
            \ScriptedMathSymbol{#2}<#3>(#4)|#5|\mbr{#6}
        }
    }{
        \IfNoValueOrEmptyTF{#6}{
            \ScriptedMathSymbol{#2}<#3>(#4)|#5|
        }{
            \ScriptedMathSymbol{#2}<#3>(#4)|#5|\mbr*{#6}
        }
    }
}
\NewDocumentCommand{\funclbr}{smd<>d()d||o}{
    \IfBooleanTF{#1}{
        \IfNoValueOrEmptyTF{#6}{
            \ScriptedMathSymbol{#2}<#3>(#4)|#5|
        }{
            \ScriptedMathSymbol{#2}<#3>(#4)|#5|\lbr{#6}
        }
    }{
        \IfNoValueOrEmptyTF{#6}{
            \ScriptedMathSymbol{#2}<#3>(#4)|#5|
        }{
            \ScriptedMathSymbol{#2}<#3>(#4)|#5|\lbr*{#6}
        }
    }
}
\NewDocumentCommand{\derivative}{d<>d||}{
    \ScriptedMathSymbol{\nabla}<#1>|#2|
}
\NewDocumentCommand{\funcclass}{md<>d()d||}{
    \ScriptedMathSymbol{\mathcal{#1}}<#2>(#3)|#4|
}

\DeclareMathOperator*{\sgn}{\mathrm{sgn}}

\NewDocumentCommand{\real}{d<>d||}{
    \ScriptedMathSymbol{\mathbb{R}}<#1>|#2|
}
\NewDocumentCommand{\complex}{d<>d||}{
    \ScriptedMathSymbol{\mathbb{C}}<#1>|#2|
}
\NewDocumentCommand{\naturals}{d<>d||}{
    \ScriptedMathSymbol{\mathbb{N}}<#1>|#2|
}
\NewDocumentCommand{\integer}{d<>d||}{
    \ScriptedMathSymbol{\mathbb{Z}}<#1>|#2|
}
\NewDocumentCommand{\sphere}{d<>d||}{
    \ScriptedMathSymbol{\mathbb{S}}<#1>|#2|
}
\NewDocumentCommand\booldomain{d||}{
    \ScriptedMathSymbol{\lbr{0 ,1}}|#1|
}
\NewDocumentCommand{\binarydomain}{d||}{
    \ScriptedMathSymbol{\lbr{\pm 1}}|#1|
}

\NewDocumentCommand{\union}{sd<>d||}{
    \IfBooleanTF{#1}{
        \IfNoValueOrEmptyTF{#2}{
            \ScriptedMathSymbol{\ \bigcup\ }<#2>|#3|
        }{
            \ScriptedMathSymbol{\bigcup}<#2>|#3|
        }     
    }{
        \IfNoValueOrEmptyTF{#2}{
            \ScriptedMathSymbol{\ \cup\ }<#2>|#3|
        }{
            \ScriptedMathSymbol{\cup}<#2>|#3|
        }
    }
}

\NewDocumentCommand{\isect}{sd<>d||}{
    \IfBooleanTF{#1}{
        \IfNoValueOrEmptyTF{#2}{
            \ScriptedMathSymbol{\ \bigcap\ }<#2>|#3|
        }{
            \ScriptedMathSymbol{\bigcap}<#2>|#3|
        }     
    }{
        \IfNoValueOrEmptyTF{#2}{
            \ScriptedMathSymbol{\ \cap\ }<#2>|#3|
        }{
            \ScriptedMathSymbol{\cap}<#2>|#3|
        }
    }
}

\NewDocumentCommand{\por}{sd<>d||}{
    \IfBooleanTF{#1}{
        \ScriptedMathSymbol{\bigvee}<#2>|#3|
    }{
        \ScriptedMathSymbol{\vee}<#2>|#3|
    }
}
\NewDocumentCommand{\pand}{sd<>d||}{
    \IfBooleanTF{#1}{
        \ScriptedMathSymbol{\bigwedge}<#2>|#3|
    }{
        \ScriptedMathSymbol{\wedge}<#2>|#3|
    }
}
\NewDocumentCommand{\cond}{s}{
    \IfBooleanTF{#1}{
        |
    }{
        {\ |\ }
    }
}

\NewDocumentCommand{\ceil}{sm}{
    \IfBooleanTF{#1}{
        \ensuremath \lceil {#2} \rceil
    }{
        \ensuremath \left\lceil {#2} \right\rceil
    }
}
\NewDocumentCommand{\floor}{sm}{
    \IfBooleanTF{#1}{
        \ensuremath \lfloor {#2} \rfloor
    }{
        \ensuremath \left\lfloor {#2} \right\rfloor
    }
}

\NewDocumentCommand{\modulo}{d<>o}{
    \funcsbr{\mathrm{mod}}<#1>[#2]
}

\NewDocumentCommand{\image}{so}{
    \IfBooleanTF{#1}{
        \funcsbr*{\mathrm{Im}}[#2]
    }{
        \funcsbr{\mathrm{Im}}[#2]
    }
}
\DeclareMathOperator*{\E}{\mathbb{E}}
\NewDocumentCommand{\expect}{sd<>d||m}{
    \IfBooleanTF{#1}{
        \funcmbr*{\E}<#2>|#3|[#4]
    }{
        \funcmbr{\E}<#2>|#3|[#4]
    }
}
\NewDocumentCommand{\prob}{sd<>d||m}{
    \IfBooleanTF{#1}{
        \funclbr*{\Pr}<#2>|#3|[#4]
    }{
        \funclbr{\Pr}<#2>|#3|[#4]
    }
}
\NewDocumentCommand{\distr}{sd<>d()d||}{
    \IfBooleanTF{#1}{
        \ScriptedMathSymbol{\hat{\mathcal{D}}}<#2>(#3)|#4|
    }{
        \ScriptedMathSymbol{\mathcal{D}}<#2>(#3)|#4|
    }
}
\NewDocumentCommand{\gaussian}{d||O{0}O{1}}{
    \ScriptedMathSymbol{\mathcal{N}}|#1|\sbr*{#2,#3}
}

\NewDocumentCommand{\norm}{smd<>d||}{
    \IfBooleanTF{#1}{
        \ScriptedMathSymbol{\left\lVert {#2} \right\rVert}<#3>|#4|
    }{
        \ScriptedMathSymbol{\lVert {#2} \rVert}<#3>|#4|
    }
}
\NewDocumentCommand{\pnorm}{smd<>d||}{
    \IfBooleanTF{#1}{
        \ScriptedMathSymbol{\hat{\lVert} {#2} \hat{\rVert}}<#3>|#4|
    }{
        \ScriptedMathSymbol{\hat{\lVert} {#2} \hat{\rVert}}<#3>|#4|
    }
}
\NewDocumentCommand{\abs}{smd||}{
    \IfBooleanTF{#1}{
        \ScriptedMathSymbol{| {#2} |}|#3|
    }{
        \ScriptedMathSymbol{\left| {#2} \right|}|#3|
    }
}
\NewDocumentCommand{\innerprod}{smmd||}{
    \IfBooleanTF{#1}{
        \ScriptedMathSymbol{\langle #2, #3 \rangle}|#4|
    }{
        \ScriptedMathSymbol{\left\langle #2, #3 \right\rangle}|#4|
    }
}

\NewDocumentCommand{\identity}{d<>d||}{
    \ScriptedMathSymbol{I}<#1>|#2|
}

\NewDocumentCommand{\indicator}{sd<>o}{
    \IfBooleanTF{#1}{
        \funcmbr*{\mathds{1}}<#2>[#3]
    }{
        \funcmbr{\mathds{1}}<#2>[#3]
    }
}

\NewDocumentCommand{\bigO}{sm}{
    \IfBooleanTF{#1}{
        \ScriptedMathSymbol{\tilde{O}}\sbr*{#2}
    }{
        \ScriptedMathSymbol{O}\sbr*{#2}
    }
}
\NewDocumentCommand{\bigM}{sm}{
    \IfBooleanTF{#1}{
        \ScriptedMathSymbol{\tilde{\Omega}}\sbr{#2}
    }{
        \ScriptedMathSymbol{\Omega}\sbr{#2}
    }
}
\NewDocumentCommand{\poly}{so}{
    \IfBooleanTF{#1}{
        \funcsbr*{\mathrm{poly}}[#2]
    }{
        \funcsbr{\mathrm{poly}}[#2]
    }
}
\NewDocumentCommand{\repclass}{md<>d()d||}{
    \ScriptedMathSymbol{\mathcal{#1}}<#2>(#3)|#4|
}

%% file: figures.tex
\usepackage{tikz}
\usepackage{tikz-3dplot-circleofsphere}
\usetikzlibrary{intersections,calc,decorations.pathmorphing,decorations.markings,decorations.pathreplacing}

\definecolor{mypink}{HTML}{D691A4}
\definecolor{darkblue}{HTML}{343F65}
\definecolor{myblue}{HTML}{78B9D2}
\definecolor{myorange}{HTML}{FCAF7C}

\NewDocumentCommand{\drawerrorregion}{sD(){1}}{
    \begin{tikzpicture}[scale=#2]

        \coordinate (O) at (0,0);
        \node at (O) [below right] {O};
    
        \draw[->, thick] (-3,0) -- (3,0);
        \node at (3,0) [right] {$\cvector{e}<1>$};
    
        \draw[dashed, gray, thick, opacity = 0.75] (-0.75,-0.75) -- (2.12,2.12); % lines restricted within the circle
        \node at (2.12,2.12) [above right] {};
    
        \fill[myorange, opacity=0.5] (0,0) -- (3,0) arc[start angle=0, end angle=45, radius=3] -- cycle;
        \fill[myblue, opacity=0.5] (0,0) -- (2.12,2.12) arc[start angle=45, end angle=180, radius=3] -- cycle;
    
        \draw[->, line width=0.5mm] (O) -- (0,3) node[above] {$\cvector{w}$};
    
        \draw[->, line width=0.5mm] (O) -- (-2.12,2.12) node[above left] {$\cvector{v}$};
    
        \draw (0.3,0) -- (0.3,0.3) -- (0,0.3);
    
        \draw (-0.2121,0.2121) -- (-0.424,0) -- (-0.2121,-0.2121);
    
        \draw[-, thick] (0.6,0) arc (0:45:0.6);
        \node at (0.6,0.35) [right] {$\theta(\cvector{v},\cvector{w})$};

        \IfBooleanTF{#1}{
            \def\const{1.5}
            \draw[myorange, thick, opacity = 0.75] ({sqrt(8)},1) -- ({\const*sqrt(8)},\const);
            \node at ({\const*sqrt(8)},\const) [above right] {$I$};
        }{}
    
    \end{tikzpicture}
}

\NewDocumentCommand{\drawclosedcontour}{mmm}{
    \begin{tikzpicture}[scale=#1]

        \coordinate (O) at (0,0);
        \node at (O) [below right] {O};
    
        \draw[->] (-1,0) -- (5,0) node[right] {$t$};
        \draw[->] (0,-3) -- (0,1) node[above] {$i$};

        \draw[myorange, ->, line width=0.5mm] (1,-2) -- (4,-2);
        \node at (4,-2) [below] {$(R, -#2)$};
        \draw[myblue, ->, thick] (4,-2) -- (4,0);
        \node at (4,0) [above] {$(R, 0)$};
        \draw[myblue, ->, thick] (4,0) -- (1,0);
        \node at (1,0) [above] {$(#3, 0)$};
        \draw[myblue, ->, thick] (1,0) -- (1,-2);
        \node at (1,-2) [below] {$(#3, -#2)$};
    
    \end{tikzpicture}
}

%% file: appendix.tex
\section{Omitted Proofs of \cref{sec:positive-reliable-learning-of-homogeneous-halfspaces}}
\label{sec:omitted-proofs-of-positive-reliable-learning}
    \begin{corollary}[\corollaryref{cor:hardness-of-positive-reliable-learning-over-homogeneous-halfspace}]
    \label{cor:hardness-of-positive-reliable-learning-over-homogeneous-halfspace-appendix}
        Given \cref{asp:sub-exponential-assumption-of-lwe}, for any constants $\alpha\in(0, 1), \gamma > 1/2$, and any $c/\sqrt{d\log d}\leq \epsilon\leq 1/\log^\gamma d$ where $c$ is a sufficiently small constant, there is no $(1, \epsilon)$-approximation algorithm that solves \cref{prob:positive-reliable-learning} and runs in time $\cscalar{d}|\bigO{1/\sbr{\cscalar{\epsilon}\sqrt{\log d}}|2\alpha|}|$.
    \end{corollary}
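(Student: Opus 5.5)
We argue by contradiction: a $(1,\epsilon)$-approximation algorithm $\mathcal{A}$ for \cref{prob:positive-reliable-learning} running in time $d^{O(1/(\epsilon\sqrt{\log d})^{2\alpha})}$ would yield a distinguisher for the two cases of \cref{thm:indistinguishability-of-one-sided-homogeneous-halfspaces}. The parameter $\eta$ is chosen to match $\epsilon$. In the alternative case the advantage of the best halfspace is $\Omega(1/\sqrt{\eta\log d})$, with hidden constant $c_0$. We set $\eta = (c_0/(4\epsilon\sqrt{\log d}))^2$, i.e.\ $\eta = \Theta(1/(\epsilon^2\log d))$, so that $\epsilon$ is at most a quarter of that advantage.

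We must check $\log^{\gamma'} d\leq\eta\leq c d$ for some constant $\gamma'>0$. The lower bound $\epsilon\geq c/\sqrt{d\log d}$ gives $\eta=O(d/c^2)$, which is at most $cd$ for suitable small constants. The upper bound $\epsilon\leq 1/\log^\gamma d$ gives $\eta=\Omega(\log^{2\gamma-1} d)$. Since $\gamma>1/2$, we may take $\gamma'=2\gamma-1>0$. This is exactly where $\gamma>1/2$ is needed. Then $\eta^{\alpha} = \Theta(1/(\epsilon\sqrt{\log d})^{2\alpha})$, so $\mathcal{A}$ runs within the $d^{O(\eta^\alpha)}$ budget.

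\textbf{The distinguisher.} Given samples from $\mathcal{D}$, run $\mathcal{A}$ with confidence $\delta=1/4$ to get $\bm{v}$. Then draw $N=O(\eta\log d)$ fresh samples, which is polynomial, and estimate $p(\bm{v})=\Pr\{\mathrm{y}=1\mid\langle \bm{v},\mathbf{x}\rangle>0\}$ to accuracy $\epsilon$. Since $\Pr\{\langle\bm v,\mathbf x\rangle>0\}=1/2$, about half the samples fall in the halfspace, so Hoeffding suffices. Output ``alternative'' iff the estimate is at least $1/2+2\epsilon$.

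\textbf{Correctness.}
\begin{itemize}
\item[] \emph{Alternative case.} Here $\mathrm{opt}\leq 1/2-4\epsilon$, so with probability $3/4$ we get $p(\bm v)\geq 1-\mathrm{opt}-\epsilon\geq 1/2+3\epsilon$. The estimate then exceeds $1/2+2\epsilon$ with high probability.
\item[] \emph{Null case.} Every $\bm u$ has $p(\bm u)=1/2$, whatever $\mathcal{A}$ outputs. The estimate is below $1/2+2\epsilon$ with high probability.
\end{itemize}
This gives a constant distinguishing advantage, hence $d^{-O(\eta^\alpha)}$ advantage, within time $d^{O(\eta^\alpha)}$. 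That contradicts \cref{thm:indistinguishability-of-one-sided-homogeneous-halfspaces}.

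\textbf{Points needing care.}
\begin{itemize}
\item[] \emph{Matching conventions.} \cref{prob:positive-reliable-learning} uses $>0$ while the theorem uses $\geq 0$. The boundary has Gaussian measure zero, so the two agree.
\item[] \emph{Hidden constants.} $\eta$ must be tuned so that the $\Omega(\cdot)$ in the alternative case dominates $4\epsilon$. This is the main obstacle, since that constant comes from \cref{prop:lower-bound-of-the-probability-mass-of-alternating-bands-in-halfspace} with $\sigma=\eta^{-\kappa}\ll T$. It is resolved by absorbing it into the constant defining $\eta$ and into the $O(\cdot)$ in the exponent of the running time.
\end{itemize}
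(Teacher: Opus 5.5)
Your proposal is correct and follows the same route as the paper. Both tie $\eta$ to $\epsilon$ through $\epsilon=\Theta(1/\sqrt{\eta\log d})$, run the learner, certify the returned halfspace's conditional positive rate on fresh samples via Hoeffding, and contradict \cref{thm:indistinguishability-of-one-sided-homogeneous-halfspaces}; your explicit check that $\eta$ lies in the theorem's admissible range (where $\gamma>1/2$ enters) is more careful than the paper's, with two nits: $\eta\le cd$ is secured by shrinking the constant in your definition of $\eta$ (which only widens the margin over $4\epsilon$), not by shrinking the corollary's $c$, and the lower end needs some $\gamma'\in(0,2\gamma-1)$ rather than exactly $2\gamma-1$ to absorb that constant.
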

    \begin{proof}
       Suppose there exists a $(1, \epsilon)$-approximation algorithm that solves \cref{prob:positive-reliable-learning}. 
       
       Then, for the alternative hypothesis as described in Theorem \ref{thm:indistinguishability-of-one-sided-homogeneous-halfspaces}, we have that
       \begin{equation*}
           \prob<(\rvector{x}, \rscalar{y})\sim\distr>{\rscalar{y} = 1\cond\innerprod{\cvector{s}}{\rvector{x}} \geq 0}\geq \frac{1}{2} + \frac{C}{\sqrt{\eta\log d}}
       \end{equation*}
       where $C > 0$ is a universal constant. Running the algorithm mentioned above may return us a $\cvector{s} '$ such that
       \begin{equation*}
           \prob<(\rvector{x}, \rscalar{y})\sim\distr>{\rscalar{y} = 1\cond\innerprod{\cvector{s} '}{\rvector{x}} \geq 0} \geq \frac{1}{2} + \frac{C}{\sqrt{\eta\log d}} - \epsilon.
       \end{equation*}
       By the \emph{Hoeffding Bound}, we may estimate $\prob<\distr>{\rscalar{y} = 1\cond\innerprod{\cvector{s} '}{\rvector{x}} \geq 0}$ up to $1/2 + C/\sqrt{\eta\log d} - \epsilon - \varepsilon_1$ by drawing $O(1/\cscalar{\varepsilon}<1>|2|)$ examples from the distribution constructed in the alternative hypothesis. For the null hypothesis, we can verify there is no $\cvector{u}\in\real|d|$ such that $\prob<\distr>{\rscalar{y} = 1\cond\innerprod{\cvector{u}}{\rvector{x}} \geq 0} \geq 1/2 + \cscalar{\varepsilon}<2>$ given $O(1/\cscalar{\varepsilon}<2>|2|)$ examples from the distribution in the null hypothesis case.
       
       Observe that, if $\epsilon = c/\sqrt{\eta\log d}$ for some sufficiently small constant $c$, we only need $\varepsilon_2=\varepsilon_1 = \bigO{1/\sqrt{\eta\log d}}$ to ensure $C/\sqrt{\eta\log d} - \epsilon - \varepsilon_1 > \varepsilon_2$, and to solve the decision problem in Theorem \ref{thm:indistinguishability-of-one-sided-homogeneous-halfspaces} within time $\cscalar{d}|\bigO{\cscalar{\eta}|\alpha|}|$ for any $\alpha\in(0,1)$ by running the above algorithm. Given $\epsilon = c/\sqrt{\eta\log d}$, we can rewrite $\cscalar{d}|\bigO{\cscalar{\eta}|\alpha|}| = \cscalar{d}|\bigO{1/\sbr{\cscalar{\epsilon}\sqrt{\log d}}|2\alpha|}|$. However, Theorem \ref{thm:indistinguishability-of-one-sided-homogeneous-halfspaces} tells that the above case is impossible for any $c/\sqrt{d\log d}\leq \epsilon\leq 1/\log^\gamma d$.
    \end{proof}
    
\section{Omitted Proofs of \cref{sec:agnostically-learning-homogeneous-halfspaces}}
\label{sec:omitted-proofs-of-agnostic-learning}
    \begin{theorem}[\cref{thm:indistinguishability-of-agnostic-learning}]
    \label{thm:indistinguishability-of-agnostic-learning-appendix}
        Under \cref{asp:sub-exponential-assumption-of-lwe}, for any $d\in\naturals$, any constants $\alpha\in(0, 1), \gamma\in\real<+>$, and any $\log^{\gamma} d\leq \cscalar{\eta} \leq \cscalar{c}d$ where $\cscalar{c}$ is a sufficiently small constant, there is no algorithm that runs in time $\cscalar{d}|\bigO{\cscalar{\eta}|\alpha|}|$ and distinguishes between the following two cases of a joint distribution $\distr$ of $(\rvector{x}, \rscalar{y})$ supported on $\real|d|\times\binarydomain$ such that the marginal $\distr<\rvector{x}> = \gaussian|d|$, with $\cscalar{d}|-\bigO{\cscalar{\eta}|\alpha|} |$ advantage:
        \begin{itemize}
            \item[\textbf{Alternative Hypothesis}:]$\exists\cvector{v}\in \sphere|d-1|$ such that $\prob<(\rvector{x}, \rscalar{y})\sim\distr>{\rscalar{y} = \funcsbr{\sgn}[\innerprod{\cvector{v}}{\rvector{x}}]} = \frac{1}{2} + \bigM{\frac{1}{\sqrt{\eta\log d}}}$.
            \item[\textbf{Null Hypothesis}:]$\forall\cvector{u}\in \sphere|d-1|$, it holds that $\prob<(\rvector{x}, \rscalar{y})\sim\distr>{\rscalar{y} = \funcsbr{\sgn}[\innerprod{\cvector{u}}{\rvector{x}}]} = \frac{1}{2}$.
        \end{itemize}
    \end{theorem}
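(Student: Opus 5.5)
The plan is to reuse verbatim the reduction from the proof of \cref{thm:indistinguishability-of-one-sided-homogeneous-halfspaces}. Given a sample $(\rvector{x}, \rscalar{y'})$ from an instance of LWE$(\cscalar{d}|\bigO{\cscalar{\eta}|\alpha|}|, \gaussian|d|, \sphere|d-1|, \gaussian[0][\sigma^2], \modulo<T>)$ as in \lemmaref{lma:sub-exponential-hardness-of-clwe}, output $(\rvector{x}, \rscalar{y})$ with $\rscalar{y} = +1$ if $\rscalar{y'}\leq T/2$ and $\rscalar{y}=-1$ otherwise. We keep the same choices $\kappa = \ceil{1/2\gamma + 1/2}$, $\sigma = \cscalar{\eta}|-\kappa|$ and $T = 1/C'\sqrt{\eta\log d}$. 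The map is applied sample by sample in constant time, so the running time and the advantage transfer directly. A distinguisher for the two cases in the statement would therefore solve cLWE, contradicting \lemmaref{lma:sub-exponential-hardness-of-clwe}. It remains to check the two hypotheses, taking $\cvector{v} = \cvector{s}$.

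\textbf{Null hypothesis.} Here $\rscalar{y'}$ is uniform on $[0,T)$ and independent of $\rvector{x}$. Hence $\rscalar{y}$ is a uniform $\pm 1$ label independent of $\rvector{x}$. For every $\cvector{u}$ this gives $\prob{\rscalar{y} = \sgn(\innerprod{\cvector{u}}{\rvector{x}})} = 1/2$, because $\prob{\innerprod{\cvector{u}}{\rvector{x}}=0}=0$.

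\textbf{Alternative hypothesis.} Split
\[
\prob{\rscalar{y} = \sgn(\innerprod{\cvector{s}}{\rvector{x}})} = \prob{\rscalar{y}=1\cap \innerprod{\cvector{s}}{\rvector{x}}\geq 0} + \prob{\rscalar{y}=-1\cap \innerprod{\cvector{s}}{\rvector{x}}< 0}.
\]
The first term is already bounded below by \eqref{eq:lower-bound-of-the-joint-event-for-one-sided-homogeneous-halfspaces}, namely by $\frac14 + \frac{T\cscalar{e}|-2\pi^2\sigma^2/T^2|}{4\pi^2}$.

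The key step is to show that the second term equals the first; I expect this to be the main obstacle. Write $\rscalar{w} = \innerprod{\cvector{s}}{\rvector{x}} + \rscalar{z}$. The pair $(\innerprod{\cvector{s}}{\rvector{x}}, \rscalar{z})$ has a product of centered Gaussians as its law, so it is invariant under the joint sign flip $(\rvector{x},\rscalar{z})\mapsto(-\rvector{x},-\rscalar{z})$. This flip sends $\rscalar{w}$ to $-\rscalar{w}$. Moreover, $\modulo<T>[-\rscalar{w}] = T - \modulo<T>[\rscalar{w}]$ unless $\modulo<T>[\rscalar{w}]=0$. Consequently $\modulo<T>[-\rscalar{w}]\leq T/2$ exactly when $\modulo<T>[\rscalar{w}]\geq T/2$, except on the event $\rscalar{w}\in (T/2)\integer$.

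Since $\rscalar{w}$ has a continuous distribution, that exceptional event has probability zero, and so does the event $\innerprod{\cvector{s}}{\rvector{x}}=0$. So up to null sets the flip maps the event $\{\rscalar{y}=1,\innerprod{\cvector{s}}{\rvector{x}}\geq 0\}$ bijectively onto $\{\rscalar{y}=-1,\innerprod{\cvector{s}}{\rvector{x}}<0\}$, and the two probabilities coincide. Summing gives
\[
\prob{\rscalar{y}=\sgn(\innerprod{\cvector{s}}{\rvector{x}})}\geq \frac12 + \frac{T\cscalar{e}|-2\pi^2\sigma^2/T^2|}{2\pi^2}.
\]

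Finally, with $\sigma = \cscalar{\eta}|-\kappa| \leq 1/\sqrt{\eta\log d} = O(T)$, the exponent $2\pi^2\sigma^2/T^2$ is $O(1)$. The advantage term is therefore $\Omega(T) = \bigM{1/\sqrt{\eta\log d}}$, exactly as in \cref{thm:indistinguishability-of-one-sided-homogeneous-halfspaces}. Combined with the null case and the cLWE hardness, this proves the statement.
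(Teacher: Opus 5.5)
Your proposal is correct and follows essentially the same route as the paper. It uses the same thresholding reduction and parameter choices, splits $\Pr[\mathrm{y}=\mathrm{sgn}(\langle \bm{s},\mathbf{x}\rangle)]$ into the same two one-sided terms, shows they are equal via the joint sign flip $(\mathbf{x},\mathrm{z})\mapsto(-\mathbf{x},-\mathrm{z})$, and then invokes the one-sided lower bound from the positive-reliable indistinguishability proof. Your handling of the boundary events $\mathrm{w}\in (T/2)\mathbb{Z}$ and $\langle \bm{s},\mathbf{x}\rangle=0$ as null sets is slightly more careful than the paper's, since its set identity $\{t:\mathrm{mod}_T(t)\geq T/2\}=\{t:\mathrm{mod}_T(-t)<T/2\}$ holds only up to measure zero.
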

    \begin{proof}
        Similar to the proof of \cref{thm:indistinguishability-of-one-sided-homogeneous-halfspaces}, we give an efficient method taking as input samples $(\rvector{x}, \rscalar{y})\sim\distr '$ either from the alternative or the null hypothesis of $\text{LWE}(\cscalar{d}|\bigO{\cscalar{\eta}|\alpha|}|, \gaussian|d|, \sphere|d-1|,$ $\gaussian[0][\cscalar{\sigma}|2|], \modulo<T>)$ as in \lemmaref{lma:sub-exponential-hardness-of-clwe}, and generating samples from another distribution $\distr$ with the following properties: if $\distr '$ is from the alternative (resp. null) hypothesis of the LWE problem, then the resulting distribution $\distr$ will satisfy the alternative (resp. null) hypothesis requirement of the theorem statement.
    
        The reduction is the same as that in the proof of \cref{thm:indistinguishability-of-one-sided-homogeneous-halfspaces}. For convenience, we restate the reduction process as follows: for $(\rvector{x}, \rscalar{y '})$ sampled from a instance $\distr'$ of the problem LWE$(\cscalar{d}|\bigO{\cscalar{\eta}|\alpha|}|,\gaussian|d|, \sphere|d-1|, \gaussian[0][\sigma^2],\modulo<T>)$ as in \lemmaref{lma:sub-exponential-hardness-of-clwe}, we simply output $(\rvector{x}, \rscalar{y})\sim\distr$, such that
        \begin{equation*}
            \rscalar{y}=
            \begin{cases}
                +1,\quad \text{if }\rscalar{y'}\leq T/2
                \\
                -1,\quad \text{otherwise}.
            \end{cases}
        \end{equation*}
        Our proof idea is to first reduce the ``two-sided'' classification agreement, $\prob{\rscalar{y} = \funcsbr{\sgn}[\innerprod{\cvector{u}}{\rvector{x}}]}$, to the ``one-sided'' version, $\prob{\rscalar{y} = 1\cap\innerprod{\cvector{u}}{\rvector{x}} \geq 0}$, using the ``symmetry'' of Gaussian distributions, and the rest of the proof follows that of \cref{thm:indistinguishability-of-one-sided-homogeneous-halfspaces}.
        
        By the \emph{law of total probability}, it holds, for any $\cvector{u}\in\sphere|d-1|$, that
        \begin{equation}
            \prob<(\rvector{x}, \rscalar{y})\sim\distr>{\rscalar{y} = \funcsbr{\sgn}[\innerprod{\cvector{u}}{\rvector{x}}]} = \prob<(\rvector{x}, \rscalar{y})\sim\distr>{\rscalar{y} = 1\cap\innerprod{\cvector{u}}{\rvector{x}} \geq 0} + \prob<(\rvector{x}, \rscalar{y})\sim\distr>{\rscalar{y} = -1\cap\innerprod{\cvector{u}}{\rvector{x}} < 0}\label{eq:decomposition-of-classification-correctness}
        \end{equation}
        \textbf{For the alternative hypothesis case}, let $\distr<\rvector{x}, \rscalar{z}>$ be a product distribution of $\distr<\rvector{x}>$ and $\gaussian[0][\sigma^2]$. We have that
        \begin{align}
            \prob<(\rvector{x}, \rscalar{y})\sim\distr>{\rscalar{y} = -1\cap\innerprod{\cvector{s}}{\rvector{x}} < 0} =&  \prob<(\rvector{x}, \rscalar{z})\sim\distr<\rvector{x}, \rscalar{z}>>{\modulo<T>[\innerprod{\cvector{s}}{\rvector{x}} + \rscalar{z}]\geq T/2\cap \innerprod{\cvector{s}}{\rvector{x}} < 0}
            \notag
            \\
            \ceq[i]& \prob<(\rvector{x}, \rscalar{z})\sim\distr<\rvector{x}, \rscalar{z}>>{\modulo<T>[-\innerprod{\cvector{s}}{\rvector{x}} - \rscalar{z}] < T/2\cap \innerprod{\cvector{s}}{-\rvector{x}} \geq 0}
            \notag
            \\
            \ceq[ii]& \prob<(-\rvector{x}, -\rscalar{z})\sim\distr<-\rvector{x}, -\rscalar{z}>>{\modulo<T>[\innerprod{\cvector{s}}{-\rvector{x}} + (- \rscalar{z})] < T/2\cap \innerprod{\cvector{s}}{-\rvector{x}} \geq 0}
            \notag
            \\
            \ceq[iii]& \prob<(\rvector{x}, \rscalar{z})\sim\distr<\rvector{x}, \rscalar{z}>>{\modulo<T>[\innerprod{\cvector{s}}{\rvector{x}} + \rscalar{z}] < T/2\cap \innerprod{\cvector{s}}{\rvector{x}} \geq 0}
            \notag
            \\
            =& \prob<(\rvector{x}, \rscalar{y})\sim\distr>{\rscalar{y} = 1\cap\innerprod{\cvector{s}}{\rvector{x}} \geq 0}\label{eq:equivalence-between-two-one-sided-classification-correctness}
        \end{align}
        where Equation (i) holds because $\lbr{t\in\real\cond \modulo<T>[t]\geq T/2}\equiv \lbr{t\in\real\cond\modulo<T>[-t] < T/2}$; Equation (ii) holds because the independence between $\rvector{x}$ and $\rscalar{z}$ implies independence between $-\rvector{x}$ and $-\rscalar{z}$; and Equation (iii) holds due to the \emph{rotational invariance} of Gaussian distributions. 
        
        Now, plugging Equation \eqref{eq:equivalence-between-two-one-sided-classification-correctness} back in to Equation \eqref{eq:decomposition-of-classification-correctness} and comparing to Equation \eqref{eq:lower-bound-of-the-joint-event-for-one-sided-homogeneous-halfspaces} gives
        \begin{align}
             \prob<(\rvector{x}, \rscalar{y})\sim\distr>{\rscalar{y} = \funcsbr{\sgn}[\innerprod{\cvector{v}}{\rvector{x}}]} =& 2\prob<(\rvector{x}, \rscalar{y})\sim\distr>{\rscalar{y} = 1\cap\innerprod{\cvector{s}}{\rvector{x}} \geq 0}
             \notag
             \\
             \geq& \frac{1}{2} + \frac{T\cscalar{e}|- 2\pi^2\sigma^2/T^2|}{2\pi^2}.\label{eq:lower-bound-of-the-joint-event-for-agnostic-learning}
        \end{align}

        Recall that \lemmaref{lma:sub-exponential-hardness-of-clwe} states that the LWE problem is hard for any fixed constant $\kappa\in \naturals$ and $\sigma\geq \cscalar{\eta}|-\kappa|$. Given the constant $\gamma\in \real<+>$ in this theorem, we can take $\kappa = \ceil{1/2\gamma + 1/2}$, which is a fixed constant. Then, by \lemmaref{lma:sub-exponential-hardness-of-clwe}, the LWE problem is hard for $\sigma = \cscalar{\eta}|-\kappa|\leq 1/(\sqrt{\cscalar{\eta}\log d}) = O(T)$. Thus, taking $\sigma = \cscalar{\eta}|-\kappa|$ back to Equation \eqref{eq:lower-bound-of-the-joint-event-for-agnostic-learning} with $T = 1/C'\sqrt{\eta\log d}$ (cf. \lemmaref{lma:sub-exponential-hardness-of-clwe}) gives
        \begin{equation*}
            \prob<(\rvector{x}, \rscalar{y})\sim\distr>{\rscalar{y} = \funcsbr{\sgn}[\innerprod{\cvector{v}}{\rvector{x}}]} = \frac{1}{2} + \bigM{\frac{1}{\sqrt{\eta\log d}}}.
        \end{equation*}
        
        \textbf{For the null hypothesis}, because $\rscalar{y}$ is generated independently at random, half of the data points will be mislabeled regardless of which halfspace we choose. Therefore, it holds that $\prob<(\rvector{x}, \rscalar{y})\sim\distr>{\rscalar{y} = \funcsbr{\sgn}[\innerprod{\cvector{u}}{\rvector{x}}]} = 1/2$ for all $\cvector{u}\in\sphere|d-1|$.
    
       Finally, we verify the time complexity and the distinguishing advantage for agnostically learning homogeneous halfspaces. By \lemmaref{lma:sub-exponential-hardness-of-clwe}, the problem LWE$(\cscalar{d}|\bigO{\cscalar{\eta}|\alpha|}|, \gaussian|d|, \sphere|d-1|, \gaussian[0][\sigma^2], \modulo<T>)$ cannot be solved in $\cscalar{d}|\bigO{\cscalar{\eta}|\alpha|}|$ time with $\cscalar{d}|-\bigO{\cscalar{\eta}|\alpha|}|$ advantage with any choice of $\alpha\in(0,1)$, $\sigma\geq \cscalar{\eta}|-\kappa|$ (where $\kappa\in \naturals$ is a constant) and $T = 1/C'\sqrt{\cscalar{\eta}\log d}$ (where $C' > 0$ is a sufficiently large constant), under \cref{asp:sub-exponential-assumption-of-lwe}. Therefore, under the same assumption, no algorithm can solve the decision problem as in the statement in $\cscalar{d}|\bigO{\cscalar{\eta}|\alpha|}|$ time with $\cscalar{d}|-\bigO{\cscalar{\eta}|\alpha|}|$ advantage.
    \end{proof}

    \begin{corollary}[\corollaryref{cor:hardness-of-learning-homogeneous-halfspace-under-gaussian}]
    \label{cor:hardness-of-learning-homogeneous-halfspace-under-gaussian-appendix}
        Given \cref{asp:sub-exponential-assumption-of-lwe}, for any constants $\alpha\in(0, 1), \gamma > 1/2$, and any $c/\sqrt{d\log d}\leq \epsilon\leq 1/\log^\gamma d$ where $c$ is a sufficiently small constant, there is no $(1, \epsilon)$-approximation algorithm that solves \cref{prob:agnostic-learning} and runs in time $\cscalar{d}|\bigO{1/\sbr{\cscalar{\epsilon}\sqrt{\log d}}|2\alpha|}|$.
    \end{corollary}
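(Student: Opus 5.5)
We argue by contradiction, following the proof of \corollaryref{cor:hardness-of-positive-reliable-learning-over-homogeneous-halfspace-appendix}, with \cref{thm:indistinguishability-of-agnostic-learning-appendix} playing the role of \cref{thm:indistinguishability-of-one-sided-homogeneous-halfspaces}. Suppose $\mathcal{A}$ is a $(1,\epsilon)$-approximation algorithm for \cref{prob:agnostic-learning} running in time $d^{O(1/(\epsilon\sqrt{\log d})^{2\alpha})}$. Given $\epsilon$ in the stated range, we set $\eta$ through $\epsilon = c/\sqrt{\eta\log d}$, that is, $\eta = c^2/(\epsilon^2\log d)$. The range $c/\sqrt{d\log d}\le\epsilon\le 1/\log^\gamma d$ then gives $\eta\le c' d$ and $\eta\ge c^2\log^{2\gamma-1}d$. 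Since $\gamma>1/2$, this is $\eta\ge\log^{\gamma'}d$ for the positive constant $\gamma'=2\gamma-1$ (absorbing constants for large $d$). This puts $\eta$ in the admissible range of \cref{thm:indistinguishability-of-agnostic-learning-appendix}, and the running time of $\mathcal{A}$ becomes $d^{O(\eta^\alpha)}$.

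Next we handle the sign convention. \cref{prob:agnostic-learning} minimizes disagreement $\Pr[y\neq\sgn\langle v,x\rangle]$, whereas the theorem is phrased in terms of agreement. Since $\opt$ is an infimum of disagreement, we run $\mathcal{A}$ directly on the samples of \cref{thm:indistinguishability-of-agnostic-learning-appendix}. Flipping labels is not needed, because minimizing disagreement is the same as maximizing agreement.

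In the alternative case, $\opt\le 1/2 - C/\sqrt{\eta\log d}$. With probability $1-\delta$, $\mathcal{A}$ outputs $v'$ with disagreement at most $1/2 - C/\sqrt{\eta\log d}+\epsilon$, and with $c$ small this is at most $1/2 - (C-c)/\sqrt{\eta\log d}$. We then draw $O(\eta\log d)$ fresh samples and estimate the agreement of $v'$ to accuracy $\varepsilon_1 = (C-c)/(3\sqrt{\eta\log d})$ via Hoeffding. We output ``alternative'' if the estimate exceeds $1/2+\varepsilon_2$ with $\varepsilon_2=\varepsilon_1$, and ``null'' otherwise. In the null case every $u$, including whatever $\mathcal{A}$ outputs, has agreement exactly $1/2$, since $y$ is independent of $x$. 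The test therefore fails with small probability in both cases.

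The total cost is $d^{O(\eta^\alpha)}+\mathrm{poly}(\eta\log d)$, and the distinguishing advantage is constant, far more than $d^{-O(\eta^\alpha)}$. This contradicts \cref{thm:indistinguishability-of-agnostic-learning-appendix}.

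The only delicate points are parameter bookkeeping. First, the constant $c$ in $\epsilon$ must be chosen smaller than the hidden constant $C$ in the $\Omega(1/\sqrt{\eta\log d})$ advantage. Second, the requirement $\gamma>1/2$ is exactly what makes $\eta$ polylogarithmically large, as \lemmaref{lma:sub-exponential-hardness-of-clwe} needs.
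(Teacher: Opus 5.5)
Your proposal is correct and follows essentially the same route as the paper's proof. You assume the $(1,\epsilon)$-learner and use disagreement $=1-$agreement to get $\mathrm{opt}\le 1/2 - C/\sqrt{\eta\log d}$ in the alternative case. You then certify the learner's output with a Hoeffding test against $1/2$, which is exact in the null case, and rewrite $d^{O(\eta^{\alpha})}$ as $d^{O(1/(\epsilon\sqrt{\log d})^{2\alpha})}$ via $\epsilon = c/\sqrt{\eta\log d}$ to contradict the indistinguishability theorem.

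Your explicit check of the range of $\eta$ is more careful than the paper's, but with the same constant $c$ it only yields $\eta\le d$, not $\eta\le c_0 d$ for the small $c_0$ the theorem needs. This is fixed by defining $\eta$ through $\epsilon = c_1/\sqrt{\eta\log d}$ with $c_1\le\min\{\sqrt{c_0}\,c,\,C/2\}$, and taking any $0<\gamma'<2\gamma-1$ for the lower end of the range.
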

    \begin{proof}
       Suppose there exists a $(1, \epsilon)$-approximation algorithm that solves \cref{prob:agnostic-learning}. 

       For the alternative hypothesis as described in Theorem \ref{thm:indistinguishability-of-agnostic-learning-appendix}, we have that
       \begin{align*}
           \prob<(\rvector{x}, \rscalar{y})\sim\distr>{\rscalar{y} \neq \funcsbr{\sgn}[\innerprod{\cvector{s}}{\rvector{x}}]} =& 1 - \prob<(\rvector{x}, \rscalar{y})\sim\distr>{\rscalar{y} = \funcsbr{\sgn}[\innerprod{\cvector{s}}{\rvector{x}}]}
           \\
           \leq& \frac{1}{2} - \frac{C}{\sqrt{\eta\log d}}
       \end{align*}
       where $C > 0$ is a universal constant. Running the algorithm mentioned above may return us a $\cvector{s} '$ such that
       \begin{equation*}
           \prob<(\rvector{x}, \rscalar{y})\sim\distr>{\rscalar{y} \neq \funcsbr{\sgn}[\innerprod{\cvector{s} '}{\rvector{x}}]}\leq \frac{1}{2} - \frac{C}{\sqrt{\eta\log d}} + \epsilon.
       \end{equation*}
       Again, by the \emph{Hoeffding Bound}, we can estimate $\prob<\distr>{\rscalar{y} \neq \funcsbr{\sgn}[\innerprod{\cvector{s} '}{\rvector{x}}]}$ up to $1/2 - 1/\sqrt{\eta\log d} + \epsilon + \varepsilon_1$ by drawing $O(1/\cscalar{\varepsilon}<1>|2|)$ examples from the distribution constructed in the alternative hypothesis. For the null hypothesis, we can verify there is no $\cvector{u}\in\real|d|$ such that $\prob<\distr>{\rscalar{y} \neq \funcsbr{\sgn}[\innerprod{\cvector{u}}{\rvector{x}}]} \leq 1/2 - \cscalar{\varepsilon}<2>$ given $O(1/\cscalar{\varepsilon}<2>|2|)$ examples from the distribution in the null hypothesis case.
       
       Observe that, if $\epsilon = c/\sqrt{\eta\log d}$ for some sufficiently small constant $c$, we only need $\varepsilon_2=\varepsilon_1 = \bigO{1/\sqrt{\eta\log d}}$ to ensure $C/\sqrt{\eta\log d} - \epsilon - \varepsilon_1 > \varepsilon_2$, and to solve the decision problem in Theorem \ref{thm:indistinguishability-of-agnostic-learning-appendix} within time $\cscalar{d}|\bigO{\cscalar{\eta}|\alpha|}|$ for any $\alpha\in(0,1)$ by running the above algorithm. Given $\epsilon = c/\sqrt{\eta\log d}$, we can rewrite $\cscalar{d}|\bigO{\cscalar{\eta}|\alpha|}| = \cscalar{d}|\bigO{1/\sbr{\cscalar{\epsilon}\sqrt{\log d}}|2\alpha|}|$. However, Theorem \ref{thm:indistinguishability-of-agnostic-learning-appendix} tells that the above case is impossible for any $c/\sqrt{d\log d}\leq \epsilon\leq 1/\log^\gamma d$.
    \end{proof}

\section{Omitted Proofs of \cref{sec:fairness-auditing-over-homogeneous-halfspace-subgroups}}
\label{sec:omitted-proofs-of-fairness-auditing}
    \begin{theorem}[\cref{thm:indistinguishability-of-fairness-auditing}]
    \label{thm:indistinguishability-of-fairness-auditing-appendix}
        Under \cref{asp:sub-exponential-assumption-of-lwe}, for any $d\in\naturals$, any constants $\alpha\in(0, 1), \gamma\in\real<+>$, and any $\log^{\gamma} d\leq \cscalar{\eta} \leq \cscalar{c}d$ where $\cscalar{c}$ is a sufficiently small constant, there is no algorithm that runs in time $\cscalar{d}|\bigO{\cscalar{\eta}|\alpha|}|$ and distinguishes between the following two cases of a joint distribution $\distr$ of $(\rvector{x}, \rscalar{y})$ supported on $\real|d|\times\binarydomain$ such that the marginal $\distr<\rvector{x}> = \gaussian|d|$, with $\cscalar{d}|-\bigO{\cscalar{\eta}|\alpha|} |$ advantage:
        \begin{itemize}
            \item[\textbf{Alternative Hypothesis}:]$\exists\cvector{v}\in \sphere|d-1|$ such that $\funcsbr{u}<\distr>[\cvector{v}] = \bigM{1/\sqrt{\eta\log d}}$.
            \item[\textbf{Null Hypothesis}:]$\forall\cvector{u}\in \sphere|d-1|$, it holds that $\funcsbr{u}<\distr>[\cvector{u}] = 0$.
        \end{itemize}
    \end{theorem}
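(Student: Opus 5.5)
The plan is to reuse the reduction from the proof of \cref{thm:indistinguishability-of-one-sided-homogeneous-halfspaces} verbatim. Given samples $(\rvector{x}, \rscalar{y'})$ from an instance of LWE$(\cscalar{d}|\bigO{\cscalar{\eta}|\alpha|}|,\gaussian|d|, \sphere|d-1|, \gaussian[0][\sigma^2],\modulo<T>)$ as in \lemmaref{lma:sub-exponential-hardness-of-clwe}, we output $\rscalar{y}=+1$ if $\rscalar{y'}\leq T/2$ and $\rscalar{y}=-1$ otherwise. We then check that the alternative (resp.\ null) cLWE hypothesis maps to the alternative (resp.\ null) case of the statement. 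The reduction is sample-by-sample and costs constant time per sample, so any distinguisher for the statement yields one for cLWE with the same time and advantage, and \lemmaref{lma:sub-exponential-hardness-of-clwe} finishes the argument exactly as before.

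\textbf{Alternative case.} Take $\cvector{v}=\cvector{s}$. The first step is to show $\prob<\distr<\rscalar{y}>>{\rscalar{y}=1}=1/2$. By the law of total probability,
\begin{equation*}
\prob{\rscalar{y}=1}=\prob{\rscalar{y}=1\cap\innerprod{\cvector{s}}{\rvector{x}}\geq 0}+\prob{\rscalar{y}=1\cap\innerprod{\cvector{s}}{\rvector{x}}<0}.
\end{equation*}
The symmetry argument of \eqref{eq:equivalence-between-two-one-sided-classification-correctness} (negating $(\rvector{x},\rscalar{z})$ and using $\modulo<T>[t]\geq T/2 \iff \modulo<T>[-t]<T/2$ up to the measure-zero boundary) gives
\begin{equation*}
\prob{\rscalar{y}=-1\cap\innerprod{\cvector{s}}{\rvector{x}}<0}=\prob{\rscalar{y}=1\cap\innerprod{\cvector{s}}{\rvector{x}}\geq0},
\end{equation*}
and hence $\prob{\rscalar{y}=1\cap\innerprod{\cvector{s}}{\rvector{x}}<0}=\tfrac12-\prob{\rscalar{y}=1\cap\innerprod{\cvector{s}}{\rvector{x}}\geq 0}$. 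Summing, $\prob{\rscalar{y}=1}=1/2$. Combining this with $\prob{\innerprod{\cvector{s}}{\rvector{x}}>0}=1/2$ and the fact that $\{\innerprod{\cvector{s}}{\rvector{x}}=0\}$ is null, we get
\begin{equation*}
\funcsbr{u}<\distr>[\cvector{s}]=\abs{\tfrac14-\prob{\rscalar{y}=1\cap\innerprod{\cvector{s}}{\rvector{x}}\geq0}}.
\end{equation*}
Equation \eqref{eq:lower-bound-of-the-joint-event-for-one-sided-homogeneous-halfspaces}, which follows from \propositionref{prop:lower-bound-of-the-probability-mass-of-alternating-bands-in-halfspace} with $k=0$, bounds this from below by $T\cscalar{e}|-2\pi^2\sigma^2/T^2|/(4\pi^2)$.

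The step needing the most care is the parameter choice that makes this $\bigM{1/\sqrt{\eta\log d}}$. As in the one-sided proof, fix $\kappa=\ceil{1/2\gamma+1/2}$ and $\sigma=\cscalar{\eta}|-\kappa|$. Since $\eta\geq\log^\gamma d$, we have $\sigma\lesssim T=1/C'\sqrt{\eta\log d}$; indeed $\sigma/T\to0$. So the exponential factor is bounded below by a constant, $T\leq 1/\pi$ holds for large $d$, and the bound is $\Omega(T)=\bigM{1/\sqrt{\eta\log d}}$.

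\textbf{Null case.} Here $\rscalar{y'}$ is uniform on $[0,T)$ and independent of $\rvector{x}$, so $\rscalar{y}$ is a fair $\pm1$ coin independent of $\rvector{x}$. Then $\prob{\rscalar{y}=1\cap\innerprod{\cvector{u}}{\rvector{x}}>0}=\prob{\rscalar{y}=1}\prob{\innerprod{\cvector{u}}{\rvector{x}}>0}$, which gives $\funcsbr{u}<\distr>[\cvector{u}]=0$ for every $\cvector{u}\in\sphere|d-1|$. The time and advantage bookkeeping is copied verbatim from the proof of \cref{thm:indistinguishability-of-one-sided-homogeneous-halfspaces}.
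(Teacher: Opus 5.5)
Your proposal is correct and follows essentially the paper's proof. It uses the same thresholding reduction and the same $(\mathbf{x},\mathrm{z})\mapsto(-\mathbf{x},-\mathrm{z})$ symmetry to get $\Pr[\mathrm{y}=1]=1/2$; the only difference is that you route this through the two-sided identity from the agnostic-learning proof, while the paper applies the symmetry directly to the label marginal. It then uses the $k=0$ case of the proposition and the same choice of $\kappa$ and $\sigma$.

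One small slip: with $\kappa=\lceil 1/(2\gamma)+1/2\rceil$ you only get $\sigma/T\le C'$, not $\sigma/T\to 0$ (for example, $\gamma=1$ and $\eta=\log d$ give equality). This still bounds the exponential factor below by the constant $e^{-2\pi^2C'^2}$, or you can simply take $\kappa$ one larger, so the conclusion is unaffected.
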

    \begin{proof}
        Similar to the proof of \cref{thm:indistinguishability-of-one-sided-homogeneous-halfspaces}, we give an efficient method taking as input samples $(\rvector{x}, \rscalar{y})\sim\distr '$ either from the alternative or the null hypothesis of $\text{LWE}(\cscalar{d}|\bigO{\cscalar{\eta}|\alpha|}|, \gaussian|d|, \sphere|d-1|,$ $\gaussian[0][\cscalar{\sigma}|2|], \modulo<T>)$ as in \lemmaref{lma:sub-exponential-hardness-of-clwe}, and generating samples from another distribution $\distr$ with the following properties: if $\distr '$ is from the alternative (resp. null) hypothesis of the LWE problem, then the resulting distribution $\distr$ will satisfy the alternative (resp. null) hypothesis requirement of the theorem statement.
    
        Again, the reduction process is essentially the same as that in the proof of \cref{thm:indistinguishability-of-one-sided-homogeneous-halfspaces}: for $(\rvector{x}, \rscalar{y '})$ sampled from a instance $\distr'$ of the problem LWE$(\cscalar{d}|\bigO{\cscalar{\eta}|\alpha|}|,\gaussian|d|, \sphere|d-1|, \gaussian[0][\sigma^2],\modulo<T>)$ as in \lemmaref{lma:sub-exponential-hardness-of-clwe}, we simply output $(\rvector{x}, \rscalar{y})\sim\distr$, such that
        \begin{equation*}
            \rscalar{y}=
            \begin{cases}
                +1,\quad \text{if }\rscalar{y'}\leq T/2
                \\
                -1,\quad \text{otherwise}.
            \end{cases}
        \end{equation*}
    
        \textbf{For the alternative hypothesis case}, let $\distr<\rvector{x}, \rscalar{z}>$ be a product distribution of $\distr<\rvector{x}>$ and $\gaussian[0][\sigma^2]$, similar to the proof of \cref{thm:indistinguishability-of-agnostic-learning-appendix}, we have that
        \begin{align}
            \prob<\rscalar{y}\sim\distr<\rscalar{y}>>{\rscalar{y} = 1} =&  \prob<(\rvector{x}, \rscalar{z})\sim\distr<\rvector{x}, \rscalar{z}>>{\modulo<T>[\innerprod{\cvector{s}}{\rvector{x}} + \rscalar{z}]< T/2}
            \notag
            \\
            \ceq[i]& \prob<(\rvector{x}, \rscalar{z})\sim\distr<\rvector{x}, \rscalar{z}>>{\modulo<T>[-\innerprod{\cvector{s}}{\rvector{x}} - \rscalar{z}] < T/2}
            \notag
            \\
            \ceq[ii]& \prob<(\rvector{x}, \rscalar{z})\sim\distr<\rvector{x}, \rscalar{z}>>{\modulo<T>[\innerprod{\cvector{s}}{\rvector{x}} + \rscalar{z}] \geq T/2}
            \notag
            \\
            =& \prob<\rscalar{y}\sim\distr<\rscalar{y}>>{\rscalar{y} = -1}\label{eq:parity-symmetry-of-the-lwe-labels}
        \end{align}
        where Equation (i) holds due to the \emph{rotational invariance} of Gaussian distributions and Equation (ii) holds because $\lbr{t\in\real\cond \modulo<T>[t]\geq T/2}\equiv \lbr{t\in\real\cond\modulo<T>[-t] < T/2}$.
        
        Observe now that Equation \eqref{eq:parity-symmetry-of-the-lwe-labels} implies $ \prob<\rscalar{y}\sim\distr<\rscalar{y}>>{\rscalar{y} = 1} = 1/2$, which, along with Equation \eqref{eq:lower-bound-of-the-joint-event-for-one-sided-homogeneous-halfspaces}, gives
        \begin{equation}
             \prob<(\rvector{x}, \rscalar{y})\sim\distr>{\rscalar{y} = 1\cap\innerprod{\cvector{v}}{\rvector{x}}\geq 0} - \prob<\rvector{x}\sim\distr<\rvector{x}>>{\innerprod{\cvector{v}}{\rvector{x}}\geq 0}\prob<\rscalar{y}\sim\distr<\rscalar{y}>>{\rscalar{y} = 1} \geq \frac{T\cscalar{e}|- 2\pi^2\sigma^2/T^2|}{4\pi^2}. \label{eq:lower-bound-of-the-joint-event-for-fairness-auditing}
        \end{equation}

        Recall that \lemmaref{lma:sub-exponential-hardness-of-clwe} states that the LWE problem is hard for any fixed constant $\kappa\in \naturals$ and $\sigma\geq \cscalar{\eta}|-\kappa|$. Given the constant $\gamma\in \real<+>$ in this theorem, we can take $\kappa = \ceil{1/2\gamma + 1/2}$, which is a fixed constant. Then, by \lemmaref{lma:sub-exponential-hardness-of-clwe}, the LWE problem is hard for $\sigma = \cscalar{\eta}|-\kappa|\leq 1/(\sqrt{\cscalar{\eta}\log d}) = O(T)$. Thus, taking $\sigma = \cscalar{\eta}|-\kappa|$ back to equation \eqref{eq:lower-bound-of-the-joint-event-for-fairness-auditing} with $T = 1/C'\sqrt{\eta\log d}$ (cf. \lemmaref{lma:sub-exponential-hardness-of-clwe}) gives
        \begin{equation*}
            \prob<(\rvector{x}, \rscalar{y})\sim\distr>{\rscalar{y} = 1\cap\innerprod{\cvector{v}}{\rvector{x}}\geq 0} - \prob<\rvector{x}\sim\distr<\rvector{x}>>{\innerprod{\cvector{v}}{\rvector{x}}\geq 0}\prob<\rscalar{y}\sim\distr<\rscalar{y}>>{\rscalar{y} = 1} = \bigM{\frac{1}{\sqrt{\eta\log d}}}.
        \end{equation*}
        
        \textbf{For the null hypothesis}, because $\rscalar{y}$ is generated independently at random it holds, for all $\cvector{u}\in\sphere|d-1|$, that 
        \begin{equation*}
            \prob<(\rvector{x}, \rscalar{y})\sim\distr>{\rscalar{y} = 1\cap\innerprod{\cvector{u}}{\rvector{x}}\geq 0} = \prob<\rvector{x}\sim\distr<\rvector{x}>>{\innerprod{\cvector{u}}{\rvector{x}}\geq 0}\prob<\rscalar{y}\sim\distr<\rscalar{y}>>{\rscalar{y} = 1}
        \end{equation*}
        which in turn implies
        \begin{equation*}
            \prob<(\rvector{x}, \rscalar{y})\sim\distr>{\rscalar{y} = 1\cap\innerprod{\cvector{u}}{\rvector{x}}\geq 0} - \prob<\rvector{x}\sim\distr<\rvector{x}>>{\innerprod{\cvector{u}}{\rvector{x}}\geq 0}\prob<\rscalar{y}\sim\distr<\rscalar{y}>>{\rscalar{y} = 1} = 0,\ \forall \cvector{u}\in\sphere|d-1|.
        \end{equation*}
    
        Finally, we verify the time lower bound and the distinguishing advantage for auditing homogeneous halfspace subgroups. By \lemmaref{lma:sub-exponential-hardness-of-clwe}, the problem LWE$(\cscalar{d}|\bigO{\cscalar{\eta}|\alpha|}|, \gaussian|d|, \sphere|d-1|, \gaussian[0][\sigma^2], \modulo<T>)$ cannot be solved in $\cscalar{d}|\bigO{\cscalar{\eta}|\alpha|}|$ time with $\cscalar{d}|-\bigO{\cscalar{\eta}|\alpha|}|$ advantage with any choice of $\alpha\in(0,1)$, $\sigma\geq \cscalar{\eta}|-\kappa|$ (where $\kappa\in \naturals$ is a constant) and $T = 1/C'\sqrt{\cscalar{\eta}\log d}$ (where $C' > 0$ is a sufficiently large constant), under \cref{asp:sub-exponential-assumption-of-lwe}. Therefore, under the same assumption, no algorithm can solve the decision problem as in the statement in $\cscalar{d}|\bigO{\cscalar{\eta}|\alpha|}|$ time with $\cscalar{d}|-\bigO{\cscalar{\eta}|\alpha|}|$ advantage.
    \end{proof}
    
    \begin{corollary}[\corollaryref{cor:hardness-of-auditing-halfspace-under-gaussian-in-multiplicative-form}]
    \label{cor:hardness-of-auditing-halfspace-under-gaussian-in-multiplicative-form-appendix}
        Given \cref{asp:sub-exponential-assumption-of-lwe}, there is no $1/\poly[d]$-approximation algorithm that solves \cref{prob:fairness-auditing} and runs in time $\poly[d]$.
    \end{corollary}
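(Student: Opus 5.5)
Suppose, for contradiction, that there is a $\poly[d]$-time algorithm $\mathcal{A}$ which, for some polynomial $p$, outputs with probability at least $1-\delta$ a $\cvector{v}\in\sphere|d-1|$ with $\funcsbr{u}<\distr>[\cvector{v}]\geq \opt/p(d)$. We will use $\mathcal{A}$ to solve the decision problem of \cref{thm:indistinguishability-of-fairness-auditing-appendix} with a suitable $\eta$.

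Fix $\eta = \log^\gamma d$ for a constant $\gamma>0$ small enough that $d^{O(\eta^\alpha)}$ is superpolynomial; for example $\gamma=2$, so that $d^{\eta^{\alpha}}=2^{\log^{1+2\alpha}d}$ and this exceeds every polynomial. Given samples from $\distr$, the distinguisher does the following.
\begin{enumerate}
\item Run $\mathcal{A}$ on fresh samples and obtain $\cvector{v}$.
\item Draw $m$ further samples and estimate $\funcsbr{u}<\distr>[\cvector{v}]$ empirically. The quantities $\prob{\innerprod{\cvector{v}}{\rvector{x}}>0}$, $\prob{\rscalar{y}=1}$ and $\prob{\rscalar{y}=1\cap\innerprod{\cvector{v}}{\rvector{x}}>0}$ are each means of bounded indicators. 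By Hoeffding with $m=O(\log(1/\delta)/\tau^2)$, the plug-in estimate is within $3\tau$ of the truth.
\item Output ``alternative'' iff the estimate exceeds a threshold $\theta$.
\end{enumerate}

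Under the alternative hypothesis, \cref{thm:indistinguishability-of-fairness-auditing-appendix} gives $\opt\geq C/\sqrt{\eta\log d}$. Hence $\funcsbr{u}<\distr>[\cvector{v}]\geq C/(p(d)\sqrt{\eta\log d})=:2\theta$, which is $1/\poly[d]$. Under the null hypothesis, $\funcsbr{u}<\distr>[\cvector{u}]=0$ for every $\cvector{u}$, so in particular for the returned $\cvector{v}$, whatever it is (the algorithm's output does not matter).

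Take $\tau=\theta/6$, so $m=\poly[d]$. Then with probability at least $1-2\delta$ the test is correct in both cases. This yields a constant distinguishing advantage, far above $d^{-O(\eta^\alpha)}$, in total time $\poly[d]\leq d^{O(\eta^\alpha)}$. That contradicts \cref{thm:indistinguishability-of-fairness-auditing-appendix}.

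One technical point needs care. The estimate in step 2 uses independent samples, so it is unbiased conditioned on $\cvector{v}$. Also, the theorem's parameter range requires $\log^\gamma d\leq\eta\leq cd$, which holds for large $d$.

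The main obstacle is checking that the gap $\opt/p(d)$ is only inverse-polynomial, so that the estimation cost stays polynomial. This holds because $\sqrt{\eta\log d}$ is polylogarithmic. The advantage accounting for the randomized $\mathcal{A}$ is routine.
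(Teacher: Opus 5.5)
Your proposal is correct and follows essentially the same route as the paper: run the auditor, estimate the unfairness of the returned direction with Hoeffding to $1/\poly[d]$ accuracy, threshold, and contradict \cref{thm:indistinguishability-of-fairness-auditing-appendix}. Your explicit choice $\eta=\log^2 d$ makes precise the comparison $\poly[d]\le d^{O(\eta^\alpha)}$ (with constant advantage) that the paper leaves implicit. Any fixed polylogarithmic $\eta$ would work here, so the phrase ``small enough'' is unnecessary.
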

    \begin{proof}
        Suppose there exists a $\delta$-approximation algorithm that solves \cref{prob:fairness-auditing}. 
        
        For the alternative hypothesis as described in Theorem \ref{thm:indistinguishability-of-fairness-auditing-appendix}, we have that $\funcsbr{u}<\distr>[\cvector{s}]\geq C/\sqrt{\eta\log d}$ for some universal constant $C > 0$. Running the algorithm mentioned above may return us a $\cvector{s} '$ such that $ \funcsbr{u}<\distr>[\cvector{s} '] \geq C\delta/\sqrt{\eta\log d}$. By the \emph{Hoeffding Bound}, we can estimate $\funcsbr{u}<\distr>[\cvector{s} ']$ up to $C\delta/\sqrt{\eta\log d} - \varepsilon_1$ by drawing $O(1/\cscalar{\varepsilon}<1>|2|)$ examples from the distribution constructed in the alternative hypothesis. For the null hypothesis case, we can verify there is no $\cvector{u}\in\real|d|$ such that $\funcsbr{u}<\distr>[\cvector{u}] \geq \cscalar{\varepsilon}<2>$ given $O(1/\cscalar{\varepsilon}<2>|2|)$ examples from the distribution in the null hypothesis case.

        Notice that, for any $\delta = 1/\poly[d]$, we only need $\cscalar{\varepsilon}<2>=\cscalar{\varepsilon}<1> = 1/\poly[d]$ to ensure $C\delta/\sqrt{\eta\log d} - \cscalar{\varepsilon}<1> > \cscalar{\varepsilon}<2>$, and to solve the decision problem in Theorem \ref{thm:indistinguishability-of-fairness-auditing-appendix} within time $\poly[d]$ by running the above algorithm. However, this implies that our auditing algorithm can distinguish between the two cases in Theorem \ref{thm:indistinguishability-of-fairness-auditing-appendix} in $\poly[d]$ time, which contradicts the hardness assumption.
    \end{proof}

    \begin{corollary}[\corollaryref{cor:hardness-of-auditing-halfspace-under-gaussian-in-additive-form}]
    \label{cor:hardness-of-auditing-halfspace-under-gaussian-in-additive-form-appendix}
        Given \cref{asp:sub-exponential-assumption-of-lwe}, for any constants $\alpha\in(0, 1), \gamma > 1/2$, and any $c/\sqrt{d\log d}\leq \epsilon\leq 1/\log^\gamma d$ where $c$ is a sufficiently small constant, there is no $(1, \epsilon)$-approximation algorithm that solves \cref{prob:fairness-auditing} and runs in time $\cscalar{d}|\bigO{1/\sbr{\cscalar{\epsilon}\sqrt{\log d}}|2\alpha|}|$.
    \end{corollary}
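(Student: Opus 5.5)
We argue by contradiction, mirroring the proofs of \cref{cor:hardness-of-positive-reliable-learning-over-homogeneous-halfspace-appendix} and \cref{cor:hardness-of-learning-homogeneous-halfspace-under-gaussian-appendix}. Suppose some $(1,\epsilon)$-approximation algorithm $\mathcal{A}$ for \cref{prob:fairness-auditing} runs in time $\cscalar{d}|\bigO{1/\sbr{\cscalar{\epsilon}\sqrt{\log d}}|2\alpha|}|$. We will use it to distinguish the two cases of \cref{thm:indistinguishability-of-fairness-auditing-appendix}, with $\eta$ chosen so that $\epsilon = c/\sqrt{\eta\log d}$, i.e.\ $\eta = c^2/(\epsilon^2\log d)$.

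\textbf{Parameter range.} First we check that $\eta$ lies in the admissible range $\log^{\gamma'} d\le\eta\le c'd$ for some constant $\gamma'>0$.
\begin{itemize}
\item The upper bound $\epsilon\le 1/\log^\gamma d$ gives $\eta\ge c^2\log^{2\gamma-1}d$. Since $\gamma>1/2$, this equals $\log^{\gamma'}d$ with $\gamma' = 2\gamma-1>0$, up to constants absorbed by adjusting $\gamma'$.
\item The lower bound $\epsilon\ge c/\sqrt{d\log d}$ gives $\eta\le d$. This is the required $\eta\le c'd$ after shrinking $c$ relative to the constant in the theorem.
\end{itemize}
With this choice, $\cscalar{d}|\bigO{\cscalar{\eta}|\alpha|}| = \cscalar{d}|\bigO{1/\sbr{\cscalar{\epsilon}\sqrt{\log d}}|2\alpha|}|$, so the running time of $\mathcal{A}$ matches the time bound in the theorem.

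\textbf{Distinguisher.} Given samples from $\distr$, we run $\mathcal{A}$ and obtain a vector $\cvector{v}'$. We then estimate $\funcsbr{u}<\distr>[\cvector{v}']$ on fresh samples by separately estimating three quantities: the halfspace mass $\prob{\innerprod{\cvector{v}'}{\rvector{x}}>0}$, the label rate $\prob{\rscalar{y}=1}$, and the joint probability. By Hoeffding's bound, $O(1/\varepsilon^2)$ samples give each of them to accuracy $\varepsilon$, so the plug-in estimate of $\funcsbr{u}<\distr>[\cvector{v}']$ is within $O(\varepsilon)$ of the truth. We output ``alternative'' if the estimate exceeds a threshold $\tau$, and ``null'' otherwise.

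\textbf{Correctness.} In the alternative case, \cref{thm:indistinguishability-of-fairness-auditing-appendix} gives $\opt\ge u_{\distr}(\cvector{s})\ge C/\sqrt{\eta\log d}$. The guarantee of $\mathcal{A}$ then yields $\funcsbr{u}<\distr>[\cvector{v}']\ge C/\sqrt{\eta\log d}-\epsilon = (C-c)/\sqrt{\eta\log d}$, which is positive once $c<C$. In the null case, $\funcsbr{u}<\distr>[\cvector{u}]=0$ for every $\cvector{u}$. We take $\varepsilon$ and the threshold $\tau$ to be suitable constant fractions of $(C-c)/\sqrt{\eta\log d}$. Then both cases are classified correctly with high probability, using $O(\eta\log d)=\poly[d]$ samples.

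\textbf{Conclusion.} The distinguisher runs within time $\cscalar{d}|\bigO{\cscalar{\eta}|\alpha|}|$ and has constant advantage, which is certainly at least $\cscalar{d}|-\bigO{\cscalar{\eta}|\alpha|}|$. This contradicts \cref{thm:indistinguishability-of-fairness-auditing-appendix}.

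\textbf{Main obstacle.} The main delicacy is the conversion of parameters between $\epsilon$ and $\eta$. The other small point is that the auditing objective is an absolute value of a difference of products, so it must be estimated with an error that accounts for all three estimated terms, not just one.
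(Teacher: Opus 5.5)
Your proposal is correct and is essentially the paper's proof. Both set $\epsilon = c/\sqrt{\eta\log d}$, run the auditor, Hoeffding-estimate the unfairness of its output, threshold, and contradict the indistinguishability theorem for fairness auditing; your explicit check of the admissible range of $\eta$ and your three-term estimate of $u_{\mathcal{D}}(\bm{v}')$ are extra care the paper omits. One bookkeeping nit: if the same $c$ appears in $\epsilon = c/\sqrt{\eta\log d}$ and in $\epsilon\ge c/\sqrt{d\log d}$, you only get $\eta\le d$, and shrinking $c$ does not help because it cancels, so decouple them (e.g.\ $\eta = c_1^2/(\epsilon^2\log d)$ with $c_1\le\min\{C/2,\,c\sqrt{c'}\}$), which gives $\eta\le c'd$ while keeping $C-c_1>0$.
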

    \begin{proof}
       Suppose there exists a $(1, \epsilon)$-approximation algorithm that solves \cref{prob:fairness-auditing}. 

       For the alternative hypothesis as described in Theorem \ref{thm:indistinguishability-of-fairness-auditing-appendix}, we have that $\funcsbr{u}<\distr>[\cvector{s}]\geq C/\sqrt{\eta\log d}$, where $C > 0$ is a universal constant. Running the algorithm mentioned above may return us a $\cvector{s} '$ such that $ \funcsbr{u}<\distr>[\cvector{s} '] \geq C/\sqrt{\eta\log d} - \epsilon$. Again, by the \emph{Hoeffding Bound}, we can estimate $\funcsbr{u}<\distr>[\cvector{s} ']$ up to $C/\sqrt{\eta\log d} - \epsilon - \varepsilon_1$ by drawing $O(1/\cscalar{\varepsilon}<1>|2|)$ examples from the distribution constructed in the alternative hypothesis. For null hypothesis, we can verify there is no $\cvector{u}\in\real|d|$ such that $\funcsbr{u}<\distr>[\cvector{u}] \geq \cscalar{\varepsilon}<2>$ given $O(1/\cscalar{\varepsilon}<2>|2|)$ examples from the distribution in the null hypothesis case.
       
       Observe that, if $\epsilon = c/\sqrt{\eta\log d}$ for some sufficiently small constant $c$, we only need $\varepsilon_2=\varepsilon_1 = \bigO{1/\sqrt{\eta\log d}}$ to ensure $C/\sqrt{\eta\log d} - \epsilon - \varepsilon_1 > \varepsilon_2$, and to solve the decision problem in Theorem \ref{thm:indistinguishability-of-fairness-auditing-appendix} within time $\cscalar{d}|\bigO{\cscalar{\eta}|\alpha|}|$ by running the above algorithm. Meanwhile, given $\epsilon = c/\sqrt{\eta\log d}$, we can rewrite $\cscalar{d}|\bigO{\cscalar{\eta}|\alpha|}| = \cscalar{d}|\bigO{1/\sbr{\cscalar{\epsilon}\sqrt{\log d}}|2\alpha|}|$. However, Theorem \ref{thm:indistinguishability-of-fairness-auditing-appendix} tells that the above case is impossible for any $C/\sqrt{d\log d}\leq \epsilon\leq c/\log^\gamma d$.
    \end{proof}
    
\section{Sub-Gaussian Fourier Transform}
    
    \begin{fact}[Fourier Series Of Square Wave]
    \label{fac:fourier-series-of-square-wave}
        For any square wave function $\funcsbr{f}[t] = \funcsbr{\sgn}[\funcsbr{\sin}[t]]$, it holds that 
        \begin{equation*}
            \funcsbr{f}[t] = \frac{4}{\pi}\sum_{k=1}^{+\infty}\frac{\funcsbr{\sin}[\sbr{2k-1}t]}{2k - 1}.
        \end{equation*}
    \end{fact}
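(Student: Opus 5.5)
The plan is to compute the classical Fourier coefficients of $f(t)=\sgn(\sin t)$ on $[-\pi,\pi]$ and then use a standard pointwise convergence theorem for piecewise smooth functions.

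\textbf{Step 1: coefficients.} The function $f$ is $2\pi$-periodic and odd, since $\sin$ is odd and $\sgn$ is odd. Hence every cosine coefficient $a_n=\frac1\pi\int_{-\pi}^{\pi}f(t)\cos(nt)\,dt$ vanishes, including $a_0$. For the sine coefficients, oddness of $f(t)\sin(nt)$'s partner term gives an even integrand, and $f\equiv 1$ on $(0,\pi)$, so
\begin{equation*}
b_n=\frac{1}{\pi}\int_{-\pi}^{\pi}f(t)\sin(nt)\,dt=\frac{2}{\pi}\int_0^{\pi}\sin(nt)\,dt=\frac{2}{n\pi}\bigl(1-\cos(n\pi)\bigr).
\end{equation*}
This equals $\frac{4}{n\pi}$ for odd $n$ and $0$ for even $n$. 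Writing $n=2k-1$ yields exactly the series $\frac4\pi\sum_{k\ge1}\frac{\sin((2k-1)t)}{2k-1}$.

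\textbf{Step 2: convergence.} The function $f$ is piecewise constant, with jump discontinuities only at $t\in\pi\integer$, so it is piecewise $C^1$. Dirichlet's theorem therefore applies. At every continuity point the Fourier series converges to $f(t)$. At a jump point it converges to the average of the one-sided limits, namely $\frac{1+(-1)}{2}=0$. This agrees with $\sgn(\sin(j\pi))=\sgn(0)=0$. The identity thus holds pointwise for every $t$; one can also check directly that every term vanishes at $t=j\pi$.

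One caveat should be recorded. Lemma~\ref{lma:integral-of-noisy-square-wave-on-gaussian-shift} treats $f$ as $\{\pm1\}$-valued, and the two conventions differ only on the null set $\pi\integer$. This is irrelevant wherever $f$ appears inside an integral against a density.

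\textbf{Main obstacle.} The subtle point is not the identity itself but the justification for using it in Lemma~\ref{lma:integral-of-noisy-square-wave-on-gaussian-shift}, where the series is integrated term by term against a Gaussian. I would supply this through the classical fact that the partial sums of the square-wave series are uniformly bounded; the Gibbs overshoot is bounded by an absolute constant of about $1.18$. Dominated convergence against the integrable Gaussian density then legitimizes exchanging the sum with the expectation and with the outer $t$-integral. Alternatively, one can argue via $L^2$ convergence of the series, since the Gaussian density lies in $L^2$.
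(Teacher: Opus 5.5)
Your proof is correct. The paper gives no proof of this statement; it simply cites the expansion as a classical fact. Your argument is the standard derivation and is complete: odd symmetry kills the cosine terms, $b_n=\frac{2}{n\pi}(1-\cos n\pi)$, and Dirichlet's theorem for piecewise $C^1$ functions gives the midpoint value $0=\sgn(0)$ at $t\in\pi\mathbb{Z}$. Your caveat that Lemma~\ref{lma:integral-of-noisy-square-wave-on-gaussian-shift} treats $f$ as $\{\pm1\}$-valued is apt.

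Your remarks on term-by-term integration address a step that the proof of Lemma~\ref{lma:integral-of-noisy-square-wave-on-gaussian-shift} passes over without justification. Two small corrections there:
\begin{itemize}
\item The uniform bound on the partial sums $S_N$ cannot be $\approx 1.18$, since already $S_1(\pi/2)=4/\pi\approx 1.27$. The value $\approx 1.18$ is only the limiting Gibbs peak as $N\to\infty$. This does not affect the argument, because any absolute constant suffices for dominated convergence.
\item The $L^2$ alternative needs a period-by-period Cauchy--Schwarz argument, summing the local $L^2$ norms of the Gaussian over periods. The mere fact that the Gaussian density lies in $L^2(\mathbb{R})$ is not enough, since $S_N-f\notin L^2(\mathbb{R})$.
\end{itemize}
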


    \begin{fact}[Fourier Transform of $\funcsbr{\phi}<\mu, \sigma>$ \citep{bryc1995normal}]
    \label{fac:fourier-transform-of-gaussian}
        For any $\omega\in\real$, it holds that $\expect<\rscalar{z}\sim\gaussian[\mu][\sigma^2]>{\cscalar{e}|i\omega\rscalar{z}|} = \cscalar{e}|i\mu\omega - \sigma^2\omega^2/2|$.
    \end{fact}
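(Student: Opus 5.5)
The plan is to reduce to the standard normal case and then compute the characteristic function of $\gaussian$ directly. Writing $\rscalar{z} = \mu + \sigma\rscalar{w}$ with $\rscalar{w}\sim\gaussian$, linearity of expectation gives
\begin{equation*}
    \expect<\rscalar{z}\sim\gaussian[\mu][\sigma^2]>{\cscalar{e}|i\omega\rscalar{z}|} = \cscalar{e}|i\mu\omega|\,\expect<\rscalar{w}\sim\gaussian>{\cscalar{e}|i(\sigma\omega)\rscalar{w}|}.
\end{equation*}
It therefore suffices to show that $\funcsbr{g}[s] := \expect<\rscalar{w}\sim\gaussian>{\cscalar{e}|is\rscalar{w}|} = \cscalar{e}|-s^2/2|$ for every $s\in\real$, and then substitute $s = \sigma\omega$. (If $\sigma=0$ the identity is trivial, so assume $\sigma>0$.)

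To compute $g$, I will derive a differential equation rather than shift a contour, because it avoids complex integration. Since $\abs{\rscalar{w}\cscalar{e}|is\rscalar{w}|\funcsbr{\phi}[\rscalar{w}]}\leq\abs{\rscalar{w}}\funcsbr{\phi}[\rscalar{w}]$ is integrable and does not depend on $s$, dominated convergence allows differentiation under the integral sign:
\begin{equation*}
    \funcsbr{g'}[s] = \int_{-\infty}^{+\infty} iw\,\cscalar{e}|isw|\funcsbr{\phi}[w]\,dw.
\end{equation*}
Next I use $w\funcsbr{\phi}[w] = -\funcsbr{\phi'}[w]$ and integrate by parts. The boundary terms vanish because $\phi$ decays at $\pm\infty$ while $\cscalar{e}|isw|$ stays bounded. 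This gives
\begin{equation*}
    \funcsbr{g'}[s] = i\int_{-\infty}^{+\infty} is\,\cscalar{e}|isw|\funcsbr{\phi}[w]\,dw = -s\,\funcsbr{g}[s].
\end{equation*}
Together with $\funcsbr{g}[0]=1$, uniqueness for this linear ODE yields $\funcsbr{g}[s]=\cscalar{e}|-s^2/2|$. Concretely, $\frac{d}{ds}\sbr{\cscalar{e}|s^2/2|\funcsbr{g}[s]}=0$, so the bracket is constant.

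The only delicate step is the analytic justification: differentiating under the integral and discarding the boundary terms. Both follow from Gaussian decay, as noted above.

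An alternative route, suggested by the rectangular contour figure, is to complete the square, $isw - w^2/2 = -(w-is)^2/2 - s^2/2$, and shift the line of integration from $\mathrm{Im} = -s$ back to the real axis. This uses Cauchy's theorem on the rectangle with corners $(R,-s),(R,0),(-R,0),(-R,-s)$; the vertical sides contribute $\bigO{\cscalar{e}|-R^2/2|}$ and vanish as $R\to\infty$. If the ODE argument is judged insufficiently elementary, I would fall back on this version, and its main obstacle is bounding those vertical edges.
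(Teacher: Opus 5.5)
The paper gives no proof of this statement. It records it as a standard fact with a citation to \citet{bryc1995normal}, so there is no internal argument to match, and your proof is a correct, self-contained substitute. The affine reduction $z=\mu+\sigma w$ is right, and so is the ODE step $g'(s)=-s\,g(s)$, $g(0)=1$. Differentiation under the integral is dominated by $|w|\phi(w)$, the boundary terms vanish, and the integrating factor $e^{s^2/2}$ gives uniqueness directly, with no appeal to general ODE theory.

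The nearest analogue in the paper is \lemmaref{lma:centering-of-partial-sub-gaussian-ft}. It evaluates the half-line integral $\int_\alpha^{\infty}e^{-at^2}\sin(\omega t)\,dt$ by completing the square and applying Cauchy--Goursat on exactly the kind of rectangle you list as your fallback. So the contour version fits the paper's own toolkit. Your ODE version needs only dominated convergence and one integration by parts.

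The ODE route also covers the paper's half-line case. For $h(s)=\int_\alpha^\infty e^{isw}\phi(w)\,dw$, the same integration by parts leaves a boundary term at $w=\alpha$, giving the inhomogeneous equation $h'(s)+s\,h(s)=i\,\phi(\alpha)e^{is\alpha}$. Solving with the factor $e^{s^2/2}$ and taking imaginary parts yields $\int_\alpha^\infty\sin(sw)\phi(w)\,dw=\phi(\alpha)e^{-s^2/2}\int_0^s e^{t^2/2}\cos(\alpha t)\,dt$. This is \lemmaref{lma:centering-of-partial-sub-gaussian-ft} with $a=1/2$, up to the normalization of $\phi$ and the substitution $t\mapsto t/\sqrt{2}$, obtained without any complex contour.
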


    \begin{lemma}[Cauchy–Goursat Theorem]
    \label{lma:cauchy-goursat-theorem}
        Let $U\subseteq \complex$ be a simply connected open set and $f:U\rightarrow\complex$ be a analytic(holomorphic) function on $U$. Then, for every closed contour $C\subseteq U$ on the complex plane, it holds that $\oint_C \funcsbr{f}[z]dz = 0$.
    \end{lemma}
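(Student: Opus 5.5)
This is the classical Cauchy–Goursat theorem, and I would prove it in the standard three stages: triangles, then local primitives, then arbitrary closed contours in a simply connected domain. Only holomorphy of $f$ is assumed, with no continuity of $f'$. So I would avoid the Green's theorem shortcut, which needs $f\in C^1$, and use Goursat's subdivision argument instead.

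\textbf{Step 1 (triangles).} Let $\Delta\subseteq U$ be a closed solid triangle with boundary $\partial\Delta$, and let $I=\abs{\oint_{\partial\Delta} f(z)\,dz}$. Joining the midpoints of the sides splits $\Delta$ into four congruent triangles. The interior edges cancel in pairs, so the four boundary integrals sum to the original, and one of them, $\Delta_1$, satisfies $\abs{\oint_{\partial\Delta_1} f}\ge I/4$. Iterating gives nested triangles $\Delta_n$ with
\begin{equation*}
\abs{\oint_{\partial\Delta_n} f}\ge 4^{-n}I,\qquad \mathrm{diam}(\Delta_n)=2^{-n}\mathrm{diam}(\Delta),\qquad \mathrm{len}(\partial\Delta_n)=2^{-n}\mathrm{len}(\partial\Delta).
\end{equation*}
By compactness the $\Delta_n$ shrink to a point $z_0\in U$. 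Near $z_0$ write $f(z)=f(z_0)+f'(z_0)(z-z_0)+\epsilon(z)(z-z_0)$ with $\epsilon(z)\to 0$. The affine part has an explicit primitive, so its integral over any closed contour vanishes. Hence
\begin{equation*}
\abs{\oint_{\partial\Delta_n} f}\le \sup_{\Delta_n}\abs{\epsilon}\cdot \mathrm{diam}(\Delta_n)\,\mathrm{len}(\partial\Delta_n) = o(4^{-n}).
\end{equation*}
Comparing with the lower bound $4^{-n}I$ forces $I=0$.

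\textbf{Step 2 (local primitive).} On any open disc $D\subseteq U$ with center $a$, define $F(z)=\int_{[a,z]} f(w)\,dw$ along the straight segment. By Step 1 applied to the triangle with vertices $a,z,z+h$,
\begin{equation*}
F(z+h)-F(z)=\int_{[z,z+h]} f(w)\,dw.
\end{equation*}
Continuity of $f$ then gives $F'(z)=f(z)$. Therefore $\oint_C f=0$ for every closed contour $C$ lying inside a disc.

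\textbf{Step 3 (global statement).} This is the main obstacle: passing from convex pieces to an arbitrary closed contour in a simply connected $U$. Here I would take simple connectivity to mean that every closed curve is null-homotopic in $U$. Let $H:[0,1]^2\to U$ be a homotopy from $C$ to a constant loop. By compactness of $H([0,1]^2)$ and a Lebesgue number argument, subdivide $[0,1]^2$ into small squares, each mapped into a disc contained in $U$.

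On each such disc, Step 2 provides a primitive, so integrals of $f$ along image curves depend only on their endpoints. One technical point is that the intermediate curves of the homotopy need not be rectifiable. I would handle this by replacing each curve with the polygonal path through the images of the grid points; this does not change the integral, by Step 2 on each disc.

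Summing around the grid, the contributions of interior edges cancel. This shows that the integral over $C$ equals the integral over the constant loop, which is $0$.

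Alternatively, Step 3 can be done by constructing a global primitive: define $F(z)=\int_{\gamma_z} f$ along any path $\gamma_z$ from a base point to $z$. Well-definedness of $F$ is exactly the homotopy invariance just described, and then $\oint_C f=\oint_C F'=0$.
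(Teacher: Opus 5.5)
Your argument is correct. It is the standard textbook proof: Goursat subdivision for triangles, a local primitive on discs, then a grid/Lebesgue-number argument over a null-homotopy. The two technical points you flag are handled properly: you avoid Green's theorem because $f'$ need not be continuous, and you replace possibly non-rectifiable intermediate curves by polygonal paths through the grid images.

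The paper, by contrast, gives no proof at all. It quotes Cauchy--Goursat as a classical fact and uses it exactly once, in the centering lemma for the partial sub-Gaussian Fourier transform, for the entire function $e^{-az^2}$ integrated around a rectangle. For that use, your Step 3 is unnecessary. Step 2 works verbatim with the disc $D$ replaced by any convex open set, in particular $U=\mathbb{C}$. Since $e^{-az^2}$ is $C^1$, even Goursat's subdivision could be skipped in favor of Green's theorem. Alternatively, termwise integration of the power series $\sum_{n\ge 0}(-a)^n z^{2n}/n!$ gives an entire primitive, so the integral over any closed contour vanishes by the fundamental theorem of calculus.

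Your route buys the lemma in the generality the paper states it: arbitrary simply connected $U$, and holomorphy without assuming continuity of $f'$. The paper's citation buys brevity, at no cost to its results, since only the entire, convex-domain case is ever invoked.
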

    
    \begin{lemma}[Sub-Gaussian FT]
    \label{lma:sub-gaussian-fourier-transform}
        For any $\omega\in \real$, it holds that $\expect<\rscalar{z}\sim\gaussian[\mu][\sigma^2]>{\funcsbr{\sin}[\omega\rscalar{z}]} = \funcsbr{\sin}[\mu\omega]\cscalar{e}|-\sigma^2\omega^2/2|$.
    \end{lemma}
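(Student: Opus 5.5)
We need to prove that $\mathbb{E}_{z\sim N(\mu,\sigma^2)}[\sin(\omega z)] = \sin(\mu\omega)e^{-\sigma^2\omega^2/2}$. The plan is to take imaginary parts in \cref{fac:fourier-transform-of-gaussian}.

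The first step is the characteristic function. By \cref{fac:fourier-transform-of-gaussian}, for every real $\omega$ we have
\[
\mathbb{E}[e^{i\omega z}] = e^{i\mu\omega}e^{-\sigma^2\omega^2/2}.
\]

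The second step expands both sides with Euler's formula $e^{i\theta}=\cos\theta+i\sin\theta$. Because $|\cos|,|\sin|\le 1$, both are integrable under the Gaussian measure, so linearity of expectation gives
\[
\mathbb{E}[e^{i\omega z}] = \mathbb{E}[\cos(\omega z)] + i\,\mathbb{E}[\sin(\omega z)].
\]
Since $\mathbb{E}[\cos(\omega z)]$ and $\mathbb{E}[\sin(\omega z)]$ are real, this is the decomposition of the left side into real and imaginary parts. The right side equals $e^{-\sigma^2\omega^2/2}(\cos(\mu\omega)+i\sin(\mu\omega))$, and the factor $e^{-\sigma^2\omega^2/2}$ is real.

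The third step equates imaginary parts, which yields
\[
\mathbb{E}[\sin(\omega z)] = \sin(\mu\omega)e^{-\sigma^2\omega^2/2}.
\]
The same comparison of real parts gives the cosine identity as a by-product.

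If one prefers not to rely on the cited fact, it can be derived directly. Write $z=\mu+\sigma g$ with $g\sim N(0,1)$ and complete the square, so that
\[
\mathbb{E}[e^{i\omega g}] = e^{-\omega^2/2}\frac{1}{\sqrt{2\pi}}\int_{\mathbb{R}} e^{-(t-i\omega)^2/2}\,dt .
\]
The only delicate point is justifying that this shifted contour integral equals $\sqrt{2\pi}$. For that, apply \lemmaref{lma:cauchy-goursat-theorem} to the entire function $e^{-w^2/2}$ on the rectangle with vertices $\pm R$ and $\pm R - i\omega$. On the vertical sides $|e^{-w^2/2}|\le e^{-(R^2-\omega^2)/2}$, so those contributions vanish as $R\to\infty$. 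This contour-shift argument is the main (and mild) obstacle. Everything else is bookkeeping with Euler's formula.
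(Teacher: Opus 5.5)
Your proposal is correct and matches the paper's proof: both write $\sin(\omega z)$ as the imaginary part of $e^{i\omega z}$, move the imaginary part through the expectation, apply the Gaussian characteristic function, and read off the imaginary part. Your optional contour-shift derivation of that characteristic function is sound; the paper does not derive it and simply cites it as a fact.
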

    \begin{proof}
        Notice that, by \emph{Euler's formula}, we have that
        \begin{align*}
             \expect<\rscalar{z}\sim\gaussian[\mu][\sigma^2]>{\funcsbr{\sin}[\omega\rscalar{z}]} =& \expect<\rscalar{z}\sim\gaussian[\mu][\sigma^2]>{\image[\cscalar{e}|i\omega\rscalar{z}|]}
             \\
             =& \image*[\expect<\rscalar{z}\sim\gaussian[\mu][\sigma^2]>{\cscalar{e}|i\omega\rscalar{z}|}]
             \\
             \ceq[i]& \image[\cscalar{e}|i\mu\omega|\cscalar{e}|-\sigma^2\omega^2/2|]
             \\
             =& \funcsbr{\sin}[\mu\omega]\cscalar{e}|-\sigma^2\omega^2/2|
        \end{align*}
        where Equation (i) holds because of \cref{fac:fourier-transform-of-gaussian}. The last equation is obtained by another application of Euler's formula.
    \end{proof}

    \begin{lemma}[Centering of Partial Sub-Gaussian FT]
    \label{lma:centering-of-partial-sub-gaussian-ft}
        For any $a > 0$, $\alpha\geq 0$, and $\omega\in\real$, it holds that
        \begin{equation*}
            \int_{\alpha}^{+\infty} \cscalar{e}|-at^2|\funcsbr{\sin}[\omega t] dt = \frac{\cscalar{e}|-a\alpha^2 - \omega^2/4a|}{\sqrt{a}} \int_{0}^{\omega/2\sqrt{a}} \cscalar{e}|t^2|\funcsbr{\cos}[2\alpha\sqrt{a} t] dt.
        \end{equation*}
    \end{lemma}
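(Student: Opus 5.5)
The plan is to complete the square in the exponent, shift the line of integration into the complex plane, and use \lemmaref{lma:cauchy-goursat-theorem} to trade the horizontal line for a short vertical segment; the vertical segment will produce the stated right-hand side. Write $\beta = \omega/(2a)$. Since $\funcsbr{\sin}[\omega t] = \image[\cscalar{e}|i\omega t|]$ and $e^{-at^2}$ is real, the left-hand side equals $\image*[\int_\alpha^{+\infty} e^{-at^2 + i\omega t}dt]$. Completing the square gives
\begin{equation*}
    -at^2 + i\omega t = -a\sbr{t - i\beta}^2 - \frac{\omega^2}{4a},
\end{equation*}
so the left-hand side is $e^{-\omega^2/4a}\,\image*[\int_\alpha^{+\infty} e^{-a(t-i\beta)^2}dt]$. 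Substituting $s = t - i\beta$, this is the integral of $g(s) = e^{-as^2}$ along the horizontal ray from $\alpha - i\beta$ to $+\infty - i\beta$.

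Next, apply \lemmaref{lma:cauchy-goursat-theorem} to the entire function $g$ on the rectangle with corners $(\alpha,-\beta)$, $(R,-\beta)$, $(R,0)$ and $(\alpha,0)$, as in the contour figure. Then send $R\to\infty$. On the right vertical side $s = R + iy$ with $\abs{y}\leq\abs{\beta}$, we have $\abs{g} = e^{-aR^2 + ay^2}$, so that side's contribution vanishes. Hence the shifted integral equals $\int_\alpha^{+\infty} e^{-as^2}ds$ plus the integral over the left vertical side, traversed from $\alpha - i\beta$ up to $\alpha$. The real-axis piece is real, so it drops out when we take the imaginary part. This leaves only the vertical segment.

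On that segment $s = \alpha + iy$ with $y$ running from $-\beta$ to $0$, and $ds = i\,dy$. So its contribution is
\begin{equation*}
    i\,e^{-a\alpha^2}\int_{-\beta}^{0} e^{ay^2}\sbr{\funcsbr{\cos}[2a\alpha y] - i\funcsbr{\sin}[2a\alpha y]}dy.
\end{equation*}
Its imaginary part is $e^{-a\alpha^2}\int_{-\beta}^0 e^{ay^2}\funcsbr{\cos}[2a\alpha y]dy$. Because the integrand is even, this equals $e^{-a\alpha^2}\int_0^{\beta} e^{ay^2}\funcsbr{\cos}[2a\alpha y]dy$; this also covers $\omega<0$, where both sides flip sign consistently. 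Finally, substitute $t = \sqrt{a}\,y$. Then $dy = dt/\sqrt{a}$, the upper limit becomes $\omega/(2\sqrt{a})$, and the cosine argument becomes $2\alpha\sqrt{a}\,t$. Multiplying by the prefactor $e^{-\omega^2/4a}$ gives exactly the claimed identity.

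The main obstacle is the contour step: getting the orientation and sign of the vertical segment right, and justifying that the far vertical side vanishes as $R\to\infty$. Everything else is routine algebra.
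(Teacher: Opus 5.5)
Your proof is correct and follows essentially the same route as the paper: Euler's formula, completing the square, applying Cauchy--Goursat on the same rectangle with corners $(\alpha,-\beta),(R,-\beta),(R,0),(\alpha,0)$, discarding the real-axis piece when taking imaginary parts, and evaluating the left vertical segment. The only differences are cosmetic. You parametrize that segment as $s=\alpha+iy$ and then use evenness plus the rescaling $t=\sqrt{a}\,y$, whereas the paper substitutes $z=\alpha-it/\sqrt{a}$ directly; you also explicitly bound the far vertical side, which the paper merely asserts.
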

    \begin{proof}
        By \emph{Euler's formula}, we have that
        \begin{align}
            \int_{\alpha}^{+\infty} \cscalar{e}|-at^2|\funcsbr{\sin}[\omega t] dt =& \image*[\int_{\alpha}^{+\infty} \cscalar{e}|-at^2 + i\omega t| dt]
            \notag
            \\
            =& \cscalar{e}|-\omega^2/4a|\image*[\int_{\alpha}^{+\infty} \cscalar{e}|-a(t - i\omega/2a)^2| dt]
            \notag
            \\
            =& \cscalar{e}|-\omega^2/4a|\image*[\int_{\alpha - i\omega/2a}^{+\infty - i\omega/2a} \cscalar{e}|-az^2| dz]
            \label{eq:applying-euler-formula-and-change-variable-to-partial-fourier-transform}
        \end{align}
        where the last equation is obtained by changing the variable $t$ with $z + i\omega/2a$.
        
        Notice that $\cscalar{e}|-az^2|$ is differentiable on the entire complex domain, and, therefore, is \emph{holomorphic} on $\complex$. Consider the following closed rectangle contour on the complex plane (see \cref{fig:demonstration-of-the-closed-contour}):
        \begin{equation*}
            (\alpha, -\omega/2a) \rightarrow (R, -\omega/2a) \rightarrow (R, 0) \rightarrow (\alpha, 0) \rightarrow (\alpha, -\omega/2a).
        \end{equation*}

        \begin{figure}
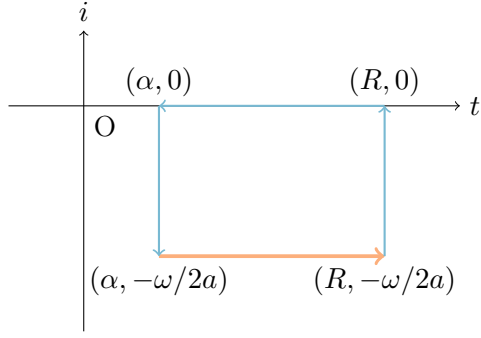

            \begin{center}
                \drawclosedcontour{1}{\omega/2a}{\alpha}
            \end{center}
            \caption{A demonstration of the closed contour.}
            \label{fig:demonstration-of-the-closed-contour}
        \end{figure}

        By \lemmaref{lma:cauchy-goursat-theorem} (Cauchy–Goursat Theorem), it holds that
        \begin{equation}
            \int_{\alpha - i\omega/2a}^{R - i\omega/2a} \cscalar{e}|-az^2| dz = - \int_{R - i\omega/2a}^R \cscalar{e}|-az^2| dz - \int_R^\alpha \cscalar{e}|-az^2| dz - \int_\alpha^{\alpha - i\omega/2a} \cscalar{e}|-az^2| dz. \label{eq:applying-cauchy-goursat-theorem}
        \end{equation}
        Observe that the first term on the RHS, $\int_{R - i\omega/2a}^R \cscalar{e}|-az^2| dz$, vanishes as $R\rightarrow+\infty$. Then, by taking the imaginary part of  Equation \eqref{eq:applying-cauchy-goursat-theorem} with $R\rightarrow +\infty$, we have that
        \begin{align}
            \image*[\int_{\alpha - i\omega/2a}^{R - i\omega/2a} \cscalar{e}|-az^2| dz] =& \image*[\int_{\alpha - i\omega/2a}^{\alpha} \cscalar{e}|-az^2| dz]
            \notag
            \\
            \ceq[i]& \image*[-i\int_{\omega/2\sqrt{a}}^0 \cscalar{e}|-a(\alpha - it/\sqrt{a})^2| dt]
            \notag
            \\
            =& \frac{\cscalar{e}|-a\alpha^2|}{\sqrt{a}}\image*[i\int_{0}^{\omega/2\sqrt{a}} \cscalar{e}|t^2 + i2\alpha\sqrt{a} t| dt]
            \notag
            \\
            =& \frac{\cscalar{e}|-a\alpha^2|}{\sqrt{a}} \int_{0}^{\omega/2\sqrt{a}} \cscalar{e}|t^2|\funcsbr{\cos}[2\alpha\sqrt{a} t] dt
            \label{eq:final-simplification-of-the-imaginary-part-of-the-partial-fourier-transform}
        \end{align}
        where Equation (i) holds because of the change of variables $z = \alpha - it/\sqrt{a}$, and the last equation holds by applying Euler's formula.

        Finally, plugging Equation \eqref{eq:final-simplification-of-the-imaginary-part-of-the-partial-fourier-transform} back in to Equation \eqref{eq:applying-euler-formula-and-change-variable-to-partial-fourier-transform} gives the claimed result.
    \end{proof}

    % Notice that \lemmaref{lma:centering-of-partial-sub-gaussian-ft} immediately gives the following corollary with the simple variable changing, $t = \omega u/2\sqrt{a}$.
    % \begin{corollary}
    % \label{cor:centering-of-partial-sub-gaussian-ft}
    %     For any $a > 0$, $\alpha\geq 0$, and $\omega\in\real$, it holds that
    %     \begin{equation*}
    %         \int_{\alpha}^{+\infty} \cscalar{e}|-at^2|\funcsbr{\sin}[\omega t] dt = \frac{\omega\cscalar{e}|-a\alpha^2 - \omega^2/4a|}{2a} \int_{0}^{1} \cscalar{e}|\omega^2 u^2/4a|\funcsbr{\cos}[\alpha\omega u] du.
    %     \end{equation*}
    % \end{corollary}

    % \begin{fact}[\emph{Equation 7.4.7} in \citet{abramowitz1948handbook}]
    % \label{fac:partial-fourier-transform-of-sub-gaussian-density-start-from-zero}
    %     For any $a > 0$ and $\omega\in\real$, it holds that
    %     \begin{equation*}
    %         \int_0^{+\infty} \cscalar{e}|-at^2|\funcsbr{\sin}[\omega t] dt = \frac{1}{\sqrt{a}}\cscalar{e}|-\omega^2/4a|\int_0^{\omega/2\sqrt{a}}\cscalar{e}|t^2| dt.
    %     \end{equation*}
    % \end{fact}

    Using \lemmaref{lma:centering-of-partial-sub-gaussian-ft}, we may derive a loose approximation as follows.
    \begin{corollary}
    \label{cor:partial-sub-gaussian-ft-lower-bound-intermediate}
        For any $a, \alpha > 0$ and $\omega\in\real$, if $\alpha\omega = 2k\pi$ for some $k\in\integer$, we have that
        \begin{equation*}
            \int_\alpha^{+\infty} \cscalar{e}|-at^2|\funcsbr{\sin}[\omega t] dt \geq \frac{\cscalar{e}|-2a\alpha^2 - \omega^2/8a|}{\sqrt{2a}}\int_0^{\omega/2\sqrt{2a}}\cscalar{e}|t^2|dt.
        \end{equation*}
    \end{corollary}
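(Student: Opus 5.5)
The plan is to reduce the statement to the case $\alpha = 0$ of \lemmaref{lma:centering-of-partial-sub-gaussian-ft}, using the condition $\alpha\omega = 2k\pi$ to shift the integration variable without changing the oscillating factor.

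\textbf{Step 1 (shift).} Substitute $t = \alpha + s$. Since $\alpha\omega = 2k\pi$, we have $\sin(\omega(\alpha+s)) = \sin(\omega s)$, so
\begin{equation*}
\int_\alpha^{+\infty} e^{-at^2}\sin(\omega t)\,dt = \int_0^{+\infty} e^{-a(\alpha+s)^2}\sin(\omega s)\,ds .
\end{equation*}

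\textbf{Step 2 (identify the right-hand side).} Apply \lemmaref{lma:centering-of-partial-sub-gaussian-ft} with $a$ replaced by $2a$ and $\alpha = 0$. The cosine factor becomes $1$, which gives
\begin{equation*}
e^{-2a\alpha^2}\int_0^{+\infty} e^{-2as^2}\sin(\omega s)\,ds = \frac{e^{-2a\alpha^2-\omega^2/8a}}{\sqrt{2a}}\int_0^{\omega/2\sqrt{2a}} e^{t^2}\,dt .
\end{equation*}
This is exactly the claimed lower bound. It therefore suffices to show $\int_0^{\infty} g(s)\sin(\omega s)\,ds \ge 0$, where
\begin{equation*}
g(s) = e^{-a(\alpha+s)^2} - e^{-2a\alpha^2-2as^2} = e^{-2a\alpha^2-2as^2}\left(e^{a(s-\alpha)^2}-1\right).
\end{equation*}
The factorization follows from the identity $-a(\alpha+s)^2 + 2a\alpha^2 + 2as^2 = a(s-\alpha)^2$. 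In particular $g\ge 0$ and $g(s)\to 0$ as $s\to\infty$. We may assume $\omega > 0$ (the case $\omega = 0$ is trivial, and otherwise flip signs).

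\textbf{Step 3 (sign of the oscillatory integral).} Split $[0,\infty)$ into half-periods of length $\pi/\omega$:
\begin{equation*}
\int_0^{\infty} g(s)\sin(\omega s)\,ds=\sum_{j\ge 0}\int_0^{\pi/\omega}\sin(\omega r)\left[g\!\left(r+\tfrac{2j\pi}{\omega}\right)-g\!\left(r+\tfrac{(2j+1)\pi}{\omega}\right)\right]dr .
\end{equation*}
Nonnegativity therefore follows if $g$ is nonincreasing on $[0,\infty)$, and more weakly it suffices that $g(s)\ge g(s+\pi/\omega)$ for all $s \ge 0$.

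\textbf{Main obstacle.} I expect the hard part to be establishing this monotonicity, since $g$ is a difference of two decreasing Gaussians. My first attempt is to differentiate $g$ and compare $2a(\alpha+s)e^{-a(\alpha+s)^2}$ with $4as\,e^{-2a\alpha^2-2as^2}$. Equivalently, I would show that $2(\alpha+s)$ exceeds $4s\,e^{-a(s-\alpha)^2}$, using the bound $e^{-x}\le 1$ and treating the regions $s\le\alpha$ and $s\ge\alpha$ separately.

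If global monotonicity fails near $s = 0$, I would fall back on a different route. Apply \lemmaref{lma:centering-of-partial-sub-gaussian-ft} directly at general $\alpha$. Rescale $t=\omega u/(2\sqrt a)$, so that the cosine factor becomes $\cos(2k\pi u)$. Then compare
\begin{equation*}
e^{-a\alpha^2-\omega^2/4a}\int_0^1 e^{\omega^2u^2/4a}\cos(2k\pi u)\,du
\quad\text{with}\quad
e^{-2a\alpha^2-\omega^2/8a}\int_0^1 e^{\omega^2u^2/8a}\,du,
\end{equation*}
using that $e^{cu^2}$ is increasing and convex on $[0,1]$, so that its integral against $\cos(2k\pi u)$ over each full period is nonnegative.
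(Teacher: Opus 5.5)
\paragraph{Gap in Step 3.} Your Steps 1--2 are exactly the paper's proof: shift by $\alpha$, use $\alpha\omega=2k\pi$, then apply \lemmaref{lma:centering-of-partial-sub-gaussian-ft} with $\alpha=0$ and $a\to 2a$. The paper then finishes by applying $2\alpha u\le\alpha^2+u^2$ inside the integral, i.e.\ by $g\ge 0$ alone. That is not valid, because $\sin(\omega u)$ changes sign, and you were right to reduce instead to $\int_0^\infty g(s)\sin(\omega s)\,ds\ge 0$. But your Step 3 does not prove this.

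Your own factorization $g(s)=e^{-2a\alpha^2-2as^2}(e^{a(s-\alpha)^2}-1)$ gives $g(\alpha)=0<g(s)$ for every $s\neq\alpha$. So $g$ is not nonincreasing, and the weaker condition $g(s)\ge g(s+\pi/\omega)$ already fails at $s=\alpha$. Your derivative test $2(\alpha+s)\ge 4s\,e^{-a(s-\alpha)^2}$ does hold on $[0,\alpha]$. It fails immediately to the right of $\alpha$, not near $s=0$: at $s=\alpha+\epsilon$ the left side is $4\alpha+2\epsilon$ while the right side is $4\alpha+4\epsilon-O(\epsilon^2)$.

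Because $\alpha=2k\pi/\omega$ is exactly the start of a period, the half-period pair just after $\alpha$ contributes negatively when $\omega$ is large. So no termwise argument can work; you have to trade mass between regions. When $\omega^2/a$ is large, $[0,\alpha]$ gives a surplus of about $g(0)/\omega\approx a\alpha^2/\omega=4k^2\pi^2a/\omega^3$. The tail $[\alpha,\infty)$ gives about $-g''(\alpha)/\omega^3\approx-2a/\omega^3$. A proof has to show the surplus wins.

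\paragraph{The fallback and the sign of $\omega$.} The fallback has the same problem in another form. Keeping the factor $2$ you dropped in the rescaling, you need
$2\int_0^1 e^{2cu^2}\cos(2k\pi u)\,du\ge e^{c-a\alpha^2}\int_0^1 e^{cu^2}\,du$, where $c=\omega^2/8a$ and $a\alpha^2=k^2\pi^2/2c$. Periodwise positivity of an increasing convex function against $\cos(2k\pi u)$ only shows the left side is nonnegative. Moreover, both sides are $\sim e^{2c}/(2c)$ as $c\to\infty$ with $k$ fixed. Equivalently, in the original form both sides are $\approx 1/\omega$ and differ by only $\approx(4k^2\pi^2-2)a/\omega^3$. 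Any valid proof must therefore be sharp to second order, which a sign argument cannot be.

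Finally, ``otherwise flip signs'' is wrong. Both sides are odd in $\omega$ and the constraint $\alpha\omega\in 2\pi\mathbb{Z}$ is symmetric, so replacing $\omega$ by $-\omega$ reverses the inequality. The statement as written is in fact false for $\omega<0$: with $a=1/2$, $\alpha=1$, $\omega=-2\pi$, the left side is $\approx-0.096$ and the right side is $\approx-0.062$. You should restrict to $\omega>0$, which is all the paper uses ($\omega\ge 2$).
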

    \begin{proof}
        Notice that, by changing the variable $t = \cscalar{u} + \alpha$, we have that
        \begin{align*}
            \int_\alpha^{+\infty}\cscalar{e}|-at^2|\funcsbr{\sin}[\omega t] dt =& \int_0^{+\infty}\cscalar{e}|-a(\cscalar{u} + \alpha)^2|\funcsbr{\sin}[\omega (\cscalar{u} + \alpha)] d\cscalar{u}
            \\
            \ceq[i]& \int_0^{+\infty}\cscalar{e}|-a({\cscalar{u}|2|} + 2\alpha\cscalar{u} + \alpha^2)|\funcsbr{\sin}[\omega \cscalar{u}] d\cscalar{u}
            \\
            \cgeq[ii] &\cscalar{e}|-2a\alpha^2|\int_0^{+\infty}\cscalar{e}|-2a{\cscalar{u}|2|}|\funcsbr{\sin}[\omega \cscalar{u}] d\cscalar{u}
            \\
            =& \frac{\cscalar{e}|-2a\alpha^2 - \omega^2/8a|}{\sqrt{2a}}\int_0^{\omega/2\sqrt{2a}}\cscalar{e}| {\cscalar{u}|2|} | d\cscalar{u}
        \end{align*}
        where Equation (i) holds due to the assumption that $\alpha\omega = 2k\pi$ for some $k\in\integer$; Inequality (ii) is obtained by applying the elementary inequality $2\alpha\cscalar{u}\leq \alpha^2 + \cscalar{u}|2|$; and the last equation comes from an application of \lemmaref{lma:centering-of-partial-sub-gaussian-ft} on the integral with $\alpha = 0$ and $a\rightarrow 2a$.
    \end{proof}

    \begin{lemma}[Partial Gaussian FT Lower Bound]
    \label{lma:partial-fourier-transform-of-gaussian-density-start-from-alpha}
        For any $\alpha \geq 0$ and $\omega\geq 2$ such that $\alpha\omega = 2k\pi$ for some $k\in\integer$, it holds that $\int_{\alpha}^{+\infty}\funcsbr{\sin}[\omega \rscalar{z}]\funcsbr{\phi}[\rscalar{z}]d\rscalar{z} = \cscalar{e}|-\alpha^2|/4\omega$.
    \end{lemma}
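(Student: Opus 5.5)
The displayed ``$=$'' cannot hold as an identity for every admissible $\alpha,\omega$. What \lemmaref{lma:integral-of-noisy-square-wave-on-gaussian-shift} actually uses, after replacing $\omega$ by $(2k-1)\omega\geq 2$, is the lower bound
\begin{equation*}
\int_{\alpha}^{+\infty}\sin(\omega z)\phi(z)\,dz\geq \frac{e^{-\alpha^2}}{4\omega}.
\end{equation*}
I will prove this inequality. The approach is to write $\phi(z)=e^{-z^2/2}/\sqrt{2\pi}$, i.e.\ $a=1/2$ in the earlier notation, and reduce everything to a lower bound on the Dawson-type integral $\int_0^{x}e^{t^2}\,dt$.

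\textbf{Step 1 (reduction to $\alpha$-free form).} For $\alpha>0$ with $\alpha\omega\in2\pi\integer$, I apply \cref{cor:partial-sub-gaussian-ft-lower-bound-intermediate} with $a=1/2$, so $2a=1$. This gives
\begin{equation*}
\int_{\alpha}^{+\infty}\sin(\omega z)\phi(z)\,dz\geq \frac{1}{\sqrt{2\pi}}\,e^{-\alpha^2-\omega^2/4}\int_0^{\omega/2}e^{t^2}\,dt.
\end{equation*}
For $\alpha=0$ the corollary's hypothesis $\alpha>0$ fails. Instead I use \lemmaref{lma:centering-of-partial-sub-gaussian-ft} directly with $a=1/2$, which yields the exact value $e^{-\omega^2/2}\int_0^{\omega/\sqrt2}e^{t^2}dt/\sqrt{\pi}$. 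This is handled by the same argument as below with $x=\omega/\sqrt2$ in place of $x=\omega/2$. Alternatively, the case $\alpha=0$ is covered by the displayed bound, since its derivation only used $2\alpha u\le\alpha^2+u^2$, which is trivially valid at $\alpha=0$.

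\textbf{Step 2 (lower bound on $\int_0^x e^{t^2}dt$).} For $x>0$ and $t\in[0,x]$ we have $e^{t^2}\geq (t/x)e^{t^2}$. Integrating gives
\begin{equation*}
\int_0^x e^{t^2}\,dt\geq \frac{e^{x^2}-1}{2x}.
\end{equation*}
With $x=\omega/2$ this is $(e^{\omega^2/4}-1)/\omega$.

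\textbf{Step 3 (combine).} Substituting into Step 1, the factor $e^{-\omega^2/4}$ cancels the main exponential, leaving
\begin{equation*}
\int_{\alpha}^{+\infty}\sin(\omega z)\phi(z)\,dz\geq \frac{e^{-\alpha^2}\,(1-e^{-\omega^2/4})}{\sqrt{2\pi}\,\omega}.
\end{equation*}
Since $\omega\geq2$, we have $1-e^{-\omega^2/4}\geq 1-e^{-1}\approx 0.632$. Also $0.632/\sqrt{2\pi}\approx 0.252>1/4$, which gives the claimed $e^{-\alpha^2}/(4\omega)$.

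The main obstacle is Step 2 together with this final constant check. A cruder bound on the Dawson integral, for example $\int_0^x e^{t^2}dt\geq x$, would lose the crucial $e^{x^2}$ growth needed to cancel $e^{-\omega^2/4}$. The linear-weight trick keeps that growth, and the hypothesis $\omega\geq 2$ is exactly what makes the resulting constant clear $1/4$, with little room to spare.
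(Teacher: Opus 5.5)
You are right to read the displayed ``$=$'' as ``$\geq$''. The paper's own proof ends with a strict ``$>$'', and only the lower bound is used downstream.

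\textbf{Comparison with the paper.} For $\alpha>0$ your Step~1 is exactly the paper's: the intermediate corollary with $a=1/2$. The two routes part at Step~2.
The paper invokes its inverse sub-Gaussian lemma with $a=1$ and $\beta=\omega/2$, i.e.\ $\int_0^{\beta}e^{t^2}dt\geq e^{\beta^2}/(3\beta)$ for $\beta\geq1$. That lemma is proved by a value check at $\beta=1$ plus a derivative argument, and this is where the paper uses $\omega\geq2$. It yields the constant $2/(3\sqrt{2\pi})\approx0.266$.
Your weighting $t/x\leq1$ gives $(e^{x^2}-1)/(2x)$ in one line for every $x>0$. Its leading constant is the asymptotically correct $1/2$ rather than $1/3$. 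The cost is the $-1$: at $\omega=2$ it leaves only $(1-e^{-1})/\sqrt{2\pi}\approx0.252$, so you use $\omega\geq2$ at the end instead.
Your separate treatment of $\alpha=0$, via the exact centering formula, is a real improvement. The corollary is stated only for $\alpha>0$, yet the paper applies it at $\alpha=0$ anyway. Moreover, $\alpha=0$ (the case $k=0$ of the band proposition) is precisely the case the hardness theorems use.

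\textbf{The ``Alternatively'' sentence is false and should be deleted.} Inequality (ii) in the corollary's proof multiplies the pointwise bound $e^{-a(u+\alpha)^2}\geq e^{-2a\alpha^2-2au^2}$ by $\sin(\omega u)$. Since $\sin(\omega u)$ changes sign, no integral inequality follows.
At $\alpha=0$ and $a=1/2$ the claimed bound in fact fails for large $\omega$:
\begin{itemize}
\item $\int_0^\infty e^{-t^2/2}\sin(\omega t)\,dt=1/\omega+1/\omega^3+O(\omega^{-5})$;
\item $e^{-\omega^2/4}\int_0^{\omega/2}e^{t^2}dt=\int_0^\infty e^{-u^2}\sin(\omega u)\,du=1/\omega+2/\omega^3+O(\omega^{-5})$.
\end{itemize}
Numerically, at $\omega=4$ these are about $0.270$ and $0.301$. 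Your main route for $\alpha=0$ is unaffected, since it never uses this bound.

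\textbf{Caveat for $\alpha>0$.} The same invalid step is the only justification the paper gives for the corollary when $\alpha>0$. So your Step~1, like the paper's proof, rests on an inequality that is unproved as written.
It does appear to be true when $\alpha\omega=2k\pi$ with $k\geq1$. For instance, with $k$ fixed and $\omega\to\infty$ the gap is $(2k^2\pi^2-1)/\omega^3+O(\omega^{-5})>0$. Still, a fully rigorous version of your proof needs an independent argument for that case.
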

    \begin{proof}
        Using \corollaryref{cor:partial-sub-gaussian-ft-lower-bound-intermediate} with $a = 1/2$, we have that
        \begin{align*}
            \int_{\alpha}^{+\infty}\funcsbr{\sin}[\omega \rscalar{z}]\funcsbr{\phi}[\rscalar{z}]d\rscalar{z} \geq& \frac{1}{\sqrt{2\pi}}\cscalar{e}|-\alpha^2 - \omega^2/4|\int_0^{\omega/2}\cscalar{e}|t^2|dt
            \\
            \cgeq[i]& \frac{2\cscalar{e}|-\alpha^2|}{3\omega\sqrt{2\pi}}
            \\
            >& \frac{\cscalar{e}|-\alpha^2|}{4\omega}
        \end{align*}
        where Inequality (i) is obtained by applying \lemmaref{lma:inverse-subgaussian-body-lower-bound} with $a = 1$ and $\beta = \omega/2$.
    \end{proof}
    
\section{Sub-Gaussian Tail Properties}
\label{sec:sub-gaussian-properties}
    % \begin{fact}[Gaussian Tail Bound]\label{fac:gaussian-tail-upper-bound}
    %     Let $\rscalar{z}\sim\gaussian[0][\sigma^2]$, we have $\prob{\rscalar{z}\geq t}\leq e^{-t^2/2\sigma^2}$.
    % \end{fact}
    
    \begin{lemma}[Sub-Gaussian Tail Lower Bound]
    \label{lma:subgaussian-tail-lower-bound}
        For any $\cscalar{a}, \alpha\geq 0$, it holds that $\int_\alpha^{+\infty} \cscalar{e}|-\cscalar{a} t^2|dt\geq\frac{\cscalar{e}|-\cscalar{a}\alpha^2|}{2\cscalar{a}\alpha + \sqrt{2\cscalar{a}}}$.
    \end{lemma}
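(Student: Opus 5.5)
The plan is to reduce the inequality to a Mills-ratio type statement for the standard Gaussian tail and then prove that by a monotonicity argument. The case $a=0$ is degenerate: both sides equal $+\infty$ (the right-hand side under the convention $1/0=+\infty$), so assume $a>0$.

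First substitute $s=\sqrt{2a}\,t$ and set $\beta=\sqrt{2a}\,\alpha\ge 0$. This gives
\begin{equation*}
\int_\alpha^{+\infty}e^{-at^2}dt=\frac{1}{\sqrt{2a}}\int_\beta^{+\infty}e^{-s^2/2}ds .
\end{equation*}
For the right-hand side of the claim, note $2a\alpha+\sqrt{2a}=\sqrt{2a}\,(\beta+1)$ and $e^{-a\alpha^2}=e^{-\beta^2/2}$. The factor $1/\sqrt{2a}$ cancels, so it suffices to show that for all $\beta\ge 0$,
\begin{equation*}
g(\beta):=\int_\beta^{+\infty}e^{-s^2/2}ds-\frac{e^{-\beta^2/2}}{\beta+1}\ \ge\ 0 .
\end{equation*}

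To prove this, I will observe that $g(\beta)\to 0$ as $\beta\to+\infty$, since both terms vanish. It therefore suffices to show that $g$ is non-increasing on $[0,\infty)$. Differentiating,
\begin{equation*}
g'(\beta)=-e^{-\beta^2/2}+e^{-\beta^2/2}\left(\frac{\beta}{\beta+1}+\frac{1}{(\beta+1)^2}\right)=-\frac{\beta\,e^{-\beta^2/2}}{(\beta+1)^2}\le 0 .
\end{equation*}
The simplification uses $\beta(\beta+1)+1-(\beta+1)^2=-\beta$. Hence $g(\beta)\ge\lim_{t\to\infty}g(t)=0$, which gives the claim after undoing the substitution.

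The only delicate point is picking the right comparison function so that the derivative has a clean sign. The choice $e^{-\beta^2/2}/(\beta+1)$ is dictated by the target bound, and the computation above confirms that it works. Everything else is routine calculus.
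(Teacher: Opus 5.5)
Your proof is correct and follows essentially the same route as the paper: both define the difference between the tail integral and the target bound, show that its derivative is nonpositive on $[0,\infty)$, and conclude from its vanishing limit at infinity. Your preliminary rescaling $s=\sqrt{2a}\,t$ only removes $a$ from the computation, which also sidesteps the small slips in the constant and exponent of the paper's derivative, and your explicit treatment of $a=0$ covers a degenerate case the paper silently ignores.
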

    \begin{proof}
        Define $f:\real\rightarrow\real$ as
        \begin{equation*}
            f(u) = \int_u^{+\infty} \cscalar{e}|-\cscalar{a} t^2|dt - \frac{\cscalar{e}|-\cscalar{a} u^2|}{2\cscalar{a} u + \sqrt{2\cscalar{a}}}.
        \end{equation*}
        By the symmetry of $\gaussian[0][1/2\cscalar{a}]$, we have that
        \begin{align*}
            \sqrt{\frac{\cscalar{a}}{\pi}}\int_0^{+\infty} \cscalar{e}|-\cscalar{a} t^2|dt =& \prob<\rscalar{z}\sim\gaussian[0][1/2\cscalar{a}]>{\rscalar{z} \geq 0} 
            \\
            =& \frac{1}{2}
        \end{align*}
        which implies $f(0) = (\sqrt{\pi/2} - 1)/\sqrt{2\cscalar{a}} > 0$. Meanwhile, observe that, for $u\in[0, +\infty)$, it holds that
        \begin{align}
            \nabla_u f(u) =& -e^{-\cscalar{a} u^2}-\sbr{-\frac{2\cscalar{a}}{(2\cscalar{a} u+\sqrt{2\cscalar{a}})^2}e^{-\cscalar{a} u^2} -\frac{2\cscalar{a} u}{2\cscalar{a} u+\sqrt{2\cscalar{a}}}e^{-\cscalar{a} u^2}}
            \\
            =& -\frac{\sqrt{2}\cscalar{\cscalar{a}}|3/2|u}{(2\cscalar{a} u+\sqrt{2\cscalar{a}})^2}e^{-\cscalar{a} u^2/2}\\
            \leq& 0.
        \end{align}
        Eventually, because $\lim_{u\rightarrow +\infty} f(u) = 0$, $f(u)$ is always positive on $u\in[0, +\infty)$, which gives the claimed result.
    \end{proof}
    
    \begin{lemma}[Inverse Sub-Gaussian Tail Lower Bound]
    \label{lma:inverse-subgaussian-body-lower-bound}
        For any $\cscalar{a}, \beta > 0$ such that $\beta\geq \sqrt{1/\cscalar{a}}$, it holds that $\int_0^\beta \cscalar{e}|\cscalar{a}t^2|dt \geq \cscalar{e}|\cscalar{a}\beta^2|/3\cscalar{a}\beta$.
    \end{lemma}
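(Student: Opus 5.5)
The idea is to lower bound the integrand $e^{a t^2}$ only on a final stretch of $[0,\beta]$ next to $\beta$. On that stretch $e^{a t^2}$ is still within a constant factor of $e^{a\beta^2}$. The hypothesis $\beta\geq\sqrt{1/a}$ makes the stretch long enough: its natural length is $1/(a\beta)$, and this is at most $\beta$.

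Concretely, I would set $h = 1/(2a\beta)$. Since $a\beta^2\geq 1$, we have $h\leq \beta/2$, so $[\beta-h,\beta]\subseteq[0,\beta]$. The integrand is nonnegative, so
\begin{equation*}
    \int_0^\beta e^{a t^2}dt \geq \int_{\beta-h}^{\beta} e^{a t^2}dt .
\end{equation*}
For $t\in[\beta-h,\beta]$ we have $t^2\geq \beta^2-2\beta(\beta-t)$, which follows from expanding $(\beta-(\beta-t))^2\geq \beta^2-2\beta(\beta-t)$. Hence $e^{a t^2}\geq e^{a\beta^2}e^{-2a\beta(\beta-t)}$. Substituting $s=\beta-t$, the right-hand integral is at least
\begin{equation*}
    e^{a\beta^2}\int_0^{h}e^{-2a\beta s}ds = \frac{e^{a\beta^2}}{2a\beta}\sbr{1-e^{-2a\beta h}}.
\end{equation*}
Using the full stretch, $h=\beta$ (allowed because $\beta\le\beta$), the bracket becomes $1-e^{-2a\beta^2}\geq 1-e^{-2}\approx 0.865$. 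That gives the bound $0.43\,e^{a\beta^2}/(a\beta)$, which exceeds $e^{a\beta^2}/(3a\beta)$ because $(1-e^{-2})/2>1/3$. This is cleaner than the choice $h=1/(2a\beta)$, which only yields $(1-e^{-1})/2\approx 0.316<1/3$ and is too weak.

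The final plan is therefore simpler: integrate over all of $[0,\beta]$ and use the chord-type bound $t^2\geq \beta^2-2\beta(\beta-t)$, valid for every real $t$ since it is equivalent to $(\beta-t)^2\geq 0$. This gives
\begin{equation*}
    \int_0^\beta e^{a t^2}dt\geq e^{a\beta^2}\int_0^\beta e^{-2a\beta s}ds=\frac{e^{a\beta^2}\sbr{1-e^{-2a\beta^2}}}{2a\beta}.
\end{equation*}
The hypothesis $a\beta^2\geq1$ then yields $1-e^{-2a\beta^2}\geq 1-e^{-2}>2/3$, and the claimed bound $e^{a\beta^2}/(3a\beta)$ follows. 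The only delicate point is checking the numerical constant $1-e^{-2}>2/3$, i.e.\ $e^{2}>3$, which is immediate. No other earlier results are needed.
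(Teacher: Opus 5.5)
Your proof is correct, and it takes a different route from the paper. The paper argues by monotonicity. It sets $g(u)=\int_0^u e^{at^2}\,dt-e^{au^2}/(3au)$ and checks the base point $g(1/\sqrt{a})\ge 0$ using $e^{at^2}\ge 1+at^2$, which gives $4/(3\sqrt{a})>e/(3\sqrt{a})$. It then computes $g'(u)=e^{au^2}\bigl(\frac{1}{3}+\frac{1}{3au^2}\bigr)\ge 0$, so $g(\beta)\ge g(1/\sqrt{a})\ge 0$. There the hypothesis $\beta\ge 1/\sqrt{a}$ is the anchor point of the comparison.

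You instead bound the integrand pointwise with the tangent-line inequality $t^2\ge 2\beta t-\beta^2$, i.e.\ $(t-\beta)^2\ge 0$. This turns the integrand into an exponential you can integrate exactly, and the hypothesis only enters at the end through $1-e^{-2a\beta^2}\ge 1-e^{-2}>2/3$.

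Your route avoids both the base-case check and the derivative computation. It also yields the stronger intermediate bound $e^{a\beta^2}(1-e^{-2a\beta^2})/(2a\beta)$, valid for every $\beta>0$. Its constant $1/2$ is asymptotically sharp, since $\int_0^\beta e^{at^2}\,dt\sim e^{a\beta^2}/(2a\beta)$ as $\beta\to\infty$. Under the hypothesis it gives the constant $(1-e^{-2})/2\approx 0.43$ in place of $1/3$. The paper's differentiate-and-compare template is more mechanical and adapts to any comparison function whose derivative is easy to sign, but for this inequality your argument is shorter and more informative.

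In a write-up you can drop the exploratory choice $h=1/(2a\beta)$, since the final argument on all of $[0,\beta]$ supersedes it.
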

    \begin{proof}
        Define $\funcsbr{g}[u] = \int_0^{u} \cscalar{e}|\cscalar{a}t^2|dt - \cscalar{e}|\cscalar{a}u^2|/3\cscalar{a}u$. Notice that, for any $t\geq 0$, we must have $\cscalar{e}|\cscalar{a}t^2|\geq \cscalar{a}t^2 + 1$. Thus, it holds that
        \begin{align*}
            \int_0^{\sqrt{1/\cscalar{a}}} \cscalar{e}|\cscalar{a}t^2|dt \geq& \int_0^{\sqrt{1/\cscalar{a}}} \cscalar{a}t^2 + 1 dt
            \\
            =& \frac{4}{3\sqrt{\cscalar{a}}}
            \\
            >& \frac{\cscalar{e}}{3\sqrt{\cscalar{a}}}
        \end{align*}
        which implies $\funcsbr{g}[\sqrt{1/a}]\geq 0$. Further, we have that
        \begin{align*}
            \funcsbr{g'}[u] =& \cscalar{e}|u^2| - \sbr*{\frac{2\cscalar{e}|u^2|}{3} - \frac{\cscalar{e}|u^2|}{3\cscalar{a}u^2}}
            \\
            =& \cscalar{e}|u^2|\sbr*{\frac{1}{3} + \frac{1}{3\cscalar{a}u^2}}
            \\
            \geq& 0
        \end{align*}
        which implies the claimed result.
    \end{proof}